\definecolor{cof}{RGB}{219,144,71}
\definecolor{pur}{RGB}{186,146,162}
\definecolor{greeo}{RGB}{91,173,69}
\definecolor{greet}{RGB}{52,111,72}
\newtheorem{thm}{Theorem}
\newtheorem{prop}[thm]{Proposition}
\theoremstyle{definition}
\newtheorem{df}[thm]{Definition}
\newtheorem{coro}[thm]{Corollary}
\newtheorem{lem}[thm]{Lemma}
\theoremstyle{remark}
\newtheorem{exs}{Examples}
\definecolor{theoremlinkcolor}{RGB}{0,0,200} 
\definecolor{citelinkcolor}{RGB}{0,64,0} 
\newtheoremstyle{condition}{}{}{}{}{\bfseries}{.}{.5em}{#1 \thmnote{#3}}
\theoremstyle{condition}
\newtheorem*{rep@theorem}{\rep@title}
\newcommand{\newreptheorem}[2]{%
\newenvironment{rep#1}[1]{%
 \def\rep@title{#2 \ref{##1}}%
 \begin{rep@theorem}}%
 {\end{rep@theorem}}}
\newcommand{\RR}{\mathbb{R}}
\newcommand{\NN}{\mathbb{N}}
\newcommand{\cO}{\mathcal{O}}
\newcommand{\cU}{\mathcal{U}}
\newcommand{\cV}{\mathcal{V}}
\newcommand{\cY}{\mathcal{Y}}
\newcommand{\cT}{\mathcal{T}}
\newcommand{\cA}{\mathcal{A}}
\newcommand{\cL}{\mathcal{L}}
\newcommand{\cB}{\mathcal{B}}
\newcommand{\cP}{\mathcal{P}}
\newcommand{\calC}{{\mathcal C}}
\newcommand{\calM}{{\mathcal M}}
\newcommand{\calW}{{\mathcal W}}
\newcommand{\bfe}{{\mathbf e}}
\newcommand{\One}{\vb{1}}
\newcommand{\Par}{\RR^E \times \RR^B}
\newcommand{\Xspace}{\RR^{N_0\times n}}
\newcommand{\Yspace}{\RR^{N_L\times n}}
\newcommand{\KKT}{Karush-Kuhn-Tucker}
\newcommand{\lb}{\llbracket}
\newcommand{\rb}{\rrbracket}
\newcommand{\rk}{\mathrm{rank}}
\newcommand{\DT}{\mathbf{D}}    
\newcommand{\Xtr}{X_{\text{train}}} 
\newcommand{\Xts}{X_{\text{test}}} 
\newcommand{\Ytr}{Y_{\text{train}}} 
\newcommand{\Yts}{Y_{\text{test}}}
\newcommand{\semitransp}[2][0.6]{{\transparent{#1}#2}}
\newcommand{\mincolor}[1]{\color{blue} #1 \color{black}}
\newcommand{\NAME}{local dimension}
\DeclareMathOperator{\Span}{Span}
\DeclareMathOperator{\Range}{Range}
\DeclareMathOperator{\vect}{vect}
\DeclareMathOperator{\Inter}{Int}
\DeclareMathOperator{\Closure}{Clos}
\DeclareMathOperator{\argmin}{Argmin}
\DeclareMathOperator{\conv}{conv}
\DeclareMathOperator{\minimize}{minimize}
\newcommand{\RK}[1]{\rk{\left( #1 \right)}}
\newcommand{\CLOSURE}[1]{\Closure{\left( #1 \right)}}
\begin{document}

\title{Geometry-induced Regularization in Deep ReLU Neural Networks}

\author{\name Joachim Bona-Pellissier       \email joachim.bona@edu.unige.it \\
       \addr MaLGa Center ; DIBRIS\\
         Università degli Studi di Genova \\
       Genoa, Italy
       \AND
       \name François Malgouyres \email francois.malgouyres@math.univ-toulouse.fr \\
       \addr Institut de Math\'ematiques de Toulouse ; UMR 5219\\
         Universit\'e de Toulouse ; CNRS \\
       UPS, F-31062 Toulouse Cedex 9, France
       \AND
       \name François Bachoc \email francois.bachoc@univ-lille.fr \\
       \addr Laboratoire Paul Painlevé ; UMR 8524\\
         Universit\'e de Lille ; CNRS \\
       F-59000 Lille, France
       }
 
\editor{My editor}

\maketitle
\begin{abstract}
Neural networks with a large number of parameters often do not overfit, owing to implicit regularization that favors \lq good\rq{} networks. Other related and puzzling phenomena include properties of flat minima, saddle-to-saddle dynamics, and neuron alignment.

To investigate these phenomena, we study the local geometry of deep ReLU neural networks. We show that, for a fixed architecture, as the weights vary, the image of a sample $X$ forms a set whose local dimension changes. The parameter space is partitioned into regions where this local dimension remains constant. The local dimension is invariant under the natural symmetries of ReLU networks (i.e., positive rescalings and neuron permutations). 

We establish then that the network’s geometry induces a regularization, with the local dimension serving as a key measure of regularity. Moreover, we relate the local dimension to a new notion of flatness of minima and to saddle-to-saddle dynamics. 
For shallow networks, we also show that the local dimension is connected to the number of linear regions perceived by $X$, offering insight into the effects of regularization. This is further supported by experiments and linked to neuron alignment. 
Our analysis offers, for the first time, a simple and unified geometric explanation that applies to all learning contexts for these phenomena, which are usually studied in isolation.

Finally, we explore the practical computation of the local dimension and present experiments on the MNIST dataset, which highlight geometry-induced regularization in this setting.

\end{abstract}

\begin{keywords}
  Deep learning, implicit regularization, geometry of neural networks, local dimension, functional dimension of neural networks, flat minima, identifiability, saddle to saddle dynamics, neuron alignment.
\end{keywords}


\section{Introduction} \label{section:introduction}

We introduce the context of the present work in Section \ref{local-comp-measures} and provide a first overview of the objects of study in Section \ref{local-dimensions-sec}. Section \ref{main-contributions-sec} outlines the main contributions, while Section \ref{related-works-sec} reviews related work.

\subsection{On the Importance of Local Complexity Measures for Neural Networks}\label{local-comp-measures}

Learning deep neural networks has a huge impact on many practical aspects of our lives. This requires
optimizing a non-convex function, in a large dimensional space. Surprisingly, in many cases, although the number of parameters
defining the neural network exceeds by far the amount of training data, the learned neural network generalizes
and performs well with unseen data \citep{zhang2021understanding}. This is surprising because in this setting the set of global minimizers
is large \citep{cooper2021global,limeasuring} and contains elements that generalize poorly \citep{wu2017towards,neyshabur2017exploring}. In accordance with this empirical observation, the good generalization behavior is not explained by the classical statistical learning theory (e.g., \citealp{AB09-NeuralNetworkLearning,grohs22-mathematicalAspectsDeepLearning}) that considers the worst possible parameters in the parameter set. For instance, the Vapnik-Chervonenkis dimension of feedforward neural networks of depth $L$, with $W$ parameters, with the ReLU activation function is\footnote[1]{The notation $\widetilde{O}(\cdot)$ ignores logarithmic factors.} $\widetilde{O}(L W)$ \citep{bartlett2019nearly,bartlett1998almost,harvey2017nearly,maass1994neural}, leading to an upper bound on the generalization gap of order\textcolor{theoremlinkcolor}{\footnotemark[1]} $\widetilde{O}(\sqrt{\frac{L W}{n}})$, where $n$ is the sample size. This worst-case analysis is not accurate enough to explain the success of deep learning, when $W\gg n$. 

This leads to the conclusion that a global analysis, that applies to all global minima and the worst possible neural network that fits the data, will not permit to explain the success of deep learning. A local analysis is needed.

Despite tremendous research efforts in this direction (see, e.g., \citealp{grohs22-mathematicalAspectsDeepLearning} and references below) a complete explanation for the good generalization behavior in deep learning is still lacking. 
The attempts of explanation suggest that optimization algorithms and notably stochastic algorithms discover \lq good minima\rq. These are minima having special properties that authors would like to model using local complexity measures that are pivotal in the mathematical explanation. Authors aim to establish that stochastic algorithms prioritize outputs (parameterizations at convergence) with low local complexity  and to demonstrate that low local complexity explains the good generalization to unseen data  ~\citep{bartlett2020benign,chaudhari2019entropy,camuto2021fractal,keskar2017on}. This is sometimes also expressed as some form of implicit regularization \citep{imaizumi2022generalization,belkin2021fit,neyshabur2017exploring}, or margin maximization for classification tasks \citep{Lyu2020Gradient,chizat2020implicit}.

In this spirit, many authors contend that the excellent generalization behavior can be attributed to the fulfillment of conditions regarding the flatness of the landscape in the proximity of the algorithm's output \citep{haddouche2025pac,keskar2017on,foret2021sharpnessaware,cha2021swad,hochreiter1997flat}. This is known however not to fully capture the good generalization phenomenon \citep{dinh2017sharp}. Other studies explain the good generalization performances by constraints involving norms of the neural network parameters  \citep{bartlett2020benign,neyshabur2015norm,golowich2018size,bartlett2017spectrally}. Despite being supported by partial arguments, none of the aforementioned local complexity measures fully explain the experimentally observed behaviors.

From a different but related perspective on implicit regularization, the saddle-to-saddle dynamics of optimization trajectories have been studied in \citet{jacot2021saddle, boursier2022gradient, abbe2023sgd, pesme2023saddle}. In addition, neuron alignment has been observed and analyzed in \citet{boursier2024early, boursier2024simplicitybiasoptimizationthreshold}.

The lack of a unifying principle for deep ReLU networks stands in sharp contrast to the case of linear networks, for which implicit regularization is better understood. The consensus is that implicit regularization constrains the rank of the prediction matrix, the matrix obtained when multiplying all the factors of the linear network \citep{arora2019implicit,razin2020implicit,saxe2019mathematical,gidel2019implicit,gissin2019implicit,achour2024loss}.

\subsection{Local Dimensions of the Image and Pre-image Sets}\label{local-dimensions-sec}

Denoting $f_{\theta}(X)$ the prediction made by the neural network of parameter $\theta$, for an input sample $X = (x^{(i)})_{i \in \lb 1,n \rb} \in \RR^{N_0 \times n}$, where $x^{(i)}$ is the $i$-th column of $X$ and the $i$-th input of the sample, 
this article investigates local geometrical complexity measures of deep ReLU neural networks, recently introduced by \cite{grigsby2022functional}. The considered complexity measure relates to the \underline{local} geometry of the {\it image set} as defined by 
\[\{f_\theta(X)~|~ \theta\mbox{ varies}\}
\]
and of the {\it pre-image set} 
\[\{\theta'~|~ f_{\theta'}(X) = f_{\theta}(X)\}.
\] 
More precisely, when the differential $\DT f_\theta(X)$ of $\theta \longmapsto f_\theta(X)$ is appropriately defined, the concept of complexity, called\footnote{It is called the {\it batch functional dimension} by \citet{grigsby2022functional}.} {\em \NAME{}}, is the rank of the aforementioned differential, denoted 
\[\RK{\DT f_\theta(X)}.
\]
 It is locally, in the vicinity of $f_\theta(X)$, the dimension of the image set and locally, in the vicinity of $\theta$, the co-dimension of the pre-image set, see Corollary \ref{coro-optim}.
Notice that, before \citeauthor{grigsby2022functional}, the \NAME~already appeared in an identifiability condition introduced by \citet{bona2022local}.
 
The analysis using the local dimension has the potential to explain implicit regularization. To explain this point, the simplest way is to look at the counterpart of $\{f_\theta(X)~|~ \theta\mbox{ varies}\}$ for a well-understood problem: $\ell^1$ regularization.
 
 \paragraph{Analogy with $\ell^1$ regularization}
 Given $A\in\RR^{n\times p}$ and $y\in\RR^n$, we write the  $\ell^1$ regularization in the form
\begin{equation}\label{Pb_l1}
        \left\{\begin{array}{l}
		 \argmin_{x} ~\|Ax- y\|^2 \\
		 \|x\|_1\leq \tau,
		 \end{array}\right.
\end{equation}
for a fixed parameter $\tau >0$.

As is well known, the analogue of $\{f_\theta(X)~|~ \theta\mbox{ varies}\}$ for this problem is then the polytope 
\[\{Ax~|~ \|x\|_1\leq \tau\} = \tau \conv(A_1,-A_1, \cdots, A_p, - A_p), 
\]
where $A_i$ denotes the $i$-th column of $A$ and $\conv$ denotes the convex hull (see Figure \ref{fig1}). This polytope is made up of facets of different dimensions. They are organized hierarchically, with smaller-dimensional facets on the boundary of larger-dimensional facets, and so on. As can be seen in Figure \ref{fig1}, the shape of the polytope will influence the trajectory of the iterates of an optimization algorithm. They will move from facet to facet until reaching a smaller-dimensional facet, and a sparse solution $x^*$.  
 
\begin{figure}[ht]
  	\begin{center}
    	\begin{tikzpicture}[scale=0.6,decoration=snake]  
        \node[] at (-2,2) {$y$}; \node[] at (-1.5,2) {$\times$};
	    \draw (-1.5,2)  circle (1.5);
        \node[] at (-1.2,1) {$Ax^*$}; \node[] at (-0.7,0.7) {$\boldsymbol{\times}$};
	
 	\coordinate (A1) at (-2,0);\node[] at (-2.4,0) {$A_1$};
	\coordinate (A2) at (3,1);\node[] at (3.5,1.2) {$A_2$};
	\coordinate (A3) at (7,0);\node[] at (7.5,0.2) {$-A_1$};
	\coordinate (A4) at (2,-1);
	\coordinate (B1) at (2.5,2.5);\node[] at (2.5,2.8) {$A_3$};
	\coordinate (B2) at (2.5,-2.5);\node[] at (2.5,-2.8) {$-A_3$};

	\begin{scope}[thick,dashed,,opacity=0.6]
	\draw (A1) -- (A2) -- (A3);
	\draw (B1) -- (A2) -- (B2);
	\end{scope}
	\draw[fill=cof,opacity=0.6] (A1) -- (A4) -- (B1);
	\draw[fill=pur,opacity=0.6] (A1) -- (A4) -- (B2);
	\draw[fill=greeo,opacity=0.6] (A3) -- (A4) -- (B1);
	\draw[fill=greet,opacity=0.6] (A3) -- (A4) -- (B2);
	\draw (B1) -- (A1) -- (B2) -- (A3) --cycle;
	\node[] at (1.5,-1.2) {$-A_2$};
 	\end{tikzpicture}
 	\caption{\label{fig1} For $\ell^1$ regularization, the analogue of $\{f_\theta(X)~|~ \theta \mbox{ varies}\}$ is the polytope $\{Ax~|~ \|x\|_1 \leq \tau\} = \tau \conv(A_1, -A_1, \cdots, A_p, -A_p)$. The sparse vector $x^*$ is the solution of \eqref{Pb_l1}, and its image $Ax^*$ lies on a low-dimensional facet of the polytope.}
	\end{center}
\end{figure}

In the above analogy, the sparsity for $\ell^1$ regularization is the analogue of the local dimension $\RK{\DT f_\theta(X)}$ for deep learning with ReLU networks. The sparsity is key to explaining the performance of methods like the LASSO in the case $p > n$, \cite{meinshausen2006high,yuan2007non}. We will see in this article that the local dimension is a regularity criterion induced by the geometry for deep ReLU networks.

\paragraph{Remark:} Studying $\{f_\theta(X) \mid \theta \text{ varies}\}$ and its local dimension removes the burden of dealing with a specific learning objective or algorithm. This point is crucial, since the advantages of neural networks have been widely demonstrated across diverse applications, objectives, data types, and optimization strategies—suggesting that the performance of deep learning is inherent to the properties of the networks themselves.
This also ensures that the analysis remains applicable to any learning setting.
 
\subsection{Main Contributions and Organization of the Paper}\label{main-contributions-sec}
\begin{itemize}
    \item In Theorem \ref{theorem:constant:rank} (Section \ref{section:rank:properties}), up to a negligible set, we decompose  the parameter space as a finite union of open sets. On each set, the {\it \NAME}~
    $$\RK{\DT f_\theta(X)}$$ 
    is well defined and constant. The construction of the sets shows that almost everywhere, the activation pattern (defined in Section \ref{ReLU networks-sec-main}) determines the \NAME. We also establish in Proposition \ref{prop:rank:invariant:rescaling} (Section \ref{section:rank:properties}) that the \NAME~is invariant under the symmetries of a ReLU neural network's parameterization, positive rescaling and neuron permutation, as defined in Section \ref{ReLU networks-sec-main}. We also provide examples showing that the local dimension actually varies in Sections~\ref{section:rank:properties} and \ref{section:geometric:interpretation}.
    
    \item In Section \ref{section:geometric:interpretation}, we illustrate the consequences of the statements from Section \ref{section:rank:properties} in the context of learning a deep ReLU network. In particular, \Cref{coro-optim} states that $\RK{\DT f_\theta(X)}$ corresponds to the \NAME{} of the image set and the co-dimension of the pre-image set. This is illustrated by an example in \Cref{example:image:preimage:dimdeux}. The example is low-dimensional, so the image set can be explicitly computed and visualized in Figure \ref{dessin-exemple}. We then present the geometry-induced regularization statements in \Cref{regul-implicit-sec-1} and \Cref{regul-implicit-sec-2}, where the \NAME{} emerges as the regularity criterion. Next, we relate the regularity of the network, measured by local dimension, to a new notion of flatness of minima in \Cref{subsection:minima:flatness}. Finally, in \Cref{implicit_regul-sec}, we illustrate both the geometry-induced regularization and flat minima results, and in \Cref{saddle-to-saddle-sec}, we show how local geometric changes in neural networks can lead to saddle-to-saddle dynamics.

    \item In \Cref{shallow-sec}, we examine how geometry-induced regularization affects the learned network in the shallow setting (i.e., a one-hidden-layer ReLU network). In \Cref{shallow-case-thm}, we establish that the local dimension is closely related to the number of linear pieces \lq perceived\rq{} by the sample $X$. This suggests that, in the shallow case, geometry-induced regularization favors large linear regions containing many examples. Finally, in \Cref{recovery-CPL-sec}, we demonstrate through experiments that this phenomenon indeed occurs in practice.

    \item In Section \ref{sec:numerical:computations} we provide the details on the practical computation of $\RK{\DT f_\theta(X)}$, for given $X$ and $\theta$.

    \item Finally, in Section \ref{sec:experiments}, we present experiments demonstrating that geometry-induced regularization arises when a deep ReLU network learns the MNIST dataset. Specifically, in \Cref{expe-w_varies}, we analyze the behavior of the \NAME{} as the network width increases, and in Section \ref{expe-rank_diminu}, we describe its behavior during the learning phase. 
    The results also show that the regularity observed on the training sample is \lq transferred\rq{} to the regularity computed on a large test sample.    
\end{itemize}

All the proofs are in the appendices, and the codes are available at \citep{code_calcul_rang}.

\subsection{Related Works}\label{related-works-sec} To the best of our knowledge, the local dimension of deep ReLU neural networks has only been explicitly studied by \citet{grigsby2022functional,grigsby2023hidden}. The article \citet{grigsby2022functional} is very rich and it is difficult to summarize it in a few lines\footnote{A weakness of it is that it considers neural networks whose last layer undergoes a ReLU activation.}. The authors establish sufficient conditions guaranteeing that $\theta \longmapsto f_\theta(X)$ is differentiable. The conditions are comparable to but weaker than the one presented here. The benefit of the difference is that our conditions guarantee the value of the \NAME{}, allowing us to make the connection between the activation patterns and the \NAME{}.
Furthermore, \cite{grigsby2022functional} define and provide examples to illustrate that the \NAME~and a related notion called full functional dimension vary in the parameter space. They also prove that for all narrowing architectures\footnote{Narrowing architectures are such that widths decrease.}, the {\it functional dimension} as defined by $\max_\theta \max_X \RK{\DT f_\theta(X)}$ reaches its upper-bound $W-W'$ where $W'$ is the number of positive rescalings. They finish their article with several examples showing that the global structure of the  {\it pre-image set} $\{\theta'~|~ f_{\theta'}(X) = f_{\theta}(X)\}$ can vary in several regards. \cite{grigsby2023hidden} prove that when the input size lower-bounds the other widths, there exist parameters for which the \NAME~reaches the upper-bound $W-W'$. They also numerically estimate, for several neural network architectures, the size of the sets of parameters that reach this upper bound. 

Geometric properties of the pre-image set of a global minimizer have been studied by \citet{cooper2021global}. Topological properties of a variant of the image set included in function spaces, $\{f_{\theta} ~|~ \theta\mbox{ varies}\}$, have been established by \citet{petersen2021topological}.

There are many articles devoted to the identifiability of neural networks \citep{petzka2020symmetries,carlini2020cryptanalytic,pmlr-v119-rolnick20a,stock2022embedding,bona2022local,bona2023parameter}. For a given $\theta$, they study conditions guaranteeing that the pre-image set\footnote{In these articles $X$ sometimes contains infinitely many examples, in which case we let  $f_\theta(X)$ denote the function $f_\theta$ restricted to $X$.} of $f_\theta(X)$ coincides with the set obtained when considering all the positive rescalings of $\theta$. Of particular interest in our context, \citet{bona2022local} shows that the condition $\RK{\DT f_\theta(X)} = W - W'$ is, up to negligible sets, sufficient to guarantee local identifiability. The same condition also appears in a necessary condition for local identifiability.

Other local complexity measures, not related to the geometry of neural networks, have been considered. There are complexity measures using the number of achievable activation patterns \cite{montufar2014number,raghu2017expressive,hanin2019complexity}. Those based on norms and the flatness are already mentioned in Section \ref{local-comp-measures}.

The objects studied in this article are also related to the properties of the landscape of the empirical risk, which have been investigated in the literature. Studies of these properties for instance permit to guarantee that first-order algorithms find a global minimizer  \citep{soudry2016no,nguyen2017loss,safran2021effects,du2019gradient},
focus on the shape at the bottom of the empirical risk \citep{ghorbani2019investigation,sagun2016eigenvalues,gur2018gradient} and (again) on flatness.

The local properties studied in the present article also have an impact on the iterates trajectory of minimization algorithms and therefore the biases induced by the optimization as studied in \citet{bartlett2020benign,camuto2021fractal,keskar2017on,Lyu2020Gradient}.

Finally, \citet{pmlr-v80-arora18b} and \citet{Suzuki2020Compression} establish generalization bounds of compressed neural networks. This might provide hints for the construction of upper-bounds of the generalization gap based on the local geometric complexity measures considered in this article.

\section{ReLU Networks and Notations}\label{ReLU networks-sec-main}

This section is devoted to introducing the formalism and notations that we use throughout the article. In Section \ref{ReLU-architecture-sec}, we present the graph formalism that we use for neural networks, and we specify the architectures that we study, and in Section \ref{ReLU-network-prediction-sec}, we construct the prediction function implemented by a network, and we define the differential $\DT f_\theta(X)$ that is central in this work. In Section \ref{positive-rescaling-permutation-sec}, we recall the two classical symmetries of ReLU networks, namely positive rescalings and permutations. Finally, we introduce the activation patterns in Section \ref{activation-patterns-sec} and some additional notations in Section \ref{further-notation-sec}.

\subsection{ReLU Network Architecture}\label{ReLU-architecture-sec}

Let us introduce our notations for deep fully-connected ReLU neural networks. In this paper, a network is a graph $(E,V)$ of the following form. 
\begin{itemize}
\item $V$ is a set of neurons, which is divided into $L+1$ layers, with $L \geq 2$: $V = \bigcup_{\ell = 0}^L V_{\ell}$. 
The layer $V_0$ is the input layer, $V_L$ is the output layer and the layers $V_{\ell}$ with $1 \leq \ell \leq L-1$ are the hidden layers. Using the notation $|C|$ for the cardinality of a finite set $C$, we denote, for all\footnote{Throughout the paper, for $a,b \in \NN$, $a \leq b$, $\lb a,b \rb$ is the set of consecutive integers $\{ 
a,a+1, \ldots , b \}$.} $\ell \in \lb 0, L \rb$, $N_{\ell} = |V_{\ell} |$ the size of the layer $V_{\ell}$.
\item $E$ is the set of all oriented edges $v' \rightarrow v$ between neurons in consecutive layers, that is
\[E = \{ v' \rightarrow v ~|~\ v' \in V_{\ell-1}, v \in V_{\ell}, \text{for } \ell \in \lb 1 , L \rb \}.\]
\end{itemize}
A network is parameterized by weights and biases, gathered in its parameterization $\theta$, with
\[\theta = \left( (w_{v' \rightarrow v})_{v' \rightarrow v \in E}, (b_v)_{v \in B}\right) \quad \in \Par ,\]
where $B = \bigcup_{\ell=1}^L V_{\ell}$. We let $W=|E| + |B|$.

The activation function used in the hidden layers, and denoted $\sigma$, is always ReLU: for any $p \in \NN^*$ and any vector $x = (x_1, \dots , x_p)^T \in \mathbb{R}^p$, we set $\sigma(x) = (\max(x_1,0), \dots , \max(x_p,0))^T$. Here and in the sequel, the symbol $\NN^*$ denotes the set of natural numbers without $0$. We allow the use of a specific activation $\sigma_L: \RR^{N_L} \longrightarrow \RR^{N_L}$ for the output layer, which we only require to be analytic. For instance, $\sigma_L$ can be the identity, as is generally the case in regression, or the softmax, as is generally the case in classification. The ReLU neural network architectures considered in this article are fully characterized by a triplet $(E, V , \sigma_L)$.

\subsection{ReLU Network Prediction}\label{ReLU-network-prediction-sec}

For a given $\theta\in\Par$, we define recursively $f_\theta^{\ell} : \RR^{N_0} \longrightarrow \RR^{N_{\ell}}$ , for $\ell \in \lb 0, L \rb$ and $x\in\RR^{N_0}$, by
\begin{equation}\label{deff_l}
\left\{\begin{array}{ll}
(f_\theta^0(x))_v = x_v  &\mbox{for } v\in V_0,\\
 (f_\theta^{\ell}(x))_v = \sigma \left( \sum_{v' \in V_{\ell-1}} w_{v'\rightarrow v} (f_\theta^{\ell-1}(x))_{v'} ~+ b_v \right) & \mbox{for } v\in V_\ell \mbox{, when } \ell \in \lb 1, L-1 \rb, \\
(y_\theta^{L}(x))_v = \sum_{v' \in V_{L-1}} w_{v'\rightarrow v} (f_\theta^{L-1}(x))_{v'} ~+ b_v & \mbox{for } v\in V_L, \\
f_\theta^{L}(x) = \sigma_L(y_\theta^{L}(x)),&
\end{array}
\right.
\end{equation}
where the definition of $f_\theta^{L}(x)$ takes into account that $\sigma^L: \RR^{N_L} \longrightarrow \RR^{N_L}$ may require the whole pre-activation output. This is for instance the case for the softmax activation function.
We define the function $f_\theta : \RR^{N_0} \longrightarrow \RR^{N_L}$ implemented by the network of parameter $\theta$ as $f_\theta = f_\theta^L$. We call it the prediction.

For all $n \in \NN^*$, we concatenate a set of $n$ inputs in a matrix $X = (x^{(i)})_{i \in \lb 1,n \rb} \in \RR^{N_0 \times n}$, where $x^{(i)}$ is the $i$-th column of $X$ and the $i$-th input of the network. 
We also allow to write $f_{\theta}$ as operating on an input set $X$. In this case, we write
$f_\theta: \Xspace\longrightarrow \Yspace$ and we define $f_\theta(X)$ as the matrix whose columns correspond to the outputs $(f_\theta(x^{(i)}))_{i\in\lb 1, n \rb}$.

Among other quantities, we study in this article the set
\[\{f_\theta(X) ~|~ \theta \in \Par\},
\]
for $X\in\Xspace$ fixed, which we call an \emph{image set}. When it is differentiable at $\theta$, we denote by $\DT f_\theta(X)$ the differential, at the point $\theta$, of the mapping
\begin{eqnarray*}
 \RR^E\times \RR^B & \longrightarrow & \Yspace \\
\theta' & \longmapsto & f_{\theta'}(X).
\end{eqnarray*}
We recall that the differential at $\theta$ is the linear map 
\begin{equation}\label{def-DT}
\DT f_\theta(X) :  \RR^E\times \RR^B\longrightarrow \Yspace
\end{equation}
such that, for $\theta'\in \Par$ in a neighborhood of zero,
\begin{equation}\label{diff-def-eq} 
f_{\theta + \theta'}(X) = f_{\theta}(X) + \DT f_\theta(X) (\theta') + o(\|\theta'\|).
\end{equation}

\subsection{Positive Rescaling and Neuron Permutations Symmetries}\label{positive-rescaling-permutation-sec}

Consider two parameters $\theta , \widetilde{\theta} \in \RR^{E \times B}$, with $ \widetilde{\theta} = \left( (\widetilde{w}_{v' \to v})_{v' \to v \in E}, (\widetilde{b}_v)_{v \in B}\right)$.  We say that $\theta$ and $\widetilde{\theta}$ are equivalent modulo positive rescaling, and we write $\theta \sim_s \widetilde{\theta}$, when the following holds. 
There are $(\lambda_v)_{v \in V_0 \cup \cdots \cup V_L } \in (0,\infty)^{N_0 + \cdots + N_L}$ such that $\lambda_v = 1$ for $v \in V_0 \cup V_{L}$ and for $\ell  \in \lb 1,L \rb$, $v' \in V_{\ell-1}$, $v \in V_{\ell}$, 
\begin{align} 
w_{v' \to v}
& =
\frac{\lambda_{v}}{\lambda_{v'}}
\widetilde{w}_{v' \to v}, 
\label{eq:scaling:un}
\\ 
b_{v}
& =
\lambda_{v} \widetilde{b}_{v}.
\label{eq:scaling:deux}
\end{align}
Then it is a well-known property of ReLU networks
\citep{bona2023parameter, 
bona2022local,neyshabur2015path,stock:tel-03208517, stock2022embedding, yi2019positively} that if $\theta \sim_s \widetilde{\theta}$ then $f_{\theta} = f_{\widetilde{\theta}}$, that is, positive rescalings are a symmetry of the network parameterization. 

Another classic symmetry consists in swapping neurons, and their corresponding weights, within each hidden layer. If $\widetilde \theta$ stands for the permuted weights, we denote the corresponding equivalence relation $\widetilde \theta \sim_p \theta$. Again, when $\widetilde \theta \sim_p \theta$, we have $f_{\widetilde \theta} = f_\theta$. 

We say that  $\widetilde \theta \sim \theta$ if there exists $\theta'$ such that $\widetilde \theta \sim_p \theta'$ and $\theta' \sim_s \theta$. Again, if $\widetilde \theta \sim \theta$, then $f_{\theta} = f_{\widetilde{\theta}}$.

\subsection{Activation Patterns}\label{activation-patterns-sec}

For  any $\ell \in \lb 1 , L-1 \rb$, $v \in V_{\ell}$, $\theta \in \Par$ and $x \in \RR^{N_0}$, we define the activation indicator at neuron $v$ by
\[a_v(x,\theta) = \begin{cases} 
1 & \text{if } \sum_{v' \in V_{\ell-1}} w_{v'\rightarrow v} (f_\theta^{\ell-1}(x))_{v'} + b_v  \geq 0 \\ 
0 & \text{otherwise.}\end{cases}
\]
Using \eqref{deff_l}, we have for the ReLU activation function $\sigma$, any $\ell \in \lb 1 , L-1 \rb$ and $v \in V_{\ell}$,
\begin{equation}\label{a=relu}
    (f_\theta^{\ell}(x))_v  = a_v(x,\theta) \Bigl( \sum_{v' \in V_{\ell-1}} w_{v'\rightarrow v} (f_\theta^{\ell-1}(x))_{v'} + b_v \Bigr).
\end{equation}
We then define the {\it activation pattern} as the mapping
\begin{eqnarray*}
    a : \RR^{N_0} \times \left( \RR^E \times \RR^B \right) & \longrightarrow & \{0,1\}^{N_1+\cdots+N_{L-1}}\\
    (x, \theta) & \longmapsto & (a_v(x, \theta))_{v \in V_1 \cup \cdots \cup V_{L-1}}.
\end{eqnarray*}
For $X \in \RR^{N_0 \times n}$ as considered above, we let $a(X , \theta) \in
\{ 0 ,1 \}^{ (N_1+\cdots+N_{L-1}) \times n  }$ be defined by, for $i \in \lb 1,n \rb$ and $v \in V_1 \cup \cdots \cup V_{L-1}$, $a_{v,i}(X,\theta) = a_v(x^{(i)} , \theta)$. By extension, we also call {\it activation patterns} the elements of $\{0,1\}^{N_1+\cdots+N_{L-1}}$ or $\{ 0 ,1 \}^{ (N_1+\cdots+N_{L-1}) \times n}$.

\subsection{Further Notation}\label{further-notation-sec} 
 We use  the notation $\RK{\cdot}$ for the rank of linear maps and matrices. The determinant of a square matrix $M$ is denoted $\det(M)$.
 If the matrix $M\in\RR^{a \times b}$ for $a,b \in \NN^*$, then for $i \in \lb 1, a \rb$, we write $M_{i,:}$ for the row $i$ of $M$.

All considered vector spaces are finite dimensional and they are endowed with the standard Euclidean topology. For a subset $C\subset T$ of a topological space, we denote $\Inter(C)$ the topological interior of $C$, $\partial C$ its boundary and $C^c = T\setminus C$ the complement of $C$ (the ambient topological space $T$ should always be clear from context). For all Euclidean space $V$, all element $x\in V$, and all real number $r\geq 0$, the open Euclidean ball of radius $r$ centered at $x$ is denoted by $B(x,r)$.

\section{Rank Properties} \label{section:rank:properties}

In this section, we give the key technical theorem, namely Theorem \ref{theorem:constant:rank}, on which the remaining of the article relies. We then illustrate the theorem with examples showing the diversity of situations that might occur. In the theorem, we study $\theta \longmapsto f_{\theta}(X)$ over $\Par$, for $X$ fixed. We must first introduce a few definitions.

For $n \in \NN^*$ and $X \in \RR^{N_0 \times n}$, the function $\theta \longmapsto a(X, \theta)$ takes a finite number of values  $\Delta^X_1, \dots, \Delta^X_{q^X}$, and we define, for $j \in \lb 1,q^X \rb$,
\begin{equation}\label{def:u:j:tilde}
    \widetilde{\cU}^X_j = \Inter \{\theta \in \Par \ | \  a(X, \theta) = \Delta^X_j \}.
\end{equation}
We keep only the nonempty such sets, and if $p_X \in \lb 1 , q^X \rb$ is the number of such sets, we can assume up to a re-ordering that we keep $\widetilde{\cU}^X_1, \dots, \widetilde{\cU}^X_{p_X}$. As we will establish in \Cref{theorem:constant:rank}, last item, for all $j \in \lb 1, p_X \rb$, the function $\theta \longmapsto f_\theta(X)$ is differentiable at $\theta$ when $\theta \in \widetilde{\cU}^X_j$.
We can therefore define, for $n \in \NN^*$, $X \in \RR^{N_0 \times n}$ and $j \in \lb 1, p_X \rb$, 
\begin{equation}
    r^X_j = \max_{\theta \in \widetilde{\cU}^X_j} \RK{\DT f_\theta(X)}.
\end{equation}
We finally define the subset of $\widetilde{\cU}^X_j$ on which the rank is maximal. For $n \in \NN^*$, $X \in \RR^{N_0 \times n}$ and $j \in \lb 1, p_X \rb$, 
\begin{equation}\label{defU}
    \cU^X_j = \{ \theta  \in \widetilde{\cU}^X_j \ | \ \RK{\DT f_\theta(X)} = r^X_j \}.
\end{equation}

In the following theorem, we provide properties of the sets $\cU^X_1 , \ldots , \cU^X_{p_X} $.
Note that Items 1, 2 and 3 hold trivially by definition, while Items 4, 5 and 6 require detailed proofs.

\begin{thm} \label{theorem:constant:rank}
Consider any deep fully-connected ReLU network architecture $(E,V, \sigma_L)$.

 For all $n \in \NN^*$ and all $X \in \Xspace$, by definition,
     \begin{enumerate}
        \item the sets $\cU^X_1 , \ldots , \cU^X_{p_X} $ are non-empty and disjoint;
         \item for all $j \in \lb 1,p_X \rb$, the function $\theta \longmapsto a(X,\theta)$ is constant on each $\widetilde{\cU}^X_j$ and takes $p_X$ distinct values on $\cup_{j=1}^{p_X} \widetilde{\cU}^X_j$;
        \item for all $j \in \lb 1,p_X \rb$, $\theta \longmapsto \RK{ \DT f_{\theta}(X) }$ is constant on $\cU^X_j$ and equal to $r_j^X$.
     \end{enumerate}
     Furthermore,
       \begin{enumerate}
        \setcounter{enumi}{3}
        \item  the sets $\cU^X_1 , \ldots , \cU^X_{p_X} $ are open;
         \item  both $\left( \cup_{j=1}^{p_X} \widetilde{\cU}^X_j \right)^c$ and $\left( \cup_{j=1}^{p_X} \cU^X_j \right)^c$ are closed with Lebesgue measure zero;
         \item for all $j \in \lb 1,p_X \rb$, the map $\theta \longmapsto f_{\theta}(X)$ is polynomial of degree $L$ on $\widetilde{\cU}^X_j$, when $\sigma_L=Id$, and it is analytic otherwise.
     \end{enumerate}
\end{thm}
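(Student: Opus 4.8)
Items 1--3 are immediate from the definitions \eqref{def:u:j:tilde}--\eqref{defU}, so the content lies in Items 4--6, and I would establish Item 6 first since the piecewise-polynomial structure it provides drives the other two. Fix $j \in \lb 1, p_X \rb$. On $\widetilde{\cU}^X_j$ the activation pattern is the constant $\Delta^X_j$, so every indicator $a_v(x^{(i)},\theta)$ in \eqref{a=relu} may be replaced by the corresponding fixed entry of $\Delta^X_j$. Freezing these indicators turns the recursion \eqref{deff_l}--\eqref{a=relu} into one in which, at each layer, the coordinates are affine combinations of the previous layer's coordinates with that layer's weights and biases as coefficients. I would prove by induction on $\ell$ that $\theta \longmapsto (f^{\ell}_{\theta}(x^{(i)}))_v$ is, on $\widetilde{\cU}^X_j$, the restriction of a \emph{globally defined} polynomial of degree $\ell$ on $\RR^E \times \RR^B$: the base case $\ell=0$ is constant, and the inductive step multiplies a degree-$(\ell-1)$ polynomial by a degree-$1$ weight and adds a degree-$1$ bias, yielding degree $\ell$. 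At the output this gives that each entry of the pre-activation $y^L_\theta(x^{(i)})$ is a degree-$L$ polynomial $P_j$ on $\RR^E\times\RR^B$ agreeing with $f_\theta(X)$ on $\widetilde{\cU}^X_j$ when $\sigma_L=\Id$, while $f_\theta(X)=\sigma_L(y^L_\theta(\cdot))$ is analytic when $\sigma_L$ is analytic, being the composition of an analytic map with a polynomial. This proves Item 6 and records the crucial fact that, on the open set $\widetilde{\cU}^X_j$, $\DT f_\theta(X)$ coincides with the differential of a globally defined polynomial (resp. analytic) map.

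For Item 4 I would invoke lower semicontinuity of the rank. Since $r^X_j$ is the maximum of $\RK{\DT f_\theta(X)}$ over $\widetilde{\cU}^X_j$, we have $\cU^X_j = \{\theta \in \widetilde{\cU}^X_j : \RK{\DT f_\theta(X)} \geq r^X_j\}$. The entries of $\DT f_\theta(X)$ are, on $\widetilde{\cU}^X_j$, those of $\DT P_j$, hence continuous, so each $r^X_j \times r^X_j$ minor is continuous and $\{\RK \geq r^X_j\}$ is a union of open sets on which some such minor is nonzero. Intersecting with the open set $\widetilde{\cU}^X_j$ shows $\cU^X_j$ is open.

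For Item 5, closedness of both complements is immediate, since $\widetilde{\cU}^X_j$ is open by definition and $\cU^X_j$ is open by Item 4, so both unions are open. For the measure of $\left(\cup_j \widetilde{\cU}^X_j\right)^c$, I would observe that $\theta$ lies in $\cup_j \widetilde{\cU}^X_j$ exactly when $\theta \longmapsto a(X,\theta)$ is locally constant at $\theta$; as each pre-activation $z_{v,i}(\theta) = \sum_{v' \in V_{\ell-1}} w_{v'\to v}(f^{\ell-1}_\theta(x^{(i)}))_{v'} + b_v$ is continuous in $\theta$, if all of them are nonzero at $\theta_0$ they stay nonzero nearby and $a(X,\cdot)$ is locally constant there. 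Hence $\left(\cup_j \widetilde{\cU}^X_j\right)^c \subseteq \bigcup_{v,i}\{z_{v,i}=0\}$. Each $\{z_{v,i}=0\}$ is null: $f^{\ell-1}_\theta$ does not depend on the bias $b_v$ of the layer-$\ell$ neuron $v$, so for fixed values of the remaining coordinates $b_v \longmapsto z_{v,i}$ is affine with unit slope and vanishes at a single point, and Fubini gives Lebesgue measure zero. Finally $\left(\cup_j \cU^X_j\right)^c = \left(\cup_j \widetilde{\cU}^X_j\right)^c \cup \bigcup_j \{\theta \in \widetilde{\cU}^X_j : \RK{\DT f_\theta(X)} < r^X_j\}$, and each set in the second union is contained in the common zero set of the $r^X_j \times r^X_j$ minors of $\DT P_j$.

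The step I expect to be the main obstacle is this last null-set claim, because $\widetilde{\cU}^X_j$ need not be connected, so one cannot directly argue that a non-identically-zero analytic function vanishes on a null subset of the region. The remedy---and precisely why Item 6 is phrased with a \emph{globally} defined map---is to choose $\theta^\star \in \widetilde{\cU}^X_j$ attaining $r^X_j$ and a minor $\mu$ of $\DT P_j$ nonzero at $\theta^\star$: then $\mu$ is a non-identically-zero polynomial (resp. real-analytic function) on the connected space $\RR^E \times \RR^B$, so $\{\mu=0\}$ is globally null and $\{\theta \in \widetilde{\cU}^X_j : \RK < r^X_j\} \subseteq \{\mu=0\}$ is null irrespective of the topology of $\widetilde{\cU}^X_j$. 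A secondary point to verify carefully is the Fubini argument, where one must confirm that $f^{\ell-1}_\theta$ is genuinely independent of $b_v$ so that the slope in $b_v$ is exactly $1$.
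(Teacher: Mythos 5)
Your proof is correct and follows essentially the same route as the paper's: freeze the activation pattern on each $\widetilde{\cU}^X_j$ and induct over layers to obtain a globally defined polynomial (or analytic) extension, use lower semicontinuity of the rank together with maximality of $r^X_j$ for openness of $\cU^X_j$, bound $\left(\cup_j \widetilde{\cU}^X_j\right)^c$ by the union of the preactivation zero sets, and cover $\{\theta\in\widetilde{\cU}^X_j : \RK{\DT f_\theta(X)} < r^X_j\}$ by the zero set of a minor that is nonvanishing at a rank-maximizing point. Your insistence that this minor be taken from the \emph{globally} defined extension, so that its zero set is null on the connected space $\RR^E\times\RR^B$ irrespective of whether $\widetilde{\cU}^X_j$ is connected, is precisely the right way to secure the one step the paper states more casually (it applies the analytic-zero-set result directly on $\widetilde{\cU}^X_j$).
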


The proof of the theorem is in Appendix \ref{theorem:constant:rank-proof}.

This theorem formalizes that the sets  $(\cU_j^X)_{j\in\lb1,p_X\rb}$ almost cover the spaces $\Par$. Moreover, on each set  $\widetilde{\cU}_j^X$ the activation pattern is constant, and the function $\theta \longmapsto f_{\theta}(X)$ is polynomial or  analytic. When it is polynomial, we would like to emphasize here that the structure of the polynomial is very particular. For instance, every variable appears with a degree at most one. A more complete description of the polynomial structure is, for instance, given by \citet{bona2022local,stock2022embedding}. 

Looking at the definition of  $\widetilde \cU^X_j$ and $\cU^X_j$, using that $\left( \cup_{j=1}^{p_X} \cU^X_j \right)^c$ is a closed set with Lebesgue measure zero, we find that,
\[\cU^X_j \mbox{ is open and dense in } \widetilde \cU^X_j.\]
As a consequence, $\widetilde \cU^X_j\setminus\cU^X_j $ has Lebesgue measure $0$: the activation pattern almost surely determines $\RK{\DT f_\theta(X)}$.

For $\theta \in \cU_j^X$, the conclusions concerning  $\RK{ \DT f_{\theta}(X) }$
have direct consequences on the local dimensions of the image set $\{f_{\theta'}(X) ~|~  \theta' \in B(\theta , \varepsilon) \}$ and the pre-image set $\{\theta' \in B(\theta , \varepsilon) ~|~ f_{\theta'}(X) = f_{\theta}(X)\}$, where $\varepsilon >0$ is small enough. The consequences and their implications in machine learning applications are described in greater detail in the next sections. 

Finally, the mapping $\theta \mapsto f_\theta(X)$ is smooth at any $\theta \in \widetilde \cU^X_j \setminus \cU_j^X$. However, for such a $\theta$, $\RK{\DT f_{\theta}(X)}$ is strictly smaller than $r_j^X$, i.e. for $\theta' \in \cU^X_j$. This behavior may correspond to a singularity, such as a cusp, in the set $\{f_{\theta'}(X) | \theta' \in B(\theta, \varepsilon)\}$. Such singularities are expected to influence the optimization of a learning objective. In particular, although $\widetilde \cU^X_j \setminus \cU_j^X$ is of measure $0$, its elements may be disproportionately represented among the local and global minimizers of any learning objective.

When compared to existing similar statements \citep{stock2022embedding,grigsby2022functional,bona2022local,grigsby2022transversality}, the particularity of Theorem \ref{theorem:constant:rank} is that the construction of the sets $\cU_j^X$ permits to include, in the third item, a statement on $\RK{ \DT f_{\theta}(X)}$. To the best of our knowledge, this quantity appears for the first time in conditions of local parameter identifiability introduced by \citet{bona2022local}. It appears independently a few months later, as the core quantity of a study dedicated to the geometric analysis of neural networks carried out by \citet{grigsby2022functional}. In the latter article, this quantity is called the \lq{}batch functional dimension\rq{} and we will call it \lq{}\NAME{}\rq{} in this article.

Because the input space of $\DT f_{\theta}(X)$ is always $\RR^{E}\times \RR^B$, the quantity $\RK{ \DT f_{\theta}(X)}$ is upper bounded by the number of parameters $|E| + |B|$. 
Furthermore, as formalized by \citet{grigsby2022functional}, because of the invariance by positive rescaling, see the definition and discussion of the relation $\sim_s$ in Section \ref{ReLU networks-sec-main}, we even have $\RK{ \DT f_{\theta}(X)} \leq |E| + |B| - N_1 - \cdots - N_{L-1} $. 
In fact, when $\RK{ \DT f_{\theta}(X)} = |E| + |B| - N_1 - \cdots - N_{L-1} $, under mild conditions on $\theta$, the network function is locally identifiable around $\theta$. That is, $f_{\theta}(X) = f_{\theta'}(X)$ and $\|\theta - \theta' \|$ small enough imply $\theta \sim_s \theta'$  (see \citealp{bona2022local}). 

Beyond the case of maximal rank value, i.e. $\RK{ \DT f_{\theta}(X)} = |E| + |B| - N_1 - \cdots - N_{L-1} $, leading to local identifiability, examples of non-identifiable neural networks and rank deficient parameters are in \citet{grigsby2022functional,bona2023parameter,grigsby2023hidden,sonoda2021ghosts}. Let us emphasize a simple example illustrating that several rank values can be achieved.
\begin{exs}\label{example_1}
Consider $L\geq 3$, any neuron $v\in V_{\ell}$, for $\ell \in \{2, \ldots, L-1\}$, and $\theta\in\Par$ such that 
\begin{equation}\label{cond_deficit}
b_v <0 \qquad\mbox{and}\qquad w_{v'\rightarrow v} <0 \mbox{, for all } v'\in V_{\ell-1}. 
\end{equation}
Because of the ReLU activation function, for all $x\in\RR^{N_0}$ and all $v'\in V_{\ell-1}$, we have $(f^{\ell-1}_\theta (x))_{v'} \geq 0$, and \eqref{deff_l} and \eqref{cond_deficit} guarantee that $(f^{\ell}_\theta (x))_v = 0$. This holds for all $\theta$ in the open set defined by \eqref{cond_deficit}. In this set, the parameters $(w_{v'\rightarrow v})_{v'\in  V_{\ell-1}}$ and $b_v$ have no impact on $f_\theta(X)$, the corresponding partial derivatives $\frac{\partial f_\theta(X)}{\partial w_{v'\rightarrow v}}$  and  $\frac{\partial f_\theta(X)}{\partial b_{v}}$ are null, which leads to a rank deficiency of $\DT f_\theta(X)$. Going further, consider any $\theta\in\Par$. According to the above remark, to diminish $\RK{\DT f_\theta(X)}$, we can change the weights arriving to a given neuron $v$, and assign them negative values so that \eqref{cond_deficit} holds. We can redo  this operation to many neurons to diminish the rank further. As a conclusion to the example, many values of $\RK{\DT f_\theta(X)}$ are reached at different places in the parameter/input space.
\end{exs}

Let us conclude the section by showing that the quantity $\RK{ \DT f_{\theta}(X) }$ is invariant with respect to the positive rescaling and/or neuron permutation symmetries defined in Section \ref{ReLU networks-sec-main}. 

\begin{prop} \label{prop:rank:invariant:rescaling}
Consider any deep fully-connected ReLU network architecture $(E,V, \sigma_L)$.
Let $\theta, \widetilde{\theta} \in \Par$ such that $\theta \sim \widetilde{\theta}$. Then, for any $n\in\NN^*$ and $X\in\Xspace$,  $\DT f_{\theta}(X)$ is defined if and only if $\DT f_{\widetilde\theta}(X)$ is defined, and in that case we have
     \[\RK{ \DT f_{\widetilde\theta}(X) } = \RK{ \DT f_{\theta}(X) }.
     \]
\end{prop}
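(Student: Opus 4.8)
The plan is to reduce the statement to the two elementary symmetries that generate $\sim$, namely neuron permutation $\sim_p$ and positive rescaling $\sim_s$, and to observe that each is implemented by a \emph{linear isomorphism} of the parameter space $\Par$ that leaves the map $g : \theta' \longmapsto f_{\theta'}(X)$ globally invariant, i.e. $g\circ\Phi = g$ on all of $\Par$. Once this is established, both the equivalence of definedness of the differentials and the invariance of the rank will follow from the chain rule. Since $\widetilde\theta \sim \theta$ means there is $\theta'$ with $\widetilde\theta \sim_p \theta'$ and $\theta' \sim_s \theta$, it suffices to treat $\sim_p$ and $\sim_s$ separately and then compose the two isomorphisms (the functional identity $g\circ\Phi=g$ is stable under composition).

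First I would handle the permutation symmetry. A permutation of the neurons within the hidden layers, with the induced relabeling of incoming/outgoing weights and of biases, defines a map $\Phi_\pi : \Par \longrightarrow \Par$ that merely permutes the coordinates of $\theta'$; hence $\Phi_\pi$ is a linear isomorphism (indeed orthogonal), with inverse $\Phi_{\pi^{-1}}$, relating $\widetilde\theta$ and $\theta'$. The symmetry property recalled in Section \ref{positive-rescaling-permutation-sec} gives $f_{\Phi_\pi(\eta)} = f_{\eta}$ for \emph{every} $\eta$, so $g \circ \Phi_\pi = g$ on all of $\Par$.

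Next I would treat the rescaling symmetry. Given the positive scalars $(\lambda_v)_v$ witnessing $\theta' \sim_s \theta$, equations \eqref{eq:scaling:un}--\eqref{eq:scaling:deux} exhibit a linear map $R_\lambda : \Par \longrightarrow \Par$ that multiplies the coordinate $w_{v'\to v}$ by $\lambda_v/\lambda_{v'} > 0$ and the coordinate $b_v$ by $\lambda_v > 0$, and which relates $\theta'$ and $\theta$. Being diagonal with nonzero entries, $R_\lambda$ is a linear isomorphism with inverse $R_{1/\lambda}$. Moreover, for this fixed $\lambda$ the rescaling relation holds between $R_\lambda(\eta)$ and $\eta$ for every $\eta \in \Par$, so the symmetry property again yields $g \circ R_\lambda = g$ on all of $\Par$.

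Finally I would conclude with the chain rule. Let $\Phi$ be the relevant linear isomorphism (one of $\Phi_\pi$, $R_\lambda$, or their composition for $\sim$), chosen so that $\Phi(\theta) = \widetilde\theta$ and $g = g\circ\Phi$. If $g$ is differentiable at $\theta$ then $g = g\circ\Phi$ is differentiable at $\widetilde\theta = \Phi^{-1}(\widetilde\theta)$-preimage argument run through $\Phi^{-1}$, and conversely; this settles the equivalence of definedness of $\DT f_{\theta}(X)$ and $\DT f_{\widetilde\theta}(X)$. Differentiating $g = g\circ\Phi$ at the appropriate point gives $\DT f_{\widetilde\theta}(X) = \DT f_{\theta}(X) \circ \Phi$, and since $\Phi$ is invertible, $\RK{\DT f_{\widetilde\theta}(X)} = \RK{\DT f_{\theta}(X)}$. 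I expect the only genuinely delicate points to be bookkeeping rather than analysis: verifying that each symmetry is \emph{global} (valid for all $\eta$, not merely for the given pair), so that the functional identity $g\circ\Phi = g$ is legitimate, and checking that the scaling factors $\lambda_v/\lambda_{v'}$ and $\lambda_v$ are nonzero so that $R_\lambda$ is an honest isomorphism; the chain-rule step itself is then immediate.
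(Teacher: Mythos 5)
Your proof is correct and follows essentially the same route as the paper: both arguments realize the symmetry as an invertible linear map $M$ of $\Par$ (a permutation composed with a positive diagonal rescaling) under which $\theta'\longmapsto f_{\theta'}(X)$ is globally invariant, and then conclude that the two differentials differ by composition with an invertible linear map, hence have equal rank. The only blemish is a direction slip in your final formula (differentiating $g=g\circ\Phi$ at $\theta$ gives $\DT f_{\widetilde\theta}(X)=\DT f_{\theta}(X)\circ\Phi^{-1}$ rather than $\circ\,\Phi$), which is immaterial since $\Phi$ is invertible.
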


The proof of the proposition is in Appendix \ref{app:proof:rank:invariant:rescaling}. 

The invariance in Proposition~\ref{prop:rank:invariant:rescaling} is a benefit of the complexity measure $\RK{ \DT f_{\theta}(X) }$. The invariance will also hold for the regularity criterion and the notion of flatness introduced in the next section.

On the contrary, it does not hold for the local flatness of the empirical risk function studied by \citet{haddouche2025pac,cha2021swad,foret2021sharpnessaware,hochreiter1997flat,keskar2017on}. This leads to undesired behaviors \citep{dinh2017sharp}. Similarly, complexity measures defined by norms \citep{bartlett2017spectrally,bartlett2020benign,golowich2018size,neyshabur2015norm} are not invariant to positive rescalings\footnote{For both flatness and norms, it is, of course, possible to consider the minimum of the complexity criterion over the equivalence class of a $\theta$ element. However, this is an additional burden that does not correspond to the practice.}.

\section{Geometry-Induced Regularization and Minima Flatness} \label{section:geometric:interpretation}

 In this section, we describe the consequences of Theorem \ref{theorem:constant:rank}. We formalize its theoretical implications in Sections \ref{theo_cons_sec}, \ref{implicit-regule-theory} and \ref{subsection:minima:flatness}, present a concrete example in \Cref{example:image:preimage:dimdeux}, and demonstrate their impact on optimization trajectories in \Cref{implicit_regul-sec}.

\subsection{Geometrical Interpretation of \texorpdfstring{\Cref{theorem:constant:rank}}{Theorem 1}}\label{theo_cons_sec}

The next corollary is a straightforward consequence of the constant rank theorem and Theorem \ref{theorem:constant:rank} (see \Cref{appendix-constant-rank-thm}). The corollary is illustrated by an example in \Cref{example:image:preimage:dimdeux} and \Cref{dessin-exemple}.

\begin{coro}\label{coro-optim}
    Consider any deep fully-connected ReLU network architecture $(E,V, \sigma_L)$.

    For any $n\in\NN^*$, $X\in\Xspace$, $j\in\lb1,p_X\rb$ and $\theta\in\cU_j^X$, there exists $\varepsilon_{X,\theta} >0$ such that
\begin{itemize}
    \item the {\it local image set} 
    \[\{f_{\theta'}(X)\in\Yspace ~|~ \|\theta'-\theta\|<\varepsilon_{X,\theta}\}\]
        is a smooth manifold of dimension $\RK{ \DT f_{\theta}(X) }$;
    \item the {\it local pre-image set}
    \[\{ \theta'\in\Par~|~f_{\theta'}(X) = f_{\theta}(X) \mbox{ and } \|\theta'-\theta\|<\varepsilon_{X,\theta} \} \] 
    is a smooth manifold of dimension $|E|+|B| - \RK{ \DT f_{\theta}(X) }$.
\end{itemize}
\end{coro}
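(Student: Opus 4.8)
The plan is to invoke the constant rank theorem from differential geometry, using the three ingredients that Theorem~\ref{theorem:constant:rank} has already supplied. Recall that for $\theta \in \cU_j^X$ we have three facts available: the map $\theta' \longmapsto f_{\theta'}(X)$ is smooth (in fact polynomial or analytic) on the open set $\widetilde{\cU}_j^X$ containing $\theta$ (item 6); its differential has rank $r_j^X$ at $\theta$ (item 3); and, crucially, this rank is the \emph{maximal} value on $\widetilde{\cU}_j^X$, attained precisely on the open subset $\cU_j^X$. The first thing I would check is that the rank is therefore \emph{locally constant} near $\theta$: since $\cU_j^X$ is open (item 4), there is a ball $B(\theta,\varepsilon) \subset \cU_j^X$, and on this ball $\RK{\DT f_{\theta'}(X)} = r_j^X$ is constant by item 3. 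This is the hypothesis the constant rank theorem requires.

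With constant rank $r := r_j^X$ in hand on a neighborhood of $\theta$, I would apply the constant rank theorem to the smooth map $F : \theta' \longmapsto f_{\theta'}(X)$, defined on an open subset of $\RR^E \times \RR^B$ (dimension $|E|+|B|$) with values in $\Yspace$. The theorem provides local coordinate charts around $\theta$ in the source and around $f_\theta(X)$ in the target in which $F$ takes the normal form $(x_1,\dots,x_{|E|+|B|}) \longmapsto (x_1,\dots,x_r,0,\dots,0)$. From this normal form both conclusions follow directly. For the local image set, the image of a small neighborhood is, in the target chart, an open piece of the $r$-dimensional coordinate subspace, hence (shrinking $\varepsilon$ if needed to stay inside the charts) a smooth embedded manifold of dimension $r = \RK{\DT f_\theta(X)}$. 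For the local pre-image set, the fiber $F^{-1}(f_\theta(X))$ is, in the source chart, the slice $\{x_1 = \dots = x_r = 0\}$ intersected with the chart domain, hence a smooth manifold of dimension $(|E|+|B|) - r = |E|+|B| - \RK{\DT f_\theta(X)}$, which is exactly the claimed codimension statement.

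The only point requiring a little care, and which I expect to be the main obstacle, is the \emph{uniformization} of the two radii: the constant rank theorem is local, so it produces charts valid on some neighborhood, and I must ensure a single $\varepsilon_{X,\theta} > 0$ works simultaneously for both the image and pre-image statements. This is handled by taking $\varepsilon_{X,\theta}$ small enough that $B(\theta,\varepsilon_{X,\theta})$ lies inside the source chart domain (and inside $\cU_j^X$, so that constant rank genuinely holds), and then noting that the pre-image set in the statement is intersected with the same ball, so its description as a slice is automatically valid there. A secondary subtlety is that the displayed pre-image set is a subset of all of $\Par$ rather than of $\widetilde{\cU}_j^X$; however, since it is intersected with $B(\theta,\varepsilon_{X,\theta}) \subset \cU_j^X \subset \widetilde{\cU}_j^X$, this makes no difference, and the slice description applies verbatim. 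Once these radii are aligned the proof closes.
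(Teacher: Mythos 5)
Your proposal is correct and follows essentially the same route as the paper: both apply the constant rank theorem on the open set $\cU_j^X$ (where Theorem \ref{theorem:constant:rank} guarantees smoothness and constant rank $r_j^X$), use the resulting normal form to identify the local image set with an open piece of an $r$-dimensional coordinate subspace and the fiber with a slice of codimension $r$, and shrink $\varepsilon_{X,\theta}$ so that the ball fits inside the chart domain. The paper merely spells out the verification that the chart maps $\psi^{-1}$ and $\varphi$ carry the model sets bijectively onto the stated image and pre-image sets, which is the routine check implicit in your argument.
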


\subsection{Example} \label{example:image:preimage:dimdeux}
In Figure \ref{dessin-exemple} we show the sets $\widetilde \cU^X_j$ (left) and their images $f_{\widetilde \cU^X_j}(X) = \{f_\theta(X) ~|~ \theta \in \widetilde \cU^X_j\}$ (right), for $j\in \lb1,6\rb$, for a one-hidden-layer ReLU neural network ($L=2$) of widths $N_0=N_1=N_2=1$, with the identity activation function in the last layer. To simplify the notation, we denote the weights and biases $\theta = (w,v,b,c)\in\RR^4$ so that $f_\theta(x) = v \sigma(wx+b)+c$, for all $x\in\RR$. We consider $X=(0, 1, 2)\in\RR^{1\times 3}$ and 
\[f_\theta(X) = \bigl(v \sigma(b) + c ~,~ v \sigma(w+b) + c ~,~v \sigma(2w+b) + c\bigr).
\]

For any $j\in \lb1,6\rb$, the set $\widetilde \cU^X_j$ depends on the activations in the hidden layer. These sets are separated by the hyperplanes $b=0$, $w+b=0$, $2w+b=0$. The conditions only depend on $w$ and $b$. We represent the projection of the sets $\widetilde \cU^X_j$ and the lines $b=0$, $w+b=0$, $2w+b=0$ in the plane $(w,b)$,  on the left of Figure \ref{dessin-exemple}.

Similarly, for any $j\in \lb1,6\rb$, the image set $f_{\widetilde \cU^X_j}(X) \subseteq \RR^3$ is invariant to translations by a vector $(c,c,c)$, for $c\in\RR$. On the right of Figure \ref{dessin-exemple}, we represent for all $j$ the intersection $\cV_j = f_{\widetilde \cU^X_j}(X)\cap \cP$ between the image set $f_{\widetilde \cU^X_j}(X)$ and the linear plane $\cP$ orthogonal to $(1,1,1)$, generated by the vectors $\frac{1}{\sqrt{6}}(1,1,-2)$ and $\frac{1}{\sqrt{2}} (-1,1,0)$. The calculations leading to the construction of the figure are in Appendix \ref{appendix-example}.

\begin{figure}[ht]
  \begin{center}
    \begin{tikzpicture}[scale=0.7,decoration=snake]  
    \draw [->] (3.5,0) -- (3.5,7); 
    \draw [->] (0,3.5) -- (7,3.5);
    \node[] at (3.2,7.1) {$b$};
    \node[] at (7.1,3.2) {$w$};
    \draw[red,ultra thick] (1,6) -- (6,1);
    \node[red] at (7.2,1) {$w+b=0$};
    \draw[red,ultra thick] (1.75,7) -- (5.25,0);
    \node[red] at (5.25,-0.2) {$2w+b=0$};
    \draw[red,ultra thick] (0.5,3.5) -- (6.5,3.5);
    \node[red] at (0.5,3.2) {$b=0$};

    \node[] at (3,2) {\textcolor{green}{$\widetilde{\cU}^X_1$}};
    \node[] at (1.8,4.2) {\textcolor{cyan}{$\widetilde{\cU}^X_2$}};
    \node[] at (1.7,6) {\textcolor{black}{$\widetilde{\cU}^X_3$}};
    \node[] at (4.5,5.5) {\textcolor{magenta}{$\widetilde{\cU}^X_4$}};
    \node[] at (5.7,2.8) {\textcolor{red}{$\widetilde{\cU}^X_5$}};
    \node[] at (5.4,0.9) {\textcolor{orange}{$\widetilde{\cU}^X_6$}};
   
    \node[] at (3.5,8.5) {$\theta=(w,v,b,c)\in\RR^4$};
    \node[] at (7,9) {$f_\theta(X)$};
    \draw[->] (6,8.5) -- (8.5,8.5) ;
    \node[] at (9.5,8.5) {$\RR^{1\times 3}$};
    \draw[->] (10.5,8.5) -- (13,8.5) ;
    \node[] at (11.8,9) {restrict to $\cP$};
    \node[] at (13.5,8.5) {$\RR^2$};
       
    \draw[dotted ] (9,0)--(9,7);
  
    \draw [->] (10,3.5) -- (17,3.5); 
    \draw [->] (13.5,0) -- (13.5,7);
    \node[] at (13.2,7.1) {$y$};
    \node[] at (17.1,3.2) {$x$};

    \draw[pattern=north east lines, pattern color=black, draw=none]
    (13.5,3.5) -- (11.5,6.964) -- (10.5,5.232) -- cycle ;
    \draw[pattern=north east lines, pattern color=black, draw=none]
    (13.5,3.5) -- (16.5,1.768) -- (15.5,0.036) -- cycle ;
    \draw[pattern=north east lines, pattern color=red, draw=none]
    (13.5,3.5) -- (10.5,5.232) -- (10.5,3.5)  -- cycle ;
    \draw[pattern=north east lines, pattern color=red, draw=none]
    (13.5,3.5) -- (16.5,1.768) -- (16.5,3.5) -- cycle ;
    \draw[magenta,line width=1mm] (10.5,5.232) -- (16.5,1.768);
    \node[magenta] at (17.4,1.3) {$x+\sqrt{3}y=0$};
    \draw[cyan,line width=1mm] (11.5,6.964) -- (15.5,0.036);
    \node[cyan] at (15.8,-0.2) {$\sqrt{3}x+y=0$};
    \draw[orange,line width=1mm] (10.5,3.5) -- (16.5,3.5);
    \node[orange] at (16.8,3.9) {$y=0$};
    \node[] at (13.5,3.5) {\textcolor{green}{{\Large $\bullet$}}}; 
    \end{tikzpicture}
    \caption{\label{dessin-exemple}Representation of the sets $\widetilde \cU^X_j$ in the space $(w,b)$ (left) and restriction to $\cP$ of the corresponding image sets $\{f_{\theta}(X) ~ | ~ \theta \in \widetilde \cU^X_j\}$, $j \in \lb 1,6 \rb$ (right). We have $r^X_1=1$, $r^X_2=2$, $r^X_3=3$, $r^X_4=2$, $r^X_5=3$, $r^X_6=2$. The image of $\widetilde \cU^X_1$ such that $r^X_1=1$ is reduced to $(0,0)$ (right). The images of the sets $\widetilde \cU^X_j$ with $r_j^X = 2$ (i.e. $j=2,4,6$) are represented with thick lines of their respective colors (right).
    The images of $\widetilde \cU^X_3$, with $r_3^X = 3$, and $\widetilde \cU^X_5$, with $r_5^X = 3$, are represented by dashed areas, with the corresponding colors (right). 
    }
  \end{center}
\end{figure}

Notice that, as a consequence of the forthcoming \Cref{shallow-case-thm}, for the architecture $(1,1,1)$, we have ${\widetilde \cU^X_j}={\cU^X_j}$, for all $j\in\lb1,p_X\rb$.

The example described in this section illustrates the configuration of the different sets introduced in the preceding sections. We will return to it in \Cref{implicit_regul-sec} to highlight the connection between the geometrical sets, geometry-induced regularization, and saddle-to-saddle dynamics.

\subsection{Geometry-Induced Regularization Statements}\label{implicit-regule-theory}

Below, we consider $n\in\NN^*$, $X\in\Xspace$ and a smooth learning objective $R:\RR^{N_L\times n} \longrightarrow \RR$. The latter may depend on outputs $Y\in\Yspace$ or other relevant problem-related information. For the sake of simplicity and generality, this dependence is not explicitly indicated in the notation. The learning problem is modeled by
\begin{equation}\label{learning_Pb}
\underset{\theta}{    \minimize}
~
R(f_\theta(X)).\tag{$P$}
\end{equation}

Denoting $\cL(\theta) = R(f_\theta(X))$, if $\theta \mapsto f_\theta(X)$ is differentiable at $\theta$, a consequence of the chain rule is that
\begin{equation}\label{orthogonality-condition-eq}
\nabla \cL(\theta) = 0 \quad \Longleftrightarrow \quad \nabla R(f_\theta(X)) \in \Range(\DT f_\theta(X))^\perp,
\end{equation}
where $\Range(\DT f_\theta(X))^\perp$ denotes the orthogonal complement of the image of the linear map $\DT f_\theta(X) : \RR^{|E|+|B|} \rightarrow \RR^{N_L \times n}$.
In particular, if $\theta \in \cU_j^X$ for some $j \in \lb 1 , p_X \rb$, then by Corollary \ref{coro-optim}, the local image set $\{f_{\theta'}(X)\in\Yspace ~|~ \|\theta'-\theta\|<\varepsilon_{X,\theta}\}$ is a smooth manifold of dimension $\RK{\DT f_\theta(X)}$, and the direction of its tangent plane at $f_\theta(X)$ is given by $ \Range(\DT f_\theta(X))$. In that case, the equivalence \eqref{orthogonality-condition-eq} means that $\theta$ is a critical point of $\cL$ if and only if $\nabla R(f_\theta(X))$ is orthogonal to the local image set. This property lies at the heart of the geometry-induced regularization formalized in the statements below.

To formulate the regularization statements, we consider the upper semi-continuous extension $\dim^+: (\Par)\times \Xspace\longrightarrow \RR$ and the lower semi-continuous extension $\dim^-: (\Par)\times \Xspace\longrightarrow \RR$ of $(\theta,X) \longmapsto \RK{ \DT f_{\theta}(X) }$. More precisely, we define for all $(\theta,X)\in(\Par)\times \Xspace$
\begin{equation}\label{dim_def}
\left\{\begin{array}{l}
\dim^+(\theta,X) = \underset{\epsilon \rightarrow 0}{\lim}
~ ~
\underset{j : B(\theta,\epsilon) \cap \cU_j^X \neq \emptyset}{\max} ~r_j^X, \\
\dim^-(\theta,X) = \underset{\epsilon \rightarrow 0}{\lim}
~ ~
\underset{j : B(\theta,\epsilon) \cap \cU_j^X \neq \emptyset}{\min}~ r_j^X .
\end{array}\right.
\end{equation}
Of course, when there exists $j$ such that $\theta \in\cU_j^X $, since $\cU_j^X$ is open, $\dim^-(\theta,X) =\dim^+(\theta,X) = \RK{ \DT f_{\theta}(X) }$. We remind that, according to Theorem \ref{theorem:constant:rank}, the set $\left( \cup_{j=1}^{p_X} \cU^X_j \right)^c$, where the \NAME{} is extended, is closed with Lebesgue measure $0$.

Corollary \ref{regul-implicit-sec-1} establishes a connection between the critical points of \eqref{learning_Pb} 
and those satisfying the \KKT{} (KKT) conditions of the regularized problems
\begin{equation}\label{learning_pb_regul}
        \underset{\theta : \dim^-(\theta,X) \leq k}{\minimize} 
        ~
    R(f_\theta(X)), \tag{$P_k$}
\end{equation}
for $k\in\NN$.

\begin{coro}\label{regul-implicit-sec-1}
    Consider any deep fully-connected ReLU network architecture $(E,V, \sigma_L)$.

    Consider any $n\in\NN^*$, $X\in\Xspace$, any smooth learning objective $R:\RR^{N_L\times n} \longrightarrow \RR$, and $\theta^*\in \bigcup_{j=1}^{p_X} \cU_j^X$. We denote $k= \RK{ \DT f_{\theta^*}(X) }$. 
    \[\theta^* \mbox{ is a critical point of }(P) \qquad\Longleftrightarrow \qquad  (\theta^*,1) \mbox{ satisfies the KKT conditions of }  (P_k).
    \]
\end{coro}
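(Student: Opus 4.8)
The plan is to reduce both directions of the equivalence to the single stationarity equation $\nabla \cL(\theta^*)=0$, where $\cL(\theta)=R(f_\theta(X))$. On one side this is immediate: since $\theta^*\in\cU_j^X\subseteq\widetilde\cU_j^X$, the map $\theta\longmapsto f_\theta(X)$ is differentiable at $\theta^*$ (Theorem~\ref{theorem:constant:rank}), so by definition $\theta^*$ is a critical point of \eqref{learning_Pb} exactly when $\nabla\cL(\theta^*)=0$, which by the chain-rule identity \eqref{orthogonality-condition-eq} is equivalent to $\nabla R(f_{\theta^*}(X))\in\Range(\DT f_{\theta^*}(X))^\perp$. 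On the other side I will write out the KKT conditions of \eqref{learning_pb_regul} at the primal--dual pair $(\theta^*,1)$ and show, using the local structure of the constraint, that they collapse to the same equation.

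The key observation concerns the constraint function $g(\theta):=\dim^-(\theta,X)-k$. Since $\theta^*\in\cU_j^X$ for some $j$ with $r_j^X=\RK{\DT f_{\theta^*}(X)}=k$, and since $\cU_j^X$ is open (Theorem~\ref{theorem:constant:rank}, fourth item), a whole ball $B(\theta^*,\rho)$ is contained in $\cU_j^X$. For every $\theta$ in such a ball the small balls $B(\theta,\epsilon)$ meet only $\cU_j^X$ among the pairwise disjoint sets $\cU_{j'}^X$, so by \eqref{dim_def} one has $\dim^-(\theta,X)=r_j^X=k$. Hence $g$ is identically $0$ on $B(\theta^*,\rho)$; in particular it is smooth there, with
\begin{equation*}
g(\theta^*)=0 \qquad \text{and} \qquad \nabla g(\theta^*)=0 .
\end{equation*}
This is precisely what makes the KKT conditions of the constraint $\dim^-(\cdot,X)\le k$ meaningful at $\theta^*$: locally the constraint is a genuine smooth function, merely one whose gradient vanishes.

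With this in hand, the KKT conditions of \eqref{learning_pb_regul} at $(\theta^*,\mu)=(\theta^*,1)$ are primal feasibility $g(\theta^*)\le 0$, dual feasibility $\mu\ge 0$, complementary slackness $\mu\,g(\theta^*)=0$, and stationarity $\nabla\cL(\theta^*)+\mu\,\nabla g(\theta^*)=0$. The first three hold unconditionally: $g(\theta^*)=0\le 0$, $\mu=1\ge 0$, and $1\cdot 0=0$. Using $\nabla g(\theta^*)=0$, the stationarity condition is exactly $\nabla\cL(\theta^*)=0$. Therefore $(\theta^*,1)$ satisfies the KKT conditions of \eqref{learning_pb_regul} if and only if $\nabla\cL(\theta^*)=0$, i.e.\ if and only if $\theta^*$ is a critical point of \eqref{learning_Pb}, which proves both implications.

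The only genuine obstacle is conceptual rather than computational: giving meaning to the KKT conditions of \eqref{learning_pb_regul}, whose constraint $\dim^-(\cdot,X)$ is integer-valued and a priori only lower semicontinuous. The resolution is the local-constancy fact above, namely that on the open cell $\cU_j^X$ the lower semicontinuous extension is constant, so near $\theta^*$ the constraint is smooth with vanishing gradient. This also explains the stated multiplier: the constraint is \emph{active} in the sense that $g(\theta^*)=0$, which is what licenses the nonzero value $\mu=1$ through complementary slackness, while $\nabla g(\theta^*)=0$ is what prevents the multiplier from perturbing the stationarity equation. Geometrically, by Corollary~\ref{coro-optim} the image set is locally a $k$-dimensional manifold with tangent space $\Range(\DT f_{\theta^*}(X))$, and the orthogonality in \eqref{orthogonality-condition-eq} says that $f_{\theta^*}(X)$ is a critical point of $R$ restricted to this manifold.
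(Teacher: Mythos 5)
Your proposal is correct and follows essentially the same route as the paper's proof: both define the constraint function ($h$ in the paper, $g$ in yours) as $\dim^-(\cdot,X)-k$, use the openness of $\cU_j^X$ together with the disjointness of the cells to show it vanishes identically near $\theta^*$, hence has zero gradient there, and then observe that the KKT stationarity with multiplier $1$ collapses to $\nabla\cL(\theta^*)=0$. Your write-up is if anything slightly more explicit than the paper's, since it also checks primal feasibility, dual feasibility, and complementary slackness.
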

The proof is straightforward, but we provide the details for completeness in \Cref{proof-sec4-app-1}. When considering the points satisfying the KKT condition, we cannot consider points at which the function defining the constraint is discontinuous. This leads to considering $\theta^*\in \bigcup_{j=1}^{p_X} \cU_j^X$. This problem does not arise when, as in Corollary \ref{regul-implicit-sec-2},  establishing a connection between the local minimizers of \eqref{learning_Pb} 
and the local minimizers of \eqref{learning_pb_regul}.

\begin{coro}\label{regul-implicit-sec-2}
    Consider any deep fully-connected ReLU network architecture $(E,V, \sigma_L)$.

    Consider any $n\in\NN^*$, $X\in\Xspace$, any smooth learning objective $R:\RR^{N_L\times n} \longrightarrow \RR$, and $\theta^*\in \Par$. We denote $k= \dim^+(\theta^*,X)$. We have
    \[\theta^* \mbox{ is a local minimizer of }(P) \qquad\Longleftrightarrow \qquad  \theta^* \mbox{ is a local minimizer of }  (P_k).
    \]
    and 
    \[\theta^* \mbox{ is a saddle point of }(P) \qquad\Longleftrightarrow \qquad  \theta^* \mbox{ is a saddle point of }  (P_k).
    \]
\end{coro}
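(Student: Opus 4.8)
The plan is to reduce both equivalences to a single geometric fact: that $\theta^*$ lies in the \emph{interior} of the feasible set of $(P_k)$, namely of $\{\theta : \dim^-(\theta,X)\le k\}$ with $k=\dim^+(\theta^*,X)$. Indeed, once I exhibit a ball $B(\theta^*,\epsilon_0)$ contained in this feasible set, the constraint of $(P_k)$ is inactive throughout that ball: on $B(\theta^*,\epsilon_0)$ the two problems $(P)$ and $(P_k)$ have the same objective $\theta\mapsto R(f_\theta(X))$ and exactly the same admissible points. Hence $(P)$ and $(P_k)$ are indistinguishable in a neighborhood of $\theta^*$, and any purely local property transfers verbatim in both directions. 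In particular $\theta^*$ is a local minimizer (resp. saddle point) of $(P)$ iff it is one of $(P_k)$, with no differentiability assumption on $\theta\mapsto f_\theta(X)$ at $\theta^*$ required, since the argument never appeals to a gradient at $\theta^*$.

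To produce $\epsilon_0$, I would first use that the collection $\cU^X_1,\dots,\cU^X_{p_X}$ is finite: for every index $j$ with $\theta^*\notin\overline{\cU^X_j}$ a small enough ball avoids $\cU^X_j$, so I may choose $\epsilon_0>0$ with the property that $B(\theta^*,\epsilon_0)$ meets only those $\cU^X_j$ for which $\theta^*\in\overline{\cU^X_j}$. For each such relevant index, the definition \eqref{dim_def} gives $r^X_j\le\max_{j':\,\theta^*\in\overline{\cU^X_{j'}}}r^X_{j'}=\dim^+(\theta^*,X)=k$. Now take any $\theta\in B(\theta^*,\epsilon_0)$ and bound $\dim^-(\theta,X)$. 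If $\theta\in\cU^X_j$ for a relevant $j$, then since $\cU^X_j$ is open (Theorem \ref{theorem:constant:rank}, Item 4) and the sets are disjoint (Item 1), a small ball around $\theta$ sits inside $\cU^X_j$, so $\dim^-(\theta,X)=r^X_j\le k$. If instead $\theta$ lies in the complement $\left(\cup_j\cU^X_j\right)^c$, then for $\epsilon<\epsilon_0-\|\theta-\theta^*\|$ one has $B(\theta,\epsilon)\subseteq B(\theta^*,\epsilon_0)$, so every index $j$ with $B(\theta,\epsilon)\cap\cU^X_j\neq\emptyset$ is relevant and satisfies $r^X_j\le k$; taking the minimum and letting $\epsilon\to0$ yields $\dim^-(\theta,X)\le k$. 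In both cases $\theta$ is admissible, so $B(\theta^*,\epsilon_0)$ lies in the feasible set.

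The subtle point, and the one I expect to be the main obstacle, is precisely this last case, where $\theta^*$ and some of its neighbours may sit on the measure-zero set $\left(\cup_j\cU^X_j\right)^c$ on which $\theta\mapsto f_\theta(X)$ need not be differentiable and on which $\dim^-$ and $\dim^+$ can genuinely differ. Two facts from Theorem \ref{theorem:constant:rank} are what make the case analysis go through: the union $\cup_j\cU^X_j$ is dense (Item 5), so for $\theta$ in the complement the index set over which the minimum defining $\dim^-(\theta,X)$ is taken is always nonempty; and the openness of each $\cU^X_j$ pins the value of $\dim^-$ on the bulk of the neighborhood. The upshot is that the constraint function $\dim^-(\cdot,X)$ stays $\le k$ on a whole ball around $\theta^*$, so its lower semicontinuity is being used in the favorable direction and no singularity at $\theta^*$ can render the constraint locally active—which is exactly what collapses $(P_k)$ onto $(P)$ near $\theta^*$ and delivers the stated equivalences.
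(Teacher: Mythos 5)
Your proposal is correct and follows essentially the same route as the paper: the paper's proof also hinges on producing a radius $\varepsilon'>0$ such that every $\cU^X_j$ meeting $B(\theta^*,\varepsilon')$ has $r^X_j\le k$, whence $\dim^-(\cdot,X)\le k$ on that ball and the constraint of $(P_k)$ is locally inactive. The only cosmetic difference is that you use this fact symmetrically to conclude both implications at once (and the saddle-point statement for free), whereas the paper dispatches the forward implication by an a fortiori argument and reserves the ball argument for the converse.
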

The proof is straightforward, but we provide the details for completeness in \Cref{proof-sec4-app-2}. The above two corollaries show that the limit points of first-order algorithms all exhibit a different trade-off between the minimization of $R(f_\theta(X))$ and $\dim^-(\theta,X)$. The trade-off depends on the local minimizer, which in turn is determined by the initialization and the optimization algorithm. This stands in sharp contrast to the common practice in inverse problems, where the regularization parameter is typically chosen by the user or tuned according to an ad hoc criterion. We empirically observe the dependence of the regularization parameter on the initialization in \Cref{implicit_regul-sec}. We will also observe in the experiments of  \Cref{expe-rank_diminu} that the \NAME{} $ \RK{ \DT f_{\theta}(X) }$ tends to decrease during training.

To understand the practical effect of the regularization induced by the geometry, we detail in \Cref{shallow-sec} the properties shared by the functions $f_\theta$ when $\theta$ satisfies $ \RK{ \DT f_{\theta}(X) } \leq k$, for a given $k\in\NN$, in the case of shallow networks. The effect of the regularization is empirically put to evidence in \Cref{recovery-CPL-sec}.

\subsection{Minima's Flatness} \label{subsection:minima:flatness}
As in the previous section, we consider $n\in\NN^*$, $X\in\Xspace$ and a smooth learning objective $R:\RR^{N_L\times n} \longrightarrow \RR$.

A direct consequence of \Cref{coro-optim} is that any local minimizer $\theta \in \cup_{j=1}^{p_X} \cU^X_j$ of \eqref{learning_Pb} is dimension $\bigl(|E| + |B| - \RK{ \DT f_{\theta}(X) } \bigr)$ flat, as defined in the following definition.
\begin{df}\label{flat-def}
A local minimizer $\theta$ of \eqref{learning_Pb} is said to be dimension $k$ flat, for $k \in \NN$, if and only if there exist $\varepsilon > 0$ and a smooth manifold $\calM \subset \Par$ of dimension $k$ such that $\theta \in \calM$, and every $\theta' \in \calM \cap B(\theta, \varepsilon)$ is also a local minimizer of \eqref{learning_Pb}.
\end{df}
This property is illustrated in \Cref{flat-minima-fig} using a simple scalar function on $\RR^2$, unrelated to deep learning.
    \begin{figure}[ht]
    \centering
    \includegraphics[width=0.45\linewidth]{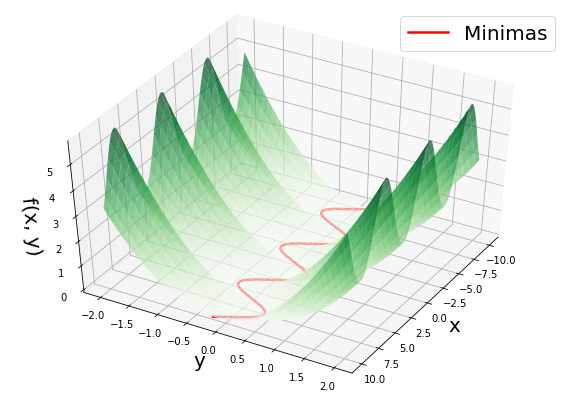}
    \caption{Illustration of the {\em dimension $k$ flat minima} property. The red line represents the smooth manifold of dimension $1$ formed by all local minima.}\label{flat-minima-fig}
    \end{figure}

We consider a minimizer to be flatter when $k$ is larger, corresponding to a smaller value of the regularity criterion $\RK{\DT f_{\theta}(X)}$. With these definitions, flatter minima naturally correspond to more regularized neural networks. This notion of flatness differs from the one based on the Hessian of the objective function, as studied in \citet{haddouche2025pac,keskar2017on, foret2021sharpnessaware, cha2021swad, hochreiter1997flat}. The lack of invariance of Hessian-based flatness with respect to the natural symmetries of neural network parameterizations has led to counterexamples demonstrating that it fails to capture the phenomenon of good generalization \citep{dinh2017sharp}. By contrast, as demonstrated in \Cref{prop:rank:invariant:rescaling}, \Cref{flat-def} benefits from invariance to the natural symmetries of ReLU neural networks.

As for the Hessian-based notion of flatness, for $k$ large, escaping dimension $k$ flat minima is time-consuming for stochastic algorithms. For instance, for the stochastic gradient algorithm, the gradient noise will remain orthogonal to $\calM$, which does not favor the exploration of the flat valley. This should lead to the over-representation of dimension $k$ flat minima, with $k$ large, among the outputs of minimization algorithms.

\subsection{Geometry-Induced Regularization on the Example}\label{implicit_regul-sec}

To illustrate the {\it geometry-induced regularization} of \Cref{implicit-regule-theory}, we compute a series of optimization trajectories on the example of \Cref{example:image:preimage:dimdeux}. The example provides the set of input values $X = (0, 1, 2)$. By selecting a corresponding target output vector \( Y = (y_1, y_2, y_3) \), which can be freely chosen, the network can be optimized by minimizing the MSE between its predictions and the targets, i.e. by minimizing
\begin{equation}\label{pkregihueitn}
R(f_{\theta}(X)) = \frac{1}{3} \Big( (f_\theta(x_1) - y_1)^2 + (f_\theta(x_2) - y_2)^2 + (f_\theta(x_3) - y_3)^2\Big).
\end{equation}

In \Cref{lim-points-sec,stat-example-sec}, we empirically examine where images of the limit points accumulate and interpret these findings in light of the theoretical results of \Cref{implicit-regule-theory}. In \Cref{saddle-to-saddle-sec}, we illustrate how the geometry-induced properties of the landscape give rise to saddle-to-saddle dynamics.

\subsubsection{Limit Point Locations}\label{lim-points-sec}

We make the (arbitrary) choice \( Y = (0, 1, 3) \) as our target output. It is reachable in the sense that there exists  $\theta$ such that $f_\theta(X) = Y$.  To explore the diversity of learning behaviors, we compute the optimization trajectories for $10 ~000$ random initializations. For each trajectory, the parameters $(w, v, b, c)$ are initialized independently using a normal distribution \( \mathcal{N}(0, 1) \).  
The network is then trained via (non-stochastic) gradient descent with a learning rate $\gamma = 0.1$, over $300$ iterations.  

\begin{figure}[ht]
    \centering

    \begin{subfigure}{\linewidth}
        \centering
        \includegraphics[scale=0.5]{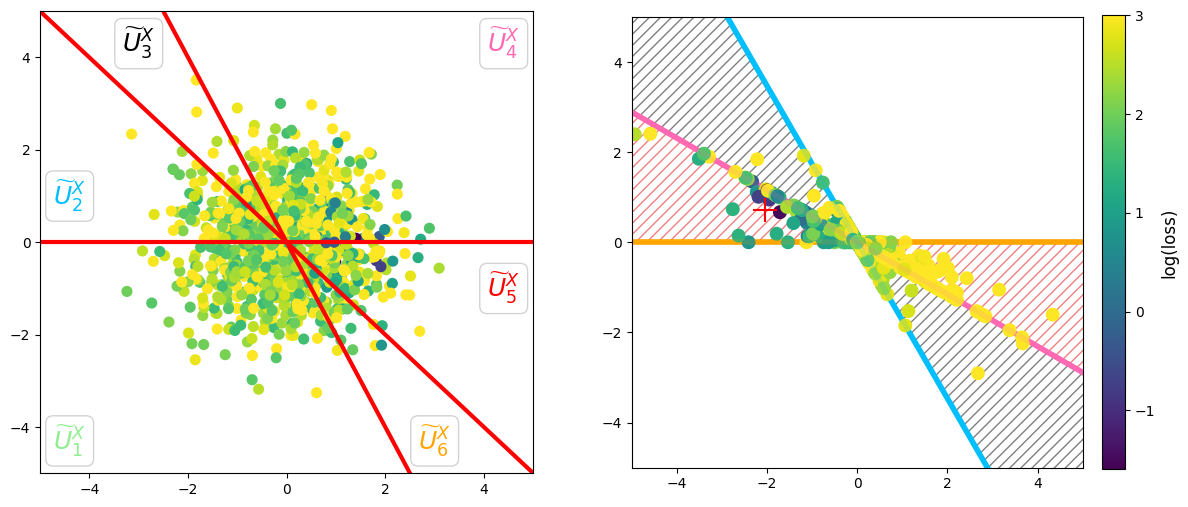}
        \caption{Initialization}
    \end{subfigure}

    \vspace{15pt} 

    \begin{subfigure}{\linewidth}
        \centering
        \includegraphics[scale=0.5]{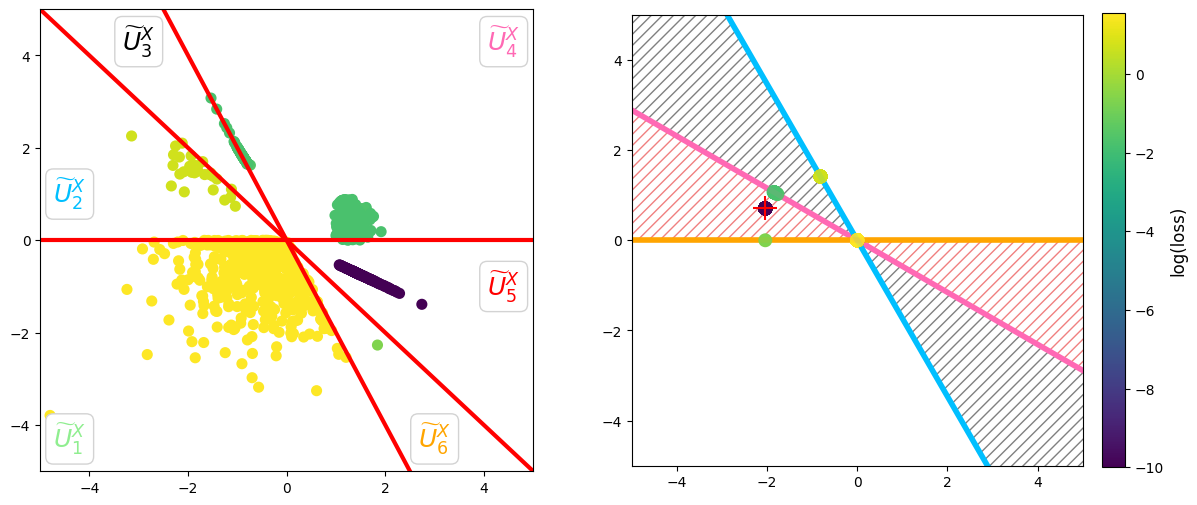}
        \caption{Iteration 300}
    \end{subfigure}

    \caption{Evolution of the parameters for $1~000$ different initializations, the sets $\widetilde{\cU}^X_j$ and their images. The parameters are represented in the $(w,b)$ space (left), and their corresponding (projected) images are represented in the output set (right), both at initialization (a) and after 300 iterations of gradient descent (b). The color of the points indicates the value of the objective $R(f_\theta(X))$. }
    \label{fig:exp1}
\end{figure}
In \Cref{fig:exp1}, we reproduce \Cref{dessin-exemple}, over which we plot the different parameters of the experiment. In \Cref{fig:exp1} (a), we represent the parameters at initialization; in Figure \ref{fig:exp1} (b),  they are represented after $300$ iterations. As in \Cref{dessin-exemple} and as described in Section~\ref{example:image:preimage:dimdeux}, each parameter vector \( \theta = (w, v, b, c) \in \mathbb{R}^4 \) is represented as follows: on the left, by its projection onto the \( (w, b) \)-plane; on the right, by the projection of \( f_\theta(X) \) onto the plane \( \mathcal{P} \). Still on the right of \Cref{fig:exp1}, the (projected) target $Y$ is represented as the red cross. The color of each point $\theta$ corresponds to the value of the objective $R(f_{\theta}(X))$. For clarity, we only plot $1~ 000$ parameters out of the $10~000$. The others are used for the estimates reported in \Cref{stat-example-sec}.

While at initialization the outputs are scattered (\Cref{fig:exp1} (a), right), after training, they are concentrated in the vicinity of $5$ different limit-points (\Cref{fig:exp1} (b), right). These limit points coincide with the orthogonal projections, denoted $P_jY$, for $j\in\lb1,6\rb$, of $Y$ onto the closure of $f_{\cU^X_j}(X)$ as defined by
\begin{equation}\label{ervhtb}
\underset{Y'\in \CLOSURE{f_{\cU^X_{j}}(X)}}{
\minimize} R(Y'),
\end{equation}
where  $\CLOSURE{.}$ denotes the closure of a set.
Notice that, for the chosen $Y$, $P_3Y = P_4Y$. 

Let us explain this empirical observation in the light of \Cref{implicit-regule-theory}. To do so we study separately $r_j^X \leq 2$ and $r_j^X =3$.

 Recall that, as mentioned at the end of Section \ref{example:image:preimage:dimdeux}, the forthcoming \Cref{shallow-case-thm} establishes that for the architecture of the example we have $\widetilde{\cU}^X_j = \cU^X_j$, for all $j \in \lb 1 , 6 \rb$. Let $j \in \lb 1 , 6 \rb$, and let $\theta \in \cU^X_j$. 

If $r_j^X \leq 2$, the analysis of Section \ref{example:image:preimage:dimdeux} shows that the image $f_{\cU^X_j}(X) = \{ f_\theta(X) \mid \theta \in \cU_j^X \}$ is a linear subspace of $\RR^3$. Thus, (for instance) the orthogonality condition \eqref{orthogonality-condition-eq} implies that $\theta \in \cU^X_j$ is a critical point of $\cL : \theta \mapsto R(f_\theta(X))$ if and only if $\nabla R(f_\theta(X))$ is orthogonal to $f_{\cU^X_j}(X)$. By definition of the MSE and since $f_{\cU^X_j}(X)$ is a vector space, the only point of $f_{\cU^X_j}(X)$ at which the orthogonality is satisfied is the orthogonal projection of $Y$ onto $f_{\cU^X_j}(X)$. This proves that the set of critical points in $\cU_j^X$ is exactly the set of $\theta \in \cU^X_j$ such that $f_\theta(X) = P_jY$. It is then easy to see that each of these critical points $\theta \in \cU_j^X$ is actually a local minimizer of $\cL$, since the orthogonal projection minimizes the distance. Notice that the image $f_\theta(X)$ of a local minimizer $\theta \in \cU_j^X$ is isolated, being equal to $P_jY$. However, multiple $\theta \in \cU_j^X$ can lead to the same value $P_jY$.

If $r_j^X =3$, then $\DT f_\theta(X)$ has full image rank, so the orthogonality condition means that $\nabla R(f_\theta(X)) = 0$. This can only happen if $f_\theta(X) = Y$. Thus, for $\theta \in \cU_j^X$, $\theta$ is a critical point of $\cL$ if and only if it is a global minimizer and $\cL(\theta)=0$. This occurs only for $j=5$. When $j=3$, $\cU_3^X$ does not contain any critical point. This leads to an accumulation of limit points in the vicinity of the boundary between $\cU_3^X$ and $\cU_4^X$ whose images are close to $P_3Y=P_4Y$.

What precedes allows us to characterize all the critical points $\theta$ of \eqref{learning_Pb} when $\theta \in \bigcup_{j=1}^6 \cU^X_j$, which are always local minimizers. Similarly, let $\theta \in \bigcup_{j=1}^6 \cU^X_j$ and consider now problem \eqref{learning_pb_regul}. If $k=1$, then $\theta$ is a minimizer of \eqref{learning_pb_regul} if and only if $f_\theta(X) = P_1Y$. If $k=2$, then $\theta$ is a minimizer of \eqref{learning_pb_regul} if and only if $f_\theta(X) \in \{ P_1Y, P_2 Y, P_4Y, P_6Y\}$. If $k=3$, then $\theta$ is a minimizer of \eqref{learning_pb_regul} if and only if $f_\theta(X) \in \{ P_1Y, P_2 Y, P_4Y, P_6Y, Y\}$. The correspondence between the sets of critical points illustrates the statements of \Cref{implicit-regule-theory}.

\subsubsection{Limit-point Location Depending on the Initialization}\label{stat-example-sec}

\begin{table}[ht]
\centering
     {\renewcommand{\arraystretch}{2.5}
     \begin{tabular}{|c|cccccc|}
    \hline  \textbf{Region} & \makecell{$\boldsymbol{\widetilde{\mathcal{U}}_1^X}$}   & \makecell{$\boldsymbol{\widetilde{\mathcal{U}}_2^X}$} & \makecell{$\boldsymbol{\widetilde{\mathcal{U}}_3^X}$} & \makecell{$\boldsymbol{\widetilde{\mathcal{U}}_4^X}$} & \makecell{\mincolor{$\boldsymbol{\widetilde{\mathcal{U}}_5^X}$}} & \makecell{$\boldsymbol{\widetilde{\mathcal{U}}_6^X}$} \\
    \hline Dimension & 1 & 2 & 3 & 2 & \mincolor{3} & 2 \\
    \hline $P(\theta_0 \in \widetilde{\mathcal{U}}_j^X)$ & 0.33 & 0.12 & 0.05 & 0.32 & \mincolor{0.13} & 0.05 \\
    \hline $P(\theta_{300} \in \widetilde{\mathcal{U}}_j^X)$  & 0.50 & 0.03 & \semitransp{0.00} & 0.29 & \mincolor{0.18} & \semitransp{0.00} \\
    \hline \makecell{$P\left(\theta_{300} \in \widetilde{\mathcal{U}}_j^X \ | \ \theta_0 \in \widetilde{\mathcal{U}}_1^X\right)$\\ \scriptsize 3317 initializations}  & 1.00 & \semitransp{0.00} & \semitransp{0.00} & \semitransp{0.00} & \semitransp{\mincolor{0.00}} & \semitransp{0.00}  \\
    \hline \makecell{$P\left(\theta_{300} \in \widetilde{\mathcal{U}}_j^X \ | \ \theta_0 \in \widetilde{\mathcal{U}}_2^X\right)$\\ \scriptsize 1236 initializations} & 0.35 & 0.20 & \semitransp{0.00} & 0.45 & \semitransp{\mincolor{0.00}} & \semitransp{0.00}  \\
    \hline \makecell{$P\left(\theta_{300} \in \widetilde{\mathcal{U}}_j^X \ | \ \theta_0 \in \widetilde{\mathcal{U}}_3^X\right)$\\ \scriptsize 481 initializations} & 0.11 & 0.01 & \semitransp{0.00} & 0.84 & \mincolor{0.02} & \semitransp{0.00} \\
    \hline \makecell{$P\left(\theta_{300} \in \widetilde{\mathcal{U}}_j^X \ | \ \theta_0 \in \widetilde{\mathcal{U}}_4^X\right)$\\ \scriptsize 3171 initializations} & 0.18 & 0.01 & \semitransp{0.00} & 0.59 & \mincolor{0.22} & \semitransp{0.00} \\
    \hline \makecell{$P\left(\theta_{300} \in \widetilde{\mathcal{U}}_j^X \ | \ \theta_0 \in \widetilde{\mathcal{U}}_5^X\right)$\\ \scriptsize 1291 initializations} & 0.29 & \semitransp{0.00} & \semitransp{0.00} & 0.06 & \mincolor{0.65} & \semitransp{0.00} \\
    \hline \makecell{$P\left(\theta_{300} \in \widetilde{\mathcal{U}}_j^X \ | \ \theta_0 \in \widetilde{\mathcal{U}}_6^X\right)$\\ \scriptsize 504 initializations} & 0.40 & \semitransp{0.00} & \semitransp{0.00} & 0.02 & \mincolor{0.58} & 0.01\\
    \hline
\end{tabular}}
    \caption{Distribution of the parameters in the different regions at initialization and after training, as well as distribution after training conditionally to the initialization region. The computations are based on 10 000 different optimization trajectories started with a random initialization.}
    \label{fig:exp1-distribution}
\end{table}

Based on the $10~ 000$ trajectories, we compute and provide in \Cref{fig:exp1-distribution}, both at initialization and after training, the distribution of the parameters in the different regions. We also compute the distribution of the parameters after training conditionally on the initial region. 

 As a first observation of the table, the probability of being initialized in a region differs from region to region, due to diverse sizes. Note that by symmetry around zero the regions go two by two: $\widetilde \cU_1^X$ and $\widetilde \cU_4^X$ have the same shape (and thus approximately equal initialization probabilities in the table), and similarly for the pairs $(\widetilde \cU_2^X,\widetilde \cU_5^X)$ and $(\widetilde \cU_3^X,\widetilde \cU_6^X$).

In \Cref{fig:exp1-distribution}, the blue column corresponds to the region containing the global minimizers, $\widetilde \cU_5^X$. The table illustrates that the region of initialization has a strong impact on the final parameter. Indeed, we see that all the points starting inside $\widetilde \cU_1^X$ remain in $\widetilde \cU_1^X$. This is because, once $\theta$ is in $\widetilde \cU_1^X$, only the partial derivative with regard to $c$ is non-zero and only $c$ is optimized. This does not permit getting out of $\widetilde \cU_1^X$. On the contrary, none of the trajectories finishes its course in $\widetilde \cU_3^X$ (which does not contain any critical point). Most trajectories starting in $\widetilde \cU_3^X$ converge to a limit-point in $\widetilde \cU_4^X$, but some of them manage to reach $\widetilde \cU_5^X$. None of the trajectories starting in $\widetilde \cU_2^X$ manages to reach a global minimizer. On the contrary, starting from $\widetilde \cU_5^X$ or $\widetilde \cU_6^X$ leads to a probability of converging to a global minimizer greater than $0.5$. The region $\widetilde \cU_4^X$ is an intermediary case where the chance of converging to a global minimizer is non-negligible,
but below $0.5$,
being equal to $0.22$. Surprisingly, many trajectories starting inside $\widetilde \cU_5^X$ finish their course in $\widetilde \cU_1^X$. The only two regions that have more points after the training than before are the region containing the global minimizer, $\widetilde \cU_5^X$, as well as the region of lowest dimension, $\widetilde \cU_1^X$, from which it is impossible to escape.

\subsubsection{Dimension \texorpdfstring{$k$}{k} Flat Minima}

Regarding the pre-image, on \Cref{fig:exp1} (b), left, we remark that for each $j\neq 6$, there are many limit-points $\theta^*$ in ${\widetilde \cU^X_j}$. Since they have the same color, their images on the right of \Cref{fig:exp1} (b) are essentially the same. For $j=5$, we have $r_j^X=3$ and the limit-points differ by a positive-rescaling. This is coherent with the theoretical results in \cite{bona2022local}. Notice that this also holds for the limit-points $\theta^*$ on the boundary between ${\widetilde \cU^X_3}$ and ${\widetilde \cU^X_4}$. These points may correspond to trajectories whose iterates primarily lie in ${\widetilde \cU^X_3}$ but ultimately converge to ${\widetilde \cU^X_4}$. For $j\in\{2, 4\}$, for which $r_j^X=2$, we see groups of limit-points. For $j=6$, the basin of attraction of the local minimizer of \eqref{pkregihueitn} is small and only one of the displayed experiments converges in ${\widetilde \cU^X_{6}}(X)$. For $j=1$, for which $r_j^X=1$, only $c$ is optimized and the projected limit-points coincide with those in \Cref{fig:exp1} (a), left. For $j \in \{1,2,4,6\}$, the sets of limit points in the parameter space are consistent with the fact that they are projections onto the $(a,c)$-plane of points lying on manifolds in $\RR^4$. This description illustrates the statement in \Cref{subsection:minima:flatness}.

\subsubsection{Saddle-to-saddle dynamics from a geometric perspective}\label{saddle-to-saddle-sec}

As illustrated in \Cref{fig:saddle-to-saddle}, which shows a trajectory, its image and the corresponding objective throughout the optimization process, the geometry of the neural network can provide insights into the \textit{saddle-to-saddle} behavior of the loss during training. For this experiment, we consider the same setting as before except that we take $Y=(1,0,5)$. In \Cref{fig:saddle-to-saddle}, top, we observe that the parameters are initialized in $\widetilde{\cU}^X_4$ (at the gray square), and go successively to $\widetilde{\cU}^X_5$ and to $\widetilde{\cU}^X_6$. The trajectory on the top figure allows to understand the bottom figure: after a first decrease in the loss, we observe a plateau. The latter corresponds to the approach of the set of critical points $\theta$ such that $f_\theta(X)= P_4 Y$. When the parameter trajectory reaches $\widetilde{\mathcal{U}}^X_5$, its image can evolve within a higher-dimensional set, leading to a second drop in the objective function. Then, when the parameters move from $\widetilde{\cU}^X_5$ to $\widetilde{\cU}^X_6$ and evolve inside $\widetilde{\cU}^X_6$, we observe another plateau of the objective. 

In this experiment, we illustrate that the transitions between regions can unlock new degrees of freedom, leading to sudden decreases of the objective. This saddle-to-saddle behavior has been observed and analyzed, for example, in \citet{jacot2021saddle, boursier2022gradient, abbe2023sgd, pesme2023saddle}. 

\begin{figure}[ht]
\centering
     \begin{subfigure}{\linewidth}
        \centering
        \includegraphics[scale=0.45]{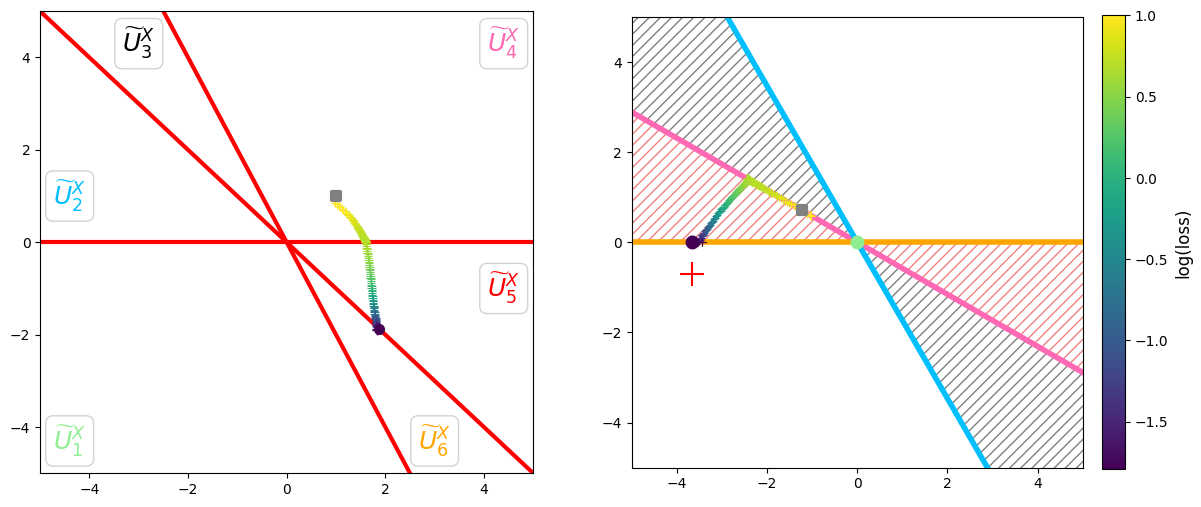}
    \end{subfigure}

    \vspace{15pt} 

 \begin{subfigure}{\linewidth}
        \centering
        \includegraphics[scale=0.45]{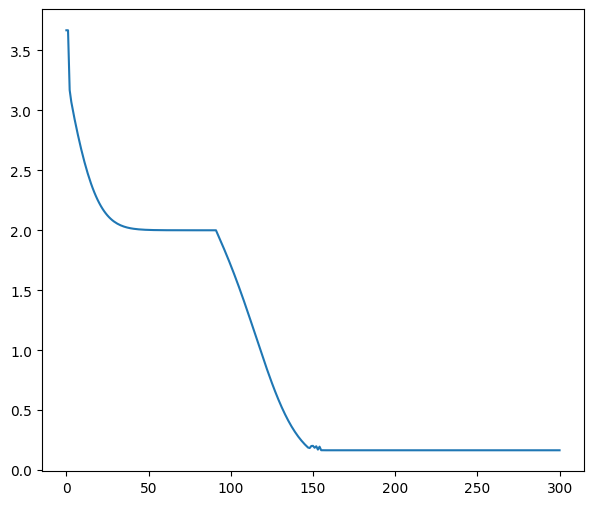}
    \end{subfigure}
    
    \caption{Illustration of the saddle-to-saddle phenomenon: Example of a trajectory of the parameters in the $(w,b)$ space (top left), the corresponding projected outputs (top right), and the evolution of the objective (bottom).  }
    \label{fig:saddle-to-saddle}
\end{figure}

\section{Effect of the Regularization in the Shallow Case}\label{shallow-sec}

\subsection{Theoretical Analysis}
In this section, we consider a shallow network of widths $(1, N_1, 1)$, with $\sigma_2=Id$, and provide a simple formula for $ \RK{ \DT f_{\theta}(X) }$ that we interpret. In particular, we denote, for all $X$ and $\theta$,
\begin{equation}\label{def_A}
    \cA(X,\theta) = \{\delta \in\{0,1 \}^{N_1} \mid \mbox{there exists }i\in\lb1,n\rb \mbox{, such that } a(x^{(i)}, \theta) = \delta \}.
\end{equation}
The set $\cA(X,\theta)$ encompasses all activation patterns \lq{}perceived\rq{} by $X$. In the next theorem, we show that $|\cA(X,\theta)|$ is connected to $\RK{ \DT f_{\theta}(X) }$, thereby illustrating the practical implications of the geometry-induced regularization discussed in Section~\ref{section:geometric:interpretation}.

The order of the examples has no influence on $ \RK{ \DT f_{\theta}(X) }$. To simplify notations, we assume, without loss of generality, that the examples of $X=(x^{(1)},x^{(2)}, \ldots,  x^{(n)})\in\RR^{1\times n}$ are distinct and ordered: 
\begin{equation}\label{x_ordered_main_part}
x^{(1)} < x^{(2)} < \cdots <  x^{(n)}.
\end{equation}

We denote for $i\in\lb 1, n\rb$,
\begin{equation}\label{def_bfe_0}
	\bfe_{i} =  \sigma\big( X -x^{(i)} \One\big)\in\RR^{1\times n},
	\quad \text{and} \quad
	\bfe_{n+i} = \sigma\big( x^{(i)}\One - X\big)\in\RR^{1\times n},
\end{equation}
where all the components of $\One\in\RR^{1 \times n}$ equal $1$. We have, for all $i \in\lb 1, n\rb$,
\begin{equation}\label{def_bfe}
\left\{\begin{array}{l}
    \bfe_{i} = (0,\dots, \underset{\underset{i}{\uparrow}}{0},  x^{(i+1)} - x^{(i)}, \dots, x^{(n)}-x^{(i)}), \\  
	\bfe_{n+i} = (x^{(i)}-x^{(1)}, \dots, x^{(i)}-x^{(i-1)},  \underset{\underset{i}{\uparrow}}{0}, \dots, 0).
    \end{array}\right.
\end{equation}
We also set $\bfe_0 = \bfe_{2n}$. Notice that, by definition, $\bfe_{n} =\bfe_{n+1} = 0$. 

We also define, for all $i\in\lb1,n\rb$,
\begin{equation}\label{defDesOnes}
\One_{i} = (0,\dots,0, \underset{\underset{i}{\uparrow}}{1},  \dots, 1)\in\RR^{1\times n} \quad \mbox{and}\quad
	\One_{n+i} = (1, \dots,1, \underset{\underset{i}{\uparrow}}{0}, \dots, 0)\in\RR^{1\times n}.
\end{equation}
Before stating the following theorem, we remind that the activation patterns $a(X,\theta)$ are defined in \Cref{activation-patterns-sec}.

\begin{thm}\label{shallow-case-thm}
    Consider any deep fully-connected ReLU network architecture $(E,V, Id)$, with $L=2$ and $N_0=N_2=1$. Consider $n\in\NN^*$, and a sample $X=(x^{(1)},x^{(2)}, \ldots,  x^{(n)})\in\RR^{1\times n}$ satisfying \eqref{x_ordered_main_part}. 

    For any $j\in\lb1,p_X\rb$, there exists $\alpha \in \{1,\dots,2n\}^{N_1}$ such that for all $\theta \in \widetilde\cU_j^X$ and all $k\in \lb1,N_1\rb$, $a(X,\theta)_{k,:}= \One_{\alpha_k} $, and 
    \begin{equation}\label{formule_rank_shallow}
    \RK{ \DT f_{\theta}(X) } = \RK{\One, \bfe_{\alpha_1 -1},\bfe_{\alpha_1 }, \ldots ,\bfe_{\alpha_{N_1} -1},\bfe_{\alpha_{N_1} }  }.
    \end{equation}
     As a consequence, $\widetilde\cU_j^X = \cU_j^X$ and for all $\theta \in \cU_j^X$
    \begin{equation}\label{upper_bound_rank_shallow}
    \frac{1}{2} |\cA(X,\theta)| \leq 
    \RK{ \DT f_{\theta}(X) } \leq 2 |\cA(X,\theta)|.
    \end{equation}
\end{thm}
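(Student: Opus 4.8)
The plan is to compute the differential $\DT f_\theta(X)$ one scalar parameter at a time, identify its range as a span of explicit vectors, and then rewrite that span in terms of the $\bfe_i$'s. Write the shallow network as $f_\theta(X)=\sum_{k=1}^{N_1} v_k\,\sigma(w_kX+b_k\One)+c\,\One$, where neuron $k$ carries incoming weight $w_k$, bias $b_k$, outgoing weight $v_k$, and $c$ is the output bias. On each region $\widetilde\cU_j^X$ the activations are constant by construction. Since the input is scalar and $x^{(1)}<\cdots<x^{(n)}$, the sign of $w_kx+b_k$ switches at most once and monotonically along the ordered sample, so the row $a(X,\theta)_{k,:}$ must be a step vector $\One_{\alpha_k}$: an increasing block when $w_k>0$, a decreasing block when $w_k<0$, and the constant rows $\One_1,\One_{n+1}$ in the degenerate sign cases. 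This produces the promised $\alpha\in\{1,\dots,2n\}^{N_1}$.

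Next I compute the partial derivatives on $\widetilde\cU_j^X$. Using $\sigma(w_kX+b_k\One)=\One_{\alpha_k}\odot(w_kX+b_k\One)$ (entrywise, Hadamard product), they read $\partial_{v_k}f=\One_{\alpha_k}\odot(w_kX+b_k\One)$, $\partial_{w_k}f=v_k\,(\One_{\alpha_k}\odot X)$, $\partial_{b_k}f=v_k\,\One_{\alpha_k}$ and $\partial_c f=\One$. As $\partial_{v_k}f=w_k\,(\One_{\alpha_k}\odot X)+b_k\,\One_{\alpha_k}$ is a linear combination of the other two, the range of $\DT f_\theta(X)$ equals $\Span\bigl(\One,\{\One_{\alpha_k}\odot X,\ \One_{\alpha_k}\}_{k}\bigr)$ whenever $v_k\neq0$.

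The crux is then the elementary identity $\Span(\One_{\alpha_k},\One_{\alpha_k}\odot X)=\Span(\bfe_{\alpha_k-1},\bfe_{\alpha_k})$. From the definitions one checks directly that, for $\alpha_k\in\{2,\dots,n\}$, $\bfe_{\alpha_k}=\One_{\alpha_k}\odot X-x^{(\alpha_k)}\One_{\alpha_k}$ and $\bfe_{\alpha_k-1}=\One_{\alpha_k}\odot X-x^{(\alpha_k-1)}\One_{\alpha_k}$ (with the analogous identities, up to sign, in the decreasing case $\alpha_k=n+i$), so both $\bfe$'s lie in the left span; conversely $\bfe_{\alpha_k-1}-\bfe_{\alpha_k}=(x^{(\alpha_k)}-x^{(\alpha_k-1)})\One_{\alpha_k}$ is a \emph{nonzero} multiple of $\One_{\alpha_k}$ because consecutive samples are distinct, which recovers $\One_{\alpha_k}$ and hence $\One_{\alpha_k}\odot X$. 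The boundary indices fit the scheme: $\alpha_k=1$ yields $\One_1=\One$ together with $\bfe_0=\bfe_{2n}$ and $\bfe_1$, while $\alpha_k=n+1$ contributes nothing, matching $\bfe_n=\bfe_{n+1}=0$. Substituting gives \eqref{formule_rank_shallow}; since its right-hand side depends only on $X$ and on $\alpha$, which is constant on $\widetilde\cU_j^X$, the rank is constant there, so every parameter attains $r_j^X$ and $\widetilde\cU_j^X=\cU_j^X$. For the sandwich \eqref{upper_bound_rank_shallow} I would count breakpoints: the $|\cA(X,\theta)|$ distinct columns of $a(X,\theta)$ split the ordered sample into consecutive blocks separated by exactly $|\cA(X,\theta)|-1$ gaps, each crossed by at least one neuron threshold (no column pattern can recur, as every coordinate is monotone in the example index). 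Reading independence off the nested triangular supports of the vectors $\bfe_{\alpha_k-1},\bfe_{\alpha_k}$, each active gap contributes at most two directions (from increasing and decreasing thresholds), giving $\RK{\DT f_\theta(X)}\le 2|\cA(X,\theta)|$, while a dimension count over the same triangular family yields $\tfrac12|\cA(X,\theta)|\le\RK{\DT f_\theta(X)}$.

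I expect the main obstacle to be upgrading the formula from \lq\lq generic\rq\rq{} to \lq\lq everywhere on $\widetilde\cU_j^X$\rq\rq: the reduction that discards $\partial_{v_k}f$ requires $v_k\neq0$, whereas on the locus $\{v_k=0\}$ the directions $\partial_{w_k}f,\partial_{b_k}f$ vanish and only $\partial_{v_k}f$ survives for neuron $k$, so a priori the range could shrink below $\Span(\bfe_{\alpha_k-1},\bfe_{\alpha_k})$. Controlling this degenerate locus is exactly what makes the rank constant and forces $\widetilde\cU_j^X=\cU_j^X$, and it is the step that deserves the most care. The remaining combinatorial bookkeeping for the bounds (coincident thresholds, the vanishing indices $\bfe_n=\bfe_{n+1}=0$, and the wrap-around $\bfe_0=\bfe_{2n}$) is routine but must be tracked precisely.
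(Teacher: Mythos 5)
Your overall route coincides with the paper's: the reduction to the polynomial form $f_\theta(X)=\sum_k\bigl(v_kw_k(\One_{\alpha_k}\odot X)+v_kb_k\One_{\alpha_k}\bigr)+c\One$ on each activation region, the span identity $\Span(\One_{\alpha_k}\odot X,\One_{\alpha_k})=\Span(\bfe_{\alpha_k-1},\bfe_{\alpha_k})$ (this is exactly \eqref{jetounbt} in Proposition \ref{cor:1d1d2}), and the gap/threshold counting for the sandwich, which the paper carries out via $\ell^0_{neurons}$, $\ell^0_{linear}$, Theorem \ref{shallow-case-thm-accurate} and Lemmas \ref{lem12}--\ref{dernier_lemme}.

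There is, however, a genuine gap, and it is the one you flag yourself without closing: your identification of $\Range(\DT f_\theta(X))$ with $\Span\bigl(\One,\{\One_{\alpha_k}\odot X,\One_{\alpha_k}\}_k\bigr)$ requires $v_k\neq0$ for every $k$, while the theorem asserts \eqref{formule_rank_shallow} for \emph{every} $\theta\in\widetilde\cU_j^X$, and $\widetilde\cU_j^X$ contains the hyperplanes $\{v_k=0\}$ because the activation pattern does not constrain $(v,c)$. Announcing that this locus ``deserves the most care'' is not an argument, and the worry is substantive rather than technical: at such a point the three partial derivatives attached to neuron $k$ collapse to the single direction $w_k(\One_{\alpha_k}\odot X)+b_k\One_{\alpha_k}$, and the pointwise rank genuinely drops. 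Concretely, for $N_1=1$, $X=(0,1,2)$ and $\theta=(w,v,b,c)=(1,0,-\tfrac12,0)$ --- an interior point of the pattern $(0,1,1)=\One_2$ --- the Jacobian has rank $2$ while $\RK{\One,\bfe_1,\bfe_2}=3$. So the differential-by-differential route cannot deliver the ``for all $\theta\in\widetilde\cU_j^X$'' statement as written. Note that the paper's own treatment of this step, in Proposition \ref{prop:1d1d2}, passes instead through the set-theoretic image ($\cO_\varepsilon\cap\calW_\alpha\subset f_{B(\theta,\varepsilon)}(X)\subset\calW_\alpha$, ``therefore $\Range(\DT f_\theta(X))=\calW_\alpha$''), an inference that the same example defeats: the image of a ball under $(u,v)\mapsto uv$ fills a neighbourhood of $0$ even though the differential at the origin vanishes. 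Your proof is therefore incomplete exactly where you suspected, and no completion along your lines is possible unless the degenerate locus $\{\exists k:\,v_k=0\}$ is excluded from the claim; everything else in your sketch (the step-vector form of the rows $a(X,\theta)_{k,:}$, the $\bfe$-identities including $\bfe_0=\bfe_{2n}$ and $\bfe_n=\bfe_{n+1}=0$, and the counting bounds) matches the paper and is sound.
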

The proof of the theorem is in \Cref{shallow-case-app}. \Cref{shallow-case-app} also provides a detailed characterization of the geometry of the image set $\{f_\theta(X)~|~ \theta \text{ varies} \}$ for the architecture $(N_0, N_1, N_2) = (1, N_1, 1)$, along with Theorem~\ref{shallow-case-thm-accurate}, which offers more precise—albeit less interpretable—bounds.

The quantity $\RK{\One, \bfe_{\alpha_1 -1},\bfe_{\alpha_1 }, \ldots ,\bfe_{\alpha_{N_1} -1},\bfe_{\alpha_{N_1} }  }$ counts the effective patterns $\alpha_k$. Typically, if two neurons of the hidden-layer are activated by the same set of examples, then according to \eqref{formule_rank_shallow}, the \NAME{} is the same as if the two neurons are collapsed. This implies that there are groups of neurons of the hidden-layer which are activated by the same set of examples. The geometry-induced regularization described in Corollary \ref{regul-implicit-sec-1} and Corollary \ref{regul-implicit-sec-2} favors the \lq alignment\rq{} of the neurons, such as put to evidence in \cite{boursier2024early,boursier2024simplicitybiasoptimizationthreshold}.

Also, because of $|\cA(X,\theta)| $ in \eqref{upper_bound_rank_shallow}, the \NAME{} diminishes when $\theta$ varies in such a way that more activation patterns $a(x^{(i)},\theta)$ are equal. That is when the number of linear regions of $f_\theta$ containing examples of $X$ diminishes. For instance, adding a new example with the same activation pattern as an example already in $X$ does not increase $|\cA(X,\theta)| $. The geometry-induced regularization described in \Cref{regul-implicit-sec-1} and \Cref{regul-implicit-sec-2} favors larger linear zones, with a fixed activation pattern, containing many examples rather than the multiplication of small linear zones, containing few or no examples.

\subsection{Experiments on the Recovery of Continuous Piecewise-Linear Functions}\label{recovery-CPL-sec}

In this section, we illustrate the geometry-induced regularization described in \Cref{implicit-regule-theory}, in light of \Cref{shallow-case-thm}. Our experiment is designed to visualize this regularization effect and to demonstrate the ability of a shallow neural network to recover a piecewise-linear target function with only a small number of segments.

The univariate scalar target function $f^*$ that we aim to recover consists of three linear segments and is shown in gray in Figure~\ref{fig:examples-different-losses}. We sample 25 independent inputs uniformly from the interval $[1,20]$. They are gathered in $X \in \RR^{1 \times 25}$ and we set $Y = f^*(X) \in \RR^{1 \times 25}$. We then use the MSE loss and train a shallow neural network with $10$ hidden neurons to fit this dataset.\footnote{With $10$ hidden neurons, the critical points are not all global minima and they exhibit greater diversity. This setting illustrates more aspects of the geometry-induced regularization.}

To better illustrate the diversity of critical points—and therefore the geometry-induced regularization (see \Cref{regul-implicit-sec-1} and \Cref{regul-implicit-sec-2})—the training is carried out using Adam in full-batch mode, with a learning rate of \(0.01\), and a stopping criterion of \(10^{-5}\) on the training loss. With this procedure, the network obtained at the end of training corresponds to a critical point of the training objective. We perform 50 training runs using the same dataset but with different random initializations (with the HeNormal initialization of Keras). Each run yields a critical point, for which we measure: the final training loss, the local dimension with respect to the training sample  \(X\), the number of activation patterns observed on \(X\) (``Seen regions''), corresponding to $|\cA(X,\theta)|$ appearing in \Cref{shallow-case-thm}, and the number of activation patterns observed on a very fine grid (``Total regions'').

\begin{figure}[ht]
\begin{subfigure}{0.5\textwidth}
       \includegraphics[width=7.5cm]{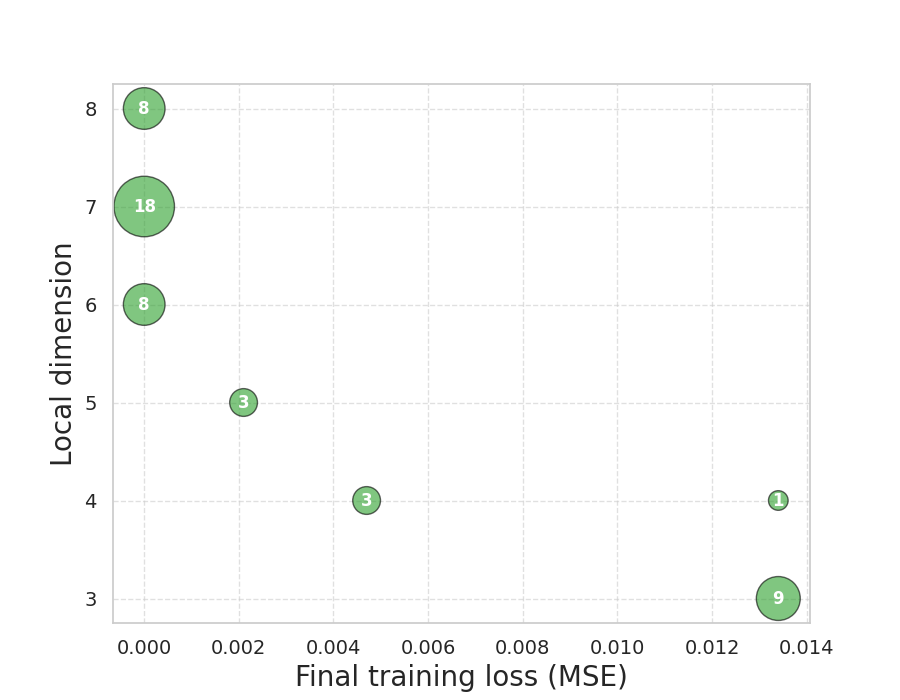}
    \caption{Local dimension vs Final training loss.}
    \label{fig:local-dim-vs-train-loss-a} 
\end{subfigure}
  \begin{subfigure}{0.5\textwidth}
      \includegraphics[width=7.5cm]{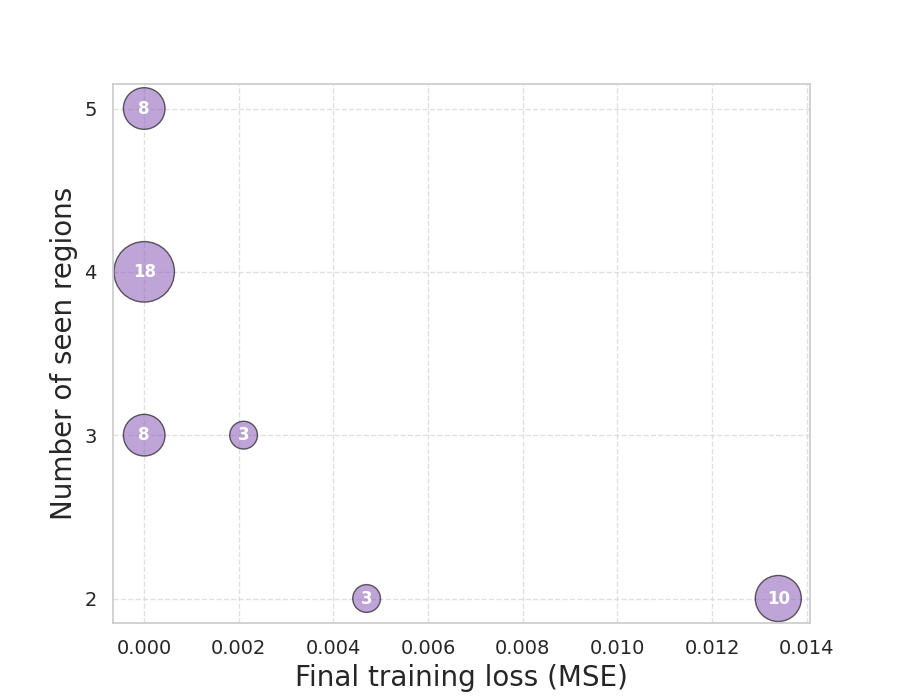}
    \caption{Seen regions vs Final training loss.}
    \label{fig:local-dim-vs-train-loss-b}
  \end{subfigure}
     \caption{Distribution of regularity as a function of final training loss for the 50 runs.}
\end{figure}

We first describe the different critical points revealed by the experiment in terms of the trade-off between training loss and regularity (measured either by the local dimension or by the number of seen regions). In \Cref{fig:local-dim-vs-train-loss-a}, each pair (training loss, local dimension) is represented by a disk whose radius is proportional to the number of runs that reached that pair. The latter is also written in the disk. The same visualization is provided in \Cref{fig:local-dim-vs-train-loss-b} for the pairs (training loss, seen regions).

The set of final training losses we observe is nearly discrete: up to variations smaller than \(10^{-5}\), we identify four distinct values: \(0\), \(2.1 \times 10^{-3}\), \(4.7 \times 10^{-3}\), and \(1.3 \times 10^{-2}\). These correspond to different critical values of the training loss. For the associated critical points, we observe varying local dimensions. In general, there is a negative correlation between these two quantities: lower training loss values correspond to higher local dimensions. In other words, networks with higher local dimensions have more degrees of freedom, enabling them to fit the training dataset more accurately.

We also observe that, due to the regularity in the training data, the training data can be fitted by a network with low local dimension. It is as if, in the example of \Cref{fig:exp1}-(b)--right, the red cross was lying on the pink line. Consequently, the local dimension of the trained networks is always smaller than \(8\), well below its maximum possible value of \(|E| + |B| - N_1 = 21\). Also, the number of seen regions is always smaller than $5$, which is much smaller than the size of the training sample $50$. In this sense, the regularity of the learned network is influenced by the regularity in the data. At last, for a shallow ReLU network of width 10, the estimated probability of achieving a training loss of \(0\) (i.e., reaching a global minimum) is approximately \(68\%\). We have observed in other experiments, not reported here, that this probability increases when training a wider network.

\begin{figure}[ht]
        \begin{subfigure}{.5\textwidth}
            \centering
            \includegraphics[width=0.95\textwidth]{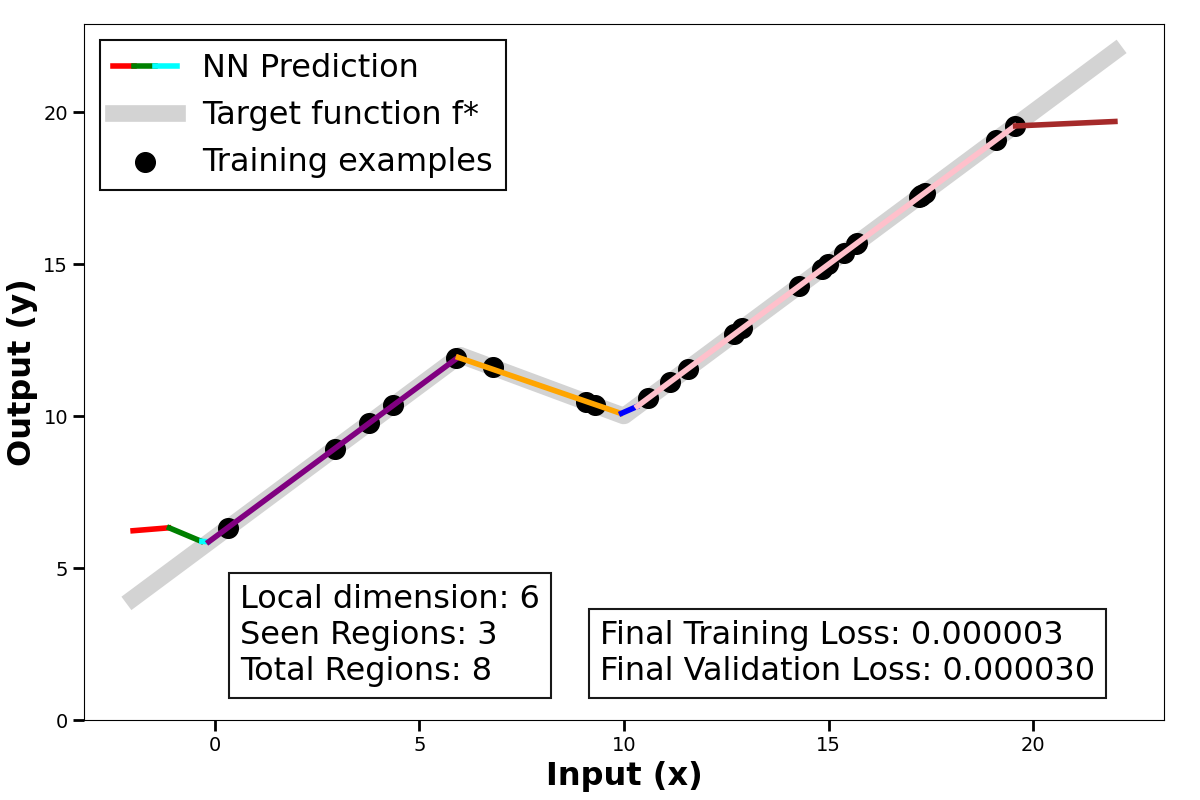}
            \caption{Final training loss: 3e-6}
        \end{subfigure}
         \begin{subfigure}{.5\textwidth}
            \centering
            \includegraphics[width=0.95\textwidth]{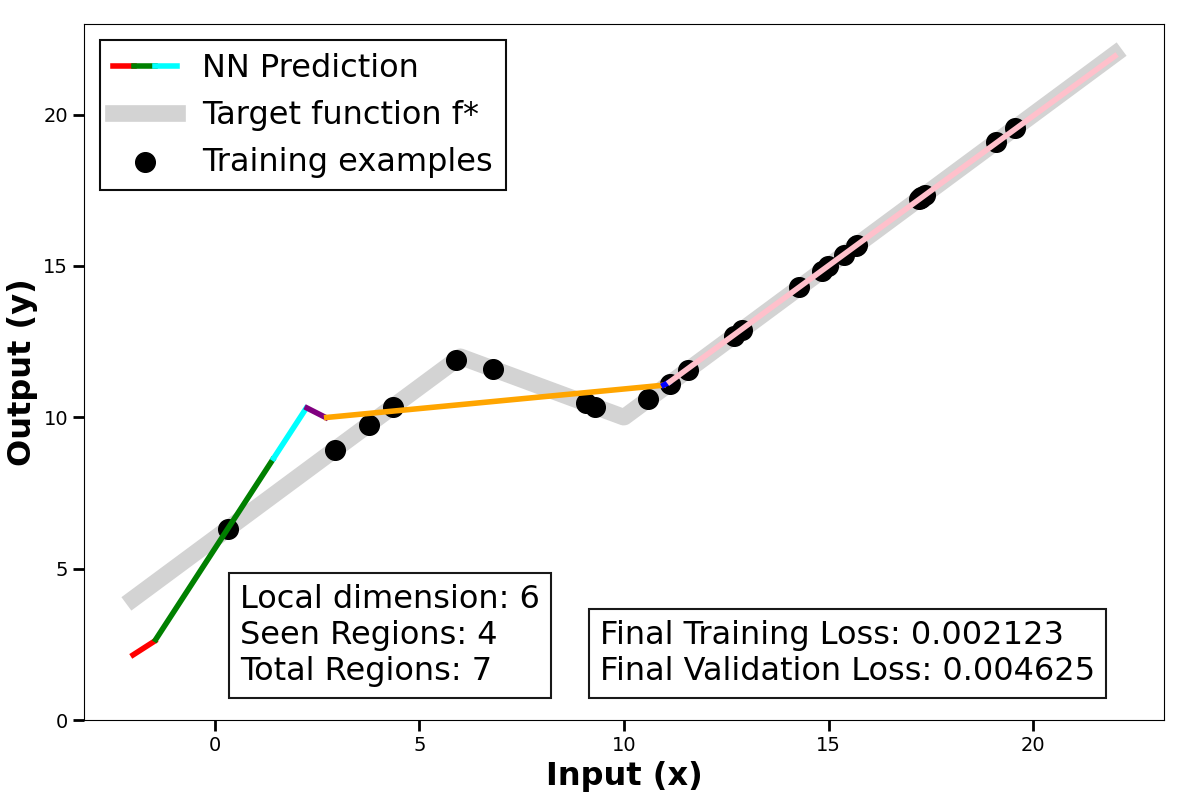}
            \caption{Final training loss: 2e-3}
        \end{subfigure}

         \begin{subfigure}{.5\textwidth}
            \centering
            \includegraphics[width=0.95\textwidth]{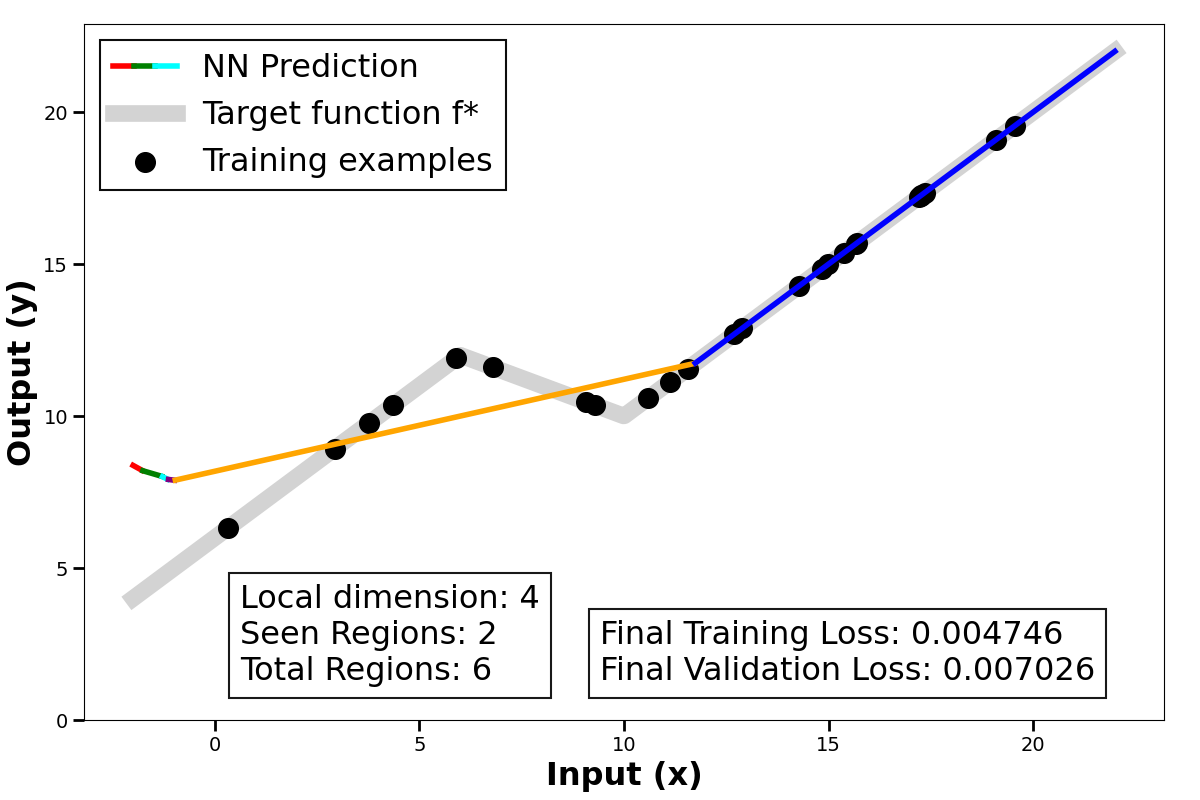}
            \caption{Final training loss: 4e-3}
        \end{subfigure}
         \begin{subfigure}{.5\textwidth}
            \centering
            \includegraphics[width=0.95\textwidth]{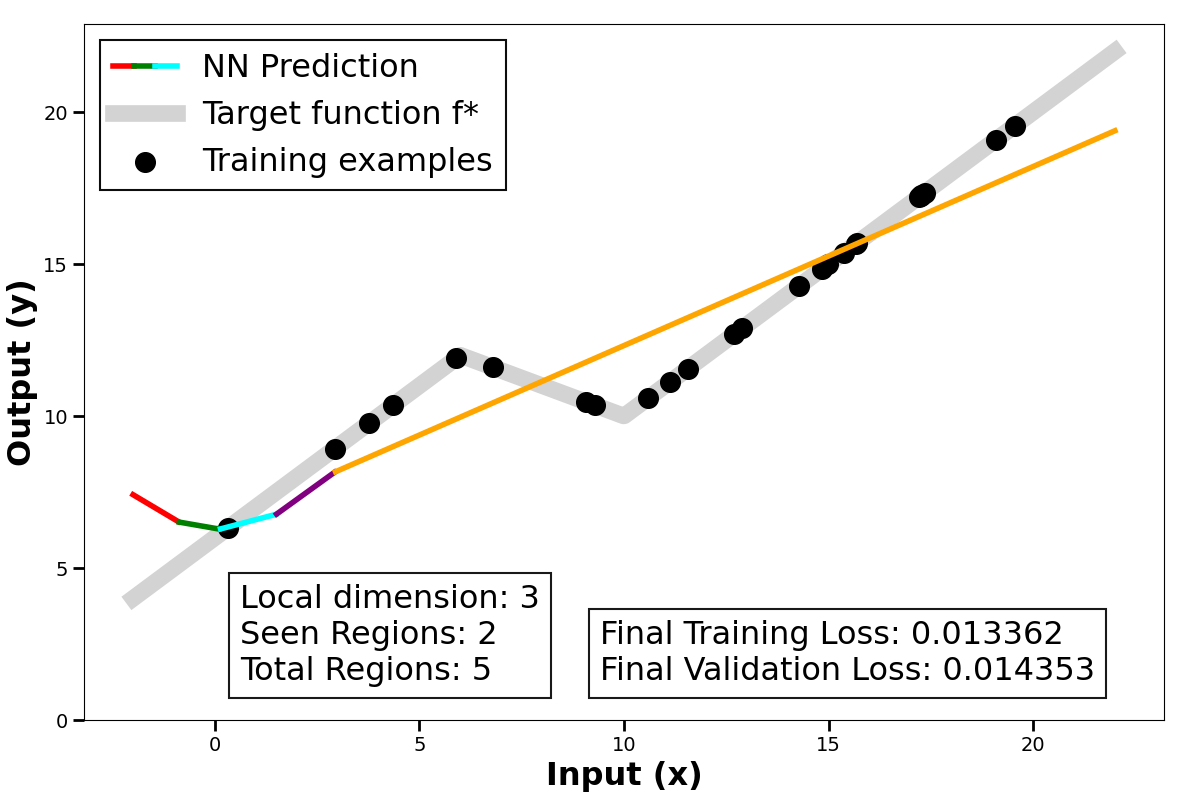}
            \caption{Final training loss: 1e-2}
        \end{subfigure}
        \caption{Examples of neural networks after training. Each subfigure corresponds to a different final training loss. The function computed by the network is plotted using multiple colors, each representing a distinct activation pattern (i.e., a linear region of the network).}
        \label{fig:examples-different-losses}
\end{figure}

To illustrate how geometry-induced regularization influences the trained network, we show in \Cref{fig:examples-different-losses} the target function \(f^*\) alongside four examples of networks after training. Each subfigure of \Cref{fig:examples-different-losses} represents a network achieving a specific training loss. 
The function computed by the network is plotted using multiple colors, each representing a distinct activation pattern (i.e., a linear region of the network).
From this figure, we observe that each linear region of the network closely approximates the linear regression of the subset of the dataset whose inputs share the corresponding activation pattern. In particular, when all these data points lie within a region where \(f^*\) is linear, the network accurately recovers \(f^*\) between the data points. These phenomena act as a form of regularization for the networks. The regularization arises when the networks’ linear regions are large and contain many examples, which occurs due to the geometry-induced regularization mechanism described in \Cref{regul-implicit-sec-1}, \Cref{regul-implicit-sec-2}, and \Cref{shallow-case-thm}.
We also note that some linear regions of the learned networks are not visited by \(X\); in these regions, we do not anticipate any clear connection between the network's behavior and the geometry-induced regularization studied in this article.

\begin{figure}[ht]
    \centering
    \includegraphics[width=0.7\textwidth]{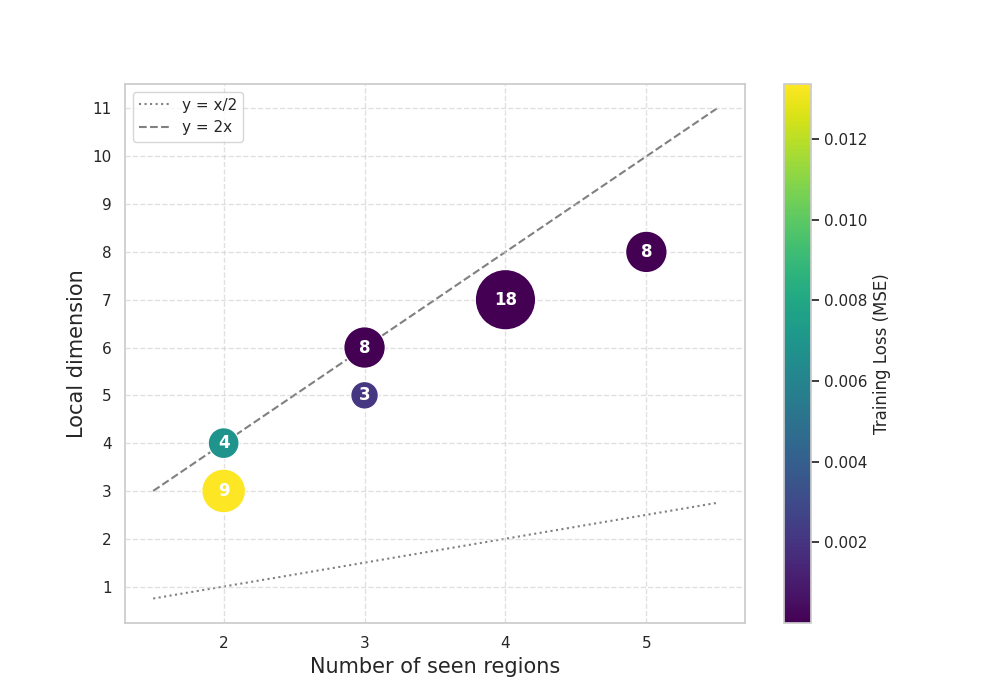}
    \caption{Local dimension versus number of seen regions for the optimized networks. The numbers indicate how many of the 50 runs produced the corresponding pair (seen regions, local dimension).}
    \label{fig:local-dim-vs-patterns}
\end{figure}

In Figure \ref{fig:local-dim-vs-patterns}, we plot, for each of the $50$ networks obtained in the experiment, the local dimension against the number of seen regions--in formula, the quantity $|\cA(X,\theta)|$ from \Cref{shallow-case-thm}. The size of the disks, and the numbers they contain, indicate how many of the 50 runs produced the corresponding pair (seen regions, local dimension). The color of the point indicates the value of the learning objective. We also plot the bounds $2|\cA(X,\theta)|$ and $\tfrac{1}{2}|\cA(X,\theta)|$ from \eqref{upper_bound_rank_shallow} of \Cref{shallow-case-thm}. We observe that the local dimension indeed remains within these bounds and is approximately proportional to the number of seen regions.


\section{How to Compute  \texorpdfstring{$\RK{\DT f_\theta(X)}$}{the Local Dimension}}\label{sec:numerical:computations}

In this section, we describe how one can efficiently compute $\RK{\DT f_\theta(X)}$ for a given $X$ and given $\theta$.


For a given $X\in\Xspace$ and a given $\theta\in\Par$, $\RK{\DT f_\theta(X)}$ is computed using the backpropagation and numerical linear algebra tools computing the rank of a matrix. To justify the computations, let us first recall the classical backpropagation algorithm for computing the gradients with respect to the parameters of the network, for a given loss $R: \RR^{N_L} \longrightarrow \RR$. We will then describe how to use the backpropagation to compute $\RK{\DT f_\theta(X)}$. We conclude with implementation recommendations.

For a given input $x \in \RR^{N_0}$, backpropagation computes the gradient $\nabla \cL(\theta)$ of the function $\theta \longmapsto \cL(\theta) = R(f_\theta(x))$. To do so, it first computes $f_\theta(x)$ and stores the intermediate pre-activation values $(y^\ell_{\theta})_v = \sum_{v' \in V_{\ell-1}} w_{v'\rightarrow v} (f_\theta^{\ell-1}(x))_{v'} + b_v $, for $\ell \in \lb 1, L \rb$ and $v\in V_\ell$. This is known as the \lq forward pass\rq. Then, backpropagation computes the vector of errors $\eta^L_\theta$ defined by
\begin{equation}\label{oeinrbpnibt}
\eta^L_\theta = \left(J\sigma_L (y_\theta^L)\right)^T\frac{\partial R}{\partial y} (f_\theta(x)), 
\end{equation}
where $\frac{\partial R}{\partial y}(f_\theta(x))\in\RR^{N_L}$ is the gradient of $y \longmapsto R(y)$, at the point $f_\theta(x)$, and $J\sigma_L (y^L_\theta)$ is the Jacobian matrix of $y^L \mapsto \sigma_L(y^L)$, at $y^L_\theta$.
This vector is then backpropagated, from $\ell=L$ to $\ell=1$ thanks to the equation
\begin{equation} \label{backpropagation} 
\forall v'\in V_{\ell-1}\qquad \left(\eta^{\ell-1}_{\theta}\right)_{v'} =   \sigma'\left(\left(y^{\ell-1}_{\theta}\right)_{v'}\right)  \sum_{v\in V_{\ell}}  w_{v'\rightarrow v} \left(\eta^\ell_{\theta}\right)_{v} 
\end{equation}
where $\sigma'(t) = 1$ if $t>0$ and $\sigma'(t) = 0$ if\footnote{Neural networks libraries such as \texttt{Tensorflow} set $\sigma'(0) = 0$ and we adopt this convention in this calculation. Due to numerical imprecision, we rarely have $(y^{\ell-1}_{\theta})_{v'}=0$ in practice. In the theoretical sections of this article, the situation $\sigma'(0)$ never occurs for the cases where $\DT f_\theta(X)$ is considered.} $t\leq 0$.
This allows to recursively obtain the error vectors $\eta^\ell_\theta \in \RR^{N_\ell}$, for all $\ell \in \lb 1 , L \rb$. We deduce the partial derivatives thanks to the formulas
\begin{equation*}
\forall \ell \in \lb 1 , L \rb, \forall v' \in V_{\ell-1}, \forall v \in V_\ell, \qquad \frac{\partial R(f_\theta(x))}{\partial w_{v' \rightarrow v}} = \sigma\left(\left(y^{\ell-1}_{\theta}\right)_{v'}\right) \left(\eta^\ell_{\theta}\right)_{v}
\end{equation*}
and
\begin{equation*} \forall \ell \in \lb 1, L \rb, \forall v \in V_\ell, \qquad \frac{\partial R(f_\theta(x))}{\partial b_v} = \left(\eta^\ell_{\theta}\right)_{v}. 
\end{equation*}
This allows computing the gradients for one example $x$. For a batch, the algorithm is repeated for each example $x^{(i)}$, and the average of the so obtained gradients is computed. 

Let us now make the connection between backpropagation and the computation of $\RK{\DT f_\theta(X)}$. Vectorizing both the input and output spaces of $\theta \longmapsto f_\theta(X)$, we first notice that $\RK{\DT f_\theta(X)} = \RK{J f_\theta(X)}$, where the Jacobian matrix $J f_\theta(X) \in\RR^{nN_L \times (|E|+|B|)}$ takes the form
\[
J f_\theta(X) = \left(\begin{array}{c} 
J f_\theta(x^{(1)}) \\
\vdots \\
J f_\theta(x^{(n)})
\end{array}\right)
\]
and, for all $i\in\lb 1, n \rb$, $J f_\theta(x^{(i)}) \in \RR^{N_L \times (|E|+|B|)}$ is the Jacobian matrix of $\theta \longmapsto f_\theta(x^{(i)})$. We construct the matrix $J f_\theta(X)$ by successively computing each of its rows. This is achieved by computing each row of $J f_\theta(x^{(i)})$ for all $i\in\lb 1, n \rb$, with the method described below. 

For a given $i\in\lb 1, n \rb$ and $v\in V_L$, the line corresponding to $v$ of $J f_\theta(x^{(i)})$  is indeed simply obtained as the transpose of $\nabla R_v(f_\theta(x^{(i)}))$ for the function $R_v: \RR^{N_L}\longrightarrow \RR$ defined by $R_v(y) = y_v$, for all $y\in \RR^{N_L}$. We indeed have $ R_v(f_{\theta'}(x^{(i)})) = f_{\theta'}(x^{(i)})_v$ for all $\theta'$. The gradient $\nabla R_v(f_\theta(x^{(i)}))$ is obtained using the backpropagation algorithm described above. Notice that when $\sigma_L$ is the identity, for a given $v\in V_L$, using the definition of $R_v$ and \eqref{oeinrbpnibt}, we always have $(\eta^L_\theta)_v = 1$ and $(\eta^L_\theta)_{v'} = 0$ for all $v'\neq v$. We need however to compute the forward pass in order to compute the vectors $y^\ell_\theta$, for $\ell\in\lb 0,L-1\rb$. Finally, once $J f_\theta(X)$ is computed its rank is obtained using standard linear algebra algorithms.

 Our implementation uses the existing automatic differentiation of \texttt{Tensorflow}. It is possible to call the method \texttt{GradientTape.gradients}, which computes $J f_\theta(x)$ for a single example $x$, and to repeat it for each example $x^{(i)}$. However, it is more efficient to use \texttt{GradientTape.jacobian} which allows to compute directly $J f_\theta(X)$. We do not report the details of the experiments here but we found even more efficient to cut $X$ in sub-batches and repeatedly call \texttt{GradientTape.jacobian}, when appropriately choosing the size of the sub-batches.

Once $J f_\theta(X)$ built, the value of $\RK{J f_\theta(X)}$ can be computed with the \texttt{np.linalg.rank} function of \texttt{Numpy}, or using the accelerated rank computation of \texttt{Pytorch} with a GPU, which improves the speed by some factors. Note that the limiting factor when computing $\RK{J f_\theta(X)}$ for large networks and/or $n$ large is the computation of the rank and not the construction of $J f_\theta(X)$.

The codes are available at \citep{code_calcul_rang}.


\section{Experiments on MNIST Dataset}\label{sec:experiments}

The experiments provide evidence that {\it geometry-induced regularization} occurs on the MNIST dataset. They further highlight that the regularization observed during training also manifests at inference time on the test set.

The setting of the experiments is described in Section \ref{expe-description}. In Section \ref{expe-w_varies}, we describe the results of an experiment in which we compute the \NAME{} as the number of parameters of the network grows. In Section \ref{expe-rank_diminu}, we compute the \NAME{} throughout the learning phase.  

The Python codes implementing the experiments described in this section are available at \citep{code_calcul_rang}.

\subsection{Experiments Description}\label{expe-description}

In the experiments of Sections \ref{expe-w_varies} and \ref{expe-rank_diminu}
, we evaluate the behavior of different complexity measures for the classification of a subpart of the MNIST data set. 

We consider a fully-connected feed-forward ReLU network of depth $L=4$, of widths $(N_0, N_1, N_2, N_3, N_4) = (784,w,w,w,10)$, for different values of $w\in\lb1,85\rb$. The tested values of $w$ depend on the experiment/section. The hidden layers ($1$, $2$, $3$) include a ReLU activation function. The last layer includes a soft-max activation function. We randomly extract a training sample $(\Xtr,\Ytr)$, containing $6~000$ images and a test sample $(\Xts,\Yts)$ containing $20~000$ images from MNIST.


For given $w$ and $(\Xtr,\Ytr)$, we tune the parameters of the network to minimize the cross-entropy. This is achieved using the Glorot uniform initialization for the weights while initializing the biases to $0$, and using the stochastic gradient descent \lq sgd\rq~as optimizer with a learning rate of $0.1$ and a batch size of $256$. The number of epochs depends on the experiment/section.

In the figures presenting the results of the experiments, we display the following quantities:
\begin{itemize}
\item Max rank: the maximal theoretically possible value of $\RK{\DT f_\theta(X)}$ for any sample $X$ and parameter $\theta$. It is equal to $|E|+|B| - N_1 - \dots - N_{L-1} = N_0 N_1 + N_1 N_2 + \dots + N_{L-1} N_L + N_L$ (see the bound provided by \citet{grigsby2022functional}, Theorem 7.1). With the architecture described above, for a given $w$, the Max rank is equal to $2 w^2 +794w + 10$. This is very close to the number of parameters $2 w^2 + 797 w + 10$. Furthermore, with the values of $w$ considered in the forthcoming experiments, the predominant term is $794w$.
\item Rank X\_train: It corresponds to $\RK{\DT f_\theta(\Xtr)}$, where $\Xtr$ is the training sample of size $6~000$ mentioned above. This quantity is the \NAME{}.
\item Rank X\_test: It corresponds to $\RK{\DT f_\theta(\Xts)}$, where $\Xts$  is the test sample of size $20~000$ introduced above. The motivation for considering this quantity is to demonstrate that the geometry-induced regularization put to evidence on the training set is sufficiently strong to influence the network’s regularity when measured on the test sample.
\item Train loss: the cross-entropy loss value, evaluated on the training sample at the end of training (resp. at the current epoch) in Sections \ref{expe-w_varies}
(resp. in Section \ref{expe-rank_diminu}).
\item Test error: the proportion of images of $\Xts$ that are misclassified by the network.
\item Train error: the proportion of images of $\Xtr$ that are misclassified by the network.
\end{itemize}
Note that the test set is bigger than the train set, in contrast to classical settings. Indeed, the test set serves two purposes here: it is classically used to compute the classification accuracy, but it is also meant to provide an estimation of the \NAME{} when computed on test sample.

\subsection{Behavior of the Local Dimensions as the Network Width Increases}\label{expe-w_varies}

In this experiment, we evaluate the \NAME s  when the width $w$ varies between $1$ and $85$. More precisely, we test all $w$ between $1$ and $9$, then all $w$ between $10$ and $18$ with an increment of $2$, and then all $w$ between $20$ and $85$ with an increment of $5$. Overall, the number of parameters of the network varies between $809$ and $82~205$.

The setting of the experiment is described in Section \ref{expe-description}. We optimize the network parameters during $1~000$ epochs. 

\begin{figure}
\begin{subfigure}{.5\textwidth}
\centering
    \includegraphics[width=.98\textwidth]{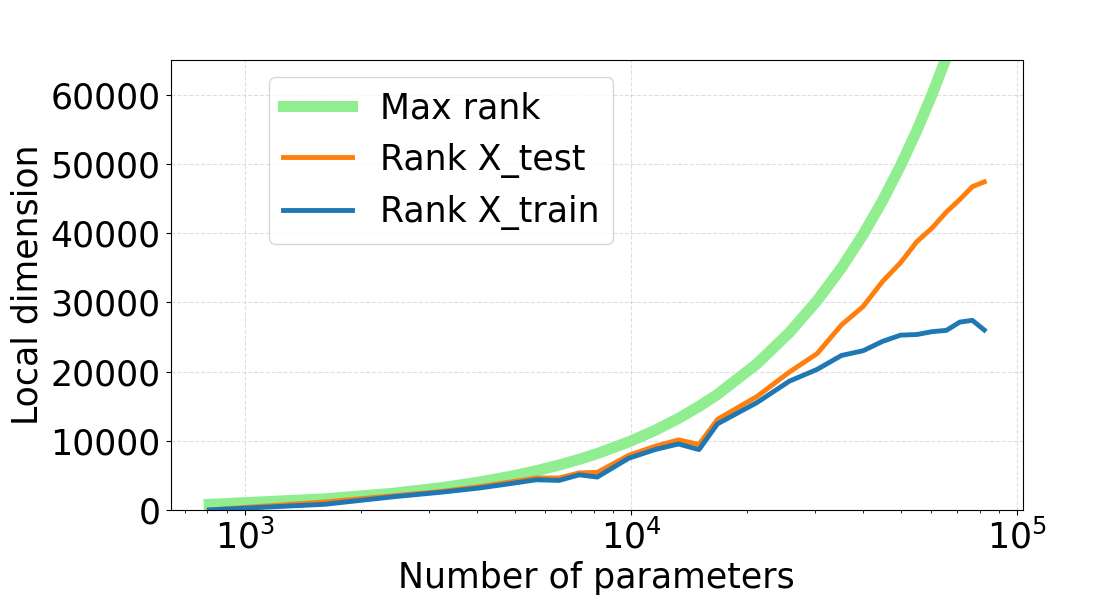}
    \caption{Local dimensions vs. the number of parameters}
\end{subfigure}
\begin{subfigure}{.5\textwidth}
\centering
        \includegraphics[width=.98\textwidth]{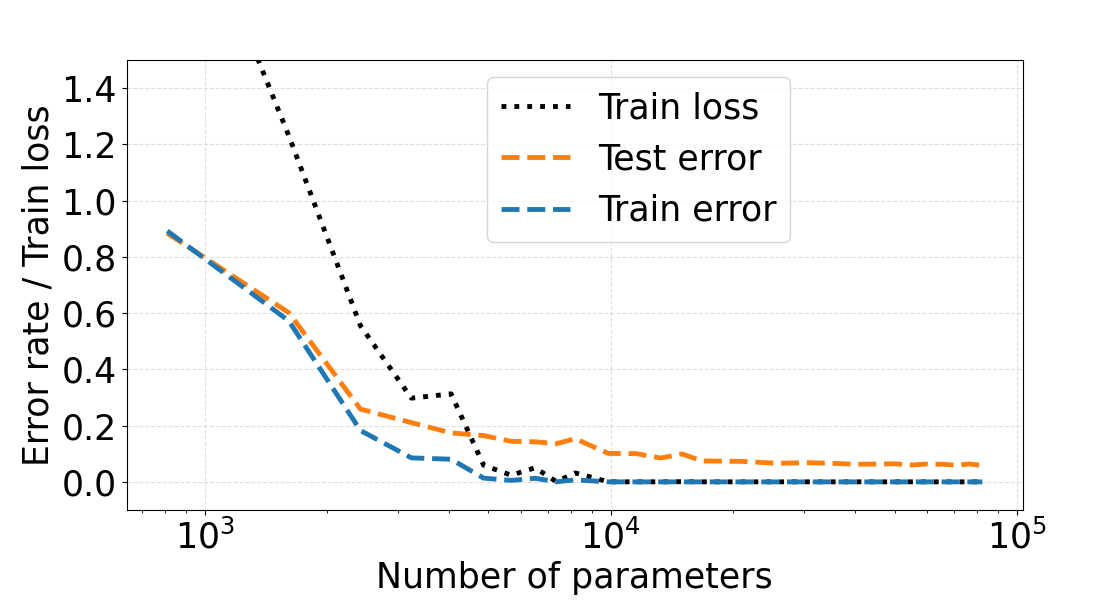}
        \caption{Loss and errors vs. the number of parameters}
\end{subfigure}
\caption{Behavior of different complexity measures as the size of the network increases.
} 
\label{Width-figure}
\end{figure}

The results of the experiment are in Figure \ref{Width-figure}. When increasing the number of parameters, the train loss, the train error and the test error decrease. For $w \geq 12$, i.e. when the number of parameters is superior or equal to $9~862$, the train error is equal to $0$: the network is able to fit perfectly the training images. However, the test error continues to decrease even after the train error reaches $0$: from $0.101$ when $w=12$ to $0.058$ when $w=85$.

The ranks $\RK{\DT f_\theta(\Xtr)}$ and $\RK{\DT f_\theta(\Xts)}$ are nearly equal when the number of parameters is smaller than $21\,185$ ($w = 25$). Given the size of the test sample, this seems to indicate that the network is regularized on the whole support of the input distribution. Also, adding MNIST images to $\Xtr$ would not increase $\RK{\DT f_\theta(\Xtr)}$. Since $\RK{\DT f_\theta(\Xtr)}$ and $\RK{\DT f_\theta(\Xts)}$ are strictly less than Max rank, according to \cite{bona2022local}, this also shows that $\theta$ is not identifiable from $\Xtr$ nor $\Xts$. This suggests that, for these networks, using only samples of the input distribution does not allow to identify the parameters of a network. Asserting whether it is possible to identify them locally using examples outside the input distribution remains an open question.

Then, for more than $21\,185$ parameters (i.e. $w \geq 25$), a gap appears between the two ranks $\RK{\DT f_\theta(\Xtr)}$ and $\RK{\DT f_\theta(\Xts)}$, which in particular implies that $\RK{\DT f_\theta(\Xtr)}$ is smaller than the \NAME{} over the distribution of the inputs. Furthermore, while both ranks are not far from the maximum rank for $w < 25$, this other gap also increases with the number of parameters, to the point where the shapes of the curves seem to diverge: while the maximum rank is nearly proportional to the number of parameters, the ranks $\RK{\DT f_\theta(\Xtr)}$ and $\RK{\DT f_\theta(\Xts)}$ seem to increase less and less with the number of parameters. This shows that the geometry-induced regularization occurs and is more significant for larger networks. As the curve $\RK{\DT f_\theta(\Xts)}$ indicates, the regularization on the training sample also applies to the test sample, and thus—given the size of the test sample—extends to nearly the entire support of the input distribution.

\subsection{Behavior of the Local Dimensions During Training}\label{expe-rank_diminu}

\begin{figure}
\begin{subfigure}{.5\textwidth}
\centering
    \includegraphics[width=.98\textwidth]{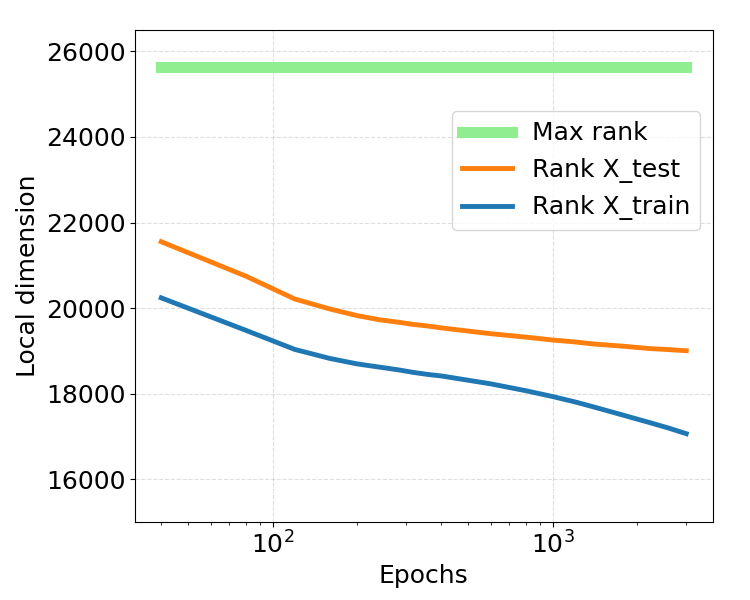}
    \caption{Local dimensions vs. epoch number}
\end{subfigure}
\begin{subfigure}{.5\textwidth}
\centering
        \includegraphics[width=.98\textwidth]{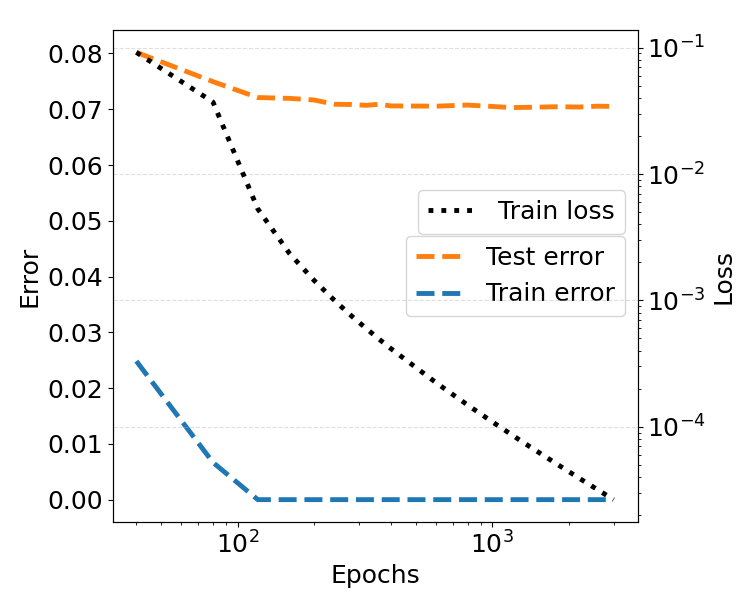}
        \caption{Loss and errors vs. epoch number}
\end{subfigure}
\caption{Behavior of different complexity measures during training.
} 
\label{epoch-figure}
\end{figure}

We consider the same setting described in Section \ref{expe-description}, with $w=30$. The quantities plotted in the previous experiment (see Figure \ref{Width-figure}) are computed after the training is done. In contrast, here, we fix a total number of epochs to $3~000$ and we compute the same quantities during training, throughout the epochs.

More precisely, we study the quantities Max rank, Rank X\_test, Rank X\_train, Train loss, Test error and Train error, as described in Section \ref{expe-description}. They are computed at the epochs $\{40, 80, 120, 160, 200, 240, 280, 320, 360, 400\}\cup \{ 600, 800, 1~000, 1~200, 1~400\} \cup \{ 1~800, 2~200, 2~600, 3~000 \}$. We plot these quantities in Figure \ref{epoch-figure}.

We plot the train loss (on the left), which decreases throughout the epochs, and the train error (on the right), which decreases and reaches $0$ at epoch $120$, after which all training images are always correctly classified. The test error decreases the most in the first 80 epochs, after which it continues to decrease, although at a slower pace.

We observe that the value of $\RK{\DT f_\theta(\Xtr)}$ consistently decreases during training. 
The value of $\RK{\DT f_\theta(\Xts)}$ also decreases, with a more gentle slope. This indicates that the geometry-induced regularization occurring on the training sample is \lq communicated\rq~to the test sample.


\section{Conclusion and Perspectives}

In this article, we study the local geometry of deep ReLU neural networks. We show that the image of a sample $X$ under such networks, for a fixed architecture, forms a set whose local dimension\footnote{Referred to as the batch functional dimension by \cite{grigsby2022functional}.} may vary. The parameter space is partitioned into regions within which the \NAME~remains constant. This local dimension is invariant under the natural symmetries of ReLU networks, namely positive rescalings and neuron permutations. Our analysis reveals that the geometry of deep ReLU networks gives rise to a regularization phenomenon, where the regularity criterion is essentially captured by the local dimension. We establish connections between the local dimension, the flatness of minima, and saddle-to-saddle dynamics. For shallow ReLU networks, we further show that the local dimension is directly related to the number of linear pieces perceived by the sample $X$, thereby shedding light on the effect of regularization. Finally, we investigate the practical computation of the local dimension and present experiments on the MNIST dataset that highlight the role of geometry-induced regularization.

This work opens several perspectives for deep learning theory. A formal connection between geometry-induced regularization and generalization guarantees is still lacking; establishing such a link could provide a theoretical foundation for the remarkable performance of deep learning. From a practical standpoint, it would be valuable to investigate geometry-induced regularization empirically on higher-dimensional datasets. Developing algorithms with lower computational complexity for estimating local dimensions is another important direction. In particular, since we have shown that the \NAME{} is almost surely determined by activation patterns, it would be natural to compute it directly from activation patterns rather than from gradients. Furthermore, designing a test to evaluate the notion of flatness of minima introduced in this article would provide additional insights. Finally, extending this geometric analysis to other network architectures remains an interesting avenue for future research.



\acks{This work has benefited from the AI Interdisciplinary Institute ANITI. ANITI is funded by the French ``Investing for the Future – PIA3” program under the Grant agreement n°ANR-19-PI3A-0004.

The authors gratefully acknowledge the support of the DEEL project\footnote{\url{https://www.deel.ai/}}. They would like to thank Daniele Cannarsa and the anonymous reviewers for their contributions.}



\newpage

\appendix

\section{Proofs of Section \ref{section:rank:properties}}

This appendix is devoted to the proofs of Section \ref{section:rank:properties}. In Section \ref{theorem:constant:rank-proof}, we prove Theorem \ref{theorem:constant:rank}, and in Section \ref{app:proof:rank:invariant:rescaling} we prove Proposition \ref{prop:rank:invariant:rescaling}.

 \subsection{Proof of Theorem \ref{theorem:constant:rank}}\label{theorem:constant:rank-proof}

For any $x\in\RR^{N_0}$, $\ell \in \lb 1 , L-1 \rb$, and $v \in V_{\ell}$, let us define the set of parameters for which the activation of neuron $v$ changes:
\begin{equation}\label{def_Txv}
\cT^x_v = \Bigg \{ 
\theta \in \RR^E \times \RR^B
~|~
\sum_{v' \in V_{\ell-1}} 
w_{v' \to v}
\left( f_\theta^{\ell-1}(x) \right)_{v'}
+
b_v
=0
\Bigg \},
\end{equation}
and let
\begin{equation}\label{def_Tx}
    \cT^x
=
\cup_{\ell=1}^{L-1}
\cup_{v \in V_{\ell}} \cT^x_v.
\end{equation}

 \begin{lem} \label{lemma:valeurs:de:a:et:polynome}
For any given $x \in \RR^{N_0}$, the three following items hold:

\begin{itemize}
    \item  The function $ \Par \ni \theta \longmapsto a(x , \theta)$  exactly takes $2^{N_1 + \dots + N_{L-1}}$ distinct values. 
    \item For any $\delta \in \{0,1 \}^{N_1+\cdots + N_{L-1}} $, we write
    \begin{equation}\label{A_delta_x}
    A^x_\delta = \{  \theta \in \RR^E \times \RR^B ~|~ a(x,\theta) = \delta \}.
    \end{equation}
    Then: On $A^x_\delta$, the function $\theta \longmapsto f_{\theta}(x)$ is polynomial of degree $L$, when $\sigma_L=Id$, and it is analytic otherwise.
    
    \item The set $\cT^x$ is closed and has Lebesgue measure zero and $\cup_{\delta \in \{0,1 \}^{N_1+\cdots + N_{L-1}}} \partial A^x_\delta = \cT^x$. 
    Therefore, for any $\delta \in \{0,1 \}^{N_1+\cdots + N_{L-1}}$, $\partial A^x_\delta$ is a closed set with Lebesgue measure zero in $\RR^E \times \RR^B$.
\end{itemize}

 \end{lem}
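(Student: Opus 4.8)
The plan is to handle the three items in turn, since the first two are essentially bookkeeping and the third carries the real content. For the first item, the codomain $\{0,1\}^{N_1+\cdots+N_{L-1}}$ gives the upper bound $2^{N_1+\cdots+N_{L-1}}$ at once, so it suffices to show every pattern is attained. I would do this with the explicit choice $w_{v'\to v}=0$ for every edge: by \eqref{deff_l} the pre-activation of each hidden neuron $v$ then reduces to its bias $b_v$, so setting $b_v>0$ where $\delta_v=1$ and $b_v<0$ where $\delta_v=0$ realizes any prescribed $\delta$. Hence all $2^{N_1+\cdots+N_{L-1}}$ patterns occur.

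For the second item, fix $\delta$ and observe that on $A^x_\delta$ the indicators $a_v(x,\theta)$ are frozen to the constants $\delta_v$, so \eqref{a=relu} becomes $(f_\theta^\ell(x))_v = \delta_v\bigl(\sum_{v'}w_{v'\to v}(f_\theta^{\ell-1}(x))_{v'}+b_v\bigr)$. I would then induct on $\ell$: $f_\theta^0(x)=x$ has degree $0$, and if each $(f_\theta^{\ell-1}(x))_{v'}$ is a polynomial of degree $\ell-1$ in $\theta$, multiplication by the degree-one factor $w_{v'\to v}$ raises the degree to $\ell$, which the constant $\delta_v$ preserves. At the output $y_\theta^L(x)$ has degree $L$; if $\sigma_L=\mathrm{Id}$ this is the asserted degree-$L$ polynomial, and otherwise $f_\theta(x)=\sigma_L(y_\theta^L(x))$ is analytic as the composition of an analytic map with a polynomial.

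The third item is where the work lies. Writing $g_v(\theta)$ for the pre-activation appearing in \eqref{def_Txv}, continuity of $\theta\mapsto f_\theta^{\ell-1}(x)$ makes each $g_v$ continuous, so $\cT^x_v=g_v^{-1}(\{0\})$ is closed and $\cT^x$, a finite union, is closed. For the measure-zero claim the obstruction is precisely that $g_v$ is only piecewise polynomial, so I cannot simply invoke ``a nonzero polynomial has null zero set.'' The key observation is that $g_v$ depends on the single coordinate $b_v$ affinely and with coefficient exactly $1$, while its remaining summands depend only on the parameters of layers $1,\dots,\ell-1$ and hence not on $b_v$. Thus, freezing all coordinates except $b_v$, the equation $g_v=0$ has a unique solution in $b_v$, a singleton of one-dimensional measure zero; Tonelli's theorem then yields $\lambda(\cT^x_v)=0$, and the finite union $\cT^x$ inherits this.

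Finally, for $\bigcup_\delta\partial A^x_\delta=\cT^x$ I would prove both inclusions from the same two facts. If $\theta\notin\cT^x$ then all $g_v(\theta)\neq0$; by continuity each $g_v$ keeps a strict sign near $\theta$, so the pattern $a(x,\cdot)$ is locally constant and equals $\delta:=a(x,\theta)$, giving $\theta\in\Inter(A^x_\delta)$, and since the $A^x_{\delta'}$ are disjoint $\theta$ lies in no boundary; this is $\bigcup_\delta\partial A^x_\delta\subseteq\cT^x$. Conversely, if $g_{v_0}(\theta)=0$ for some $v_0$, then $\theta\in A^x_\delta$ with $\delta=a(x,\theta)$, while decreasing $b_{v_0}$ by any $\varepsilon>0$ leaves the other summands of $g_{v_0}$ unchanged and forces $g_{v_0}<0$, flipping $a_{v_0}$ from $1$ to $0$; these perturbed parameters lie outside $A^x_\delta$ and converge to $\theta$, so $\theta\in\partial A^x_\delta$. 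The closing consequence that each $\partial A^x_\delta$ is closed and null then follows from $\partial A^x_\delta\subseteq\cT^x$. The main obstacle throughout is handling the piecewise-polynomial $g_v$: isolating the unit-coefficient dependence on $b_v$ is exactly what makes both the Tonelli argument and the boundary perturbation go through cleanly.
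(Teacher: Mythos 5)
Your proof is correct, and items~1 and~2 follow the paper's own argument essentially verbatim (explicit zero-weight parameters for surjectivity of the pattern map, and induction on the layer index for polynomiality/analyticity). Where you genuinely diverge is in the third item. The paper proves $\lambda(\cT^x_v)=0$ by decomposing $\Par$ into the cells $A^x_\delta$, invoking the induction hypothesis to see that the pre-activation $g_v$ restricted to each cell agrees with a polynomial that is non-constant because $b_v$ occurs in a single degree-one monomial, and then using that the zero set of a non-constant polynomial is null; you instead slice along the $b_v$ coordinate and apply Tonelli, using that $g_v(\theta)=H(\tilde\theta)+b_v$ has exactly one root in $b_v$ for each frozen $\tilde\theta$. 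Your route is more elementary: it needs only continuity and measurability of $\cT^x_v$ (which you get from closedness), bypasses the case split over $2^{N_1+\cdots+N_{L-1}}$ activation patterns, and would survive replacing ReLU by any continuous activation in the hidden layers; the paper's route reuses machinery ($H_{\ell-1}$) it needs anyway for item~2 and for later arguments about ranks of analytic maps. For the inclusion $\bigcup_\delta\partial A^x_\delta\subseteq\cT^x$ you also argue by contrapositive (all $g_v(\theta)\neq 0$ forces local constancy of the pattern, hence $\theta\in\Inter(A^x_{a(x,\theta)})$ and $\theta$ avoids every boundary), which is cleaner than the paper's two-sequence argument but logically equivalent; the reverse inclusion via perturbing $b_{v_0}$ downward is identical to the paper's. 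One small imprecision worth fixing: the remaining summands of $g_v$ are $\sum_{v'}w_{v'\to v}(f_\theta^{\ell-1}(x))_{v'}$, which do depend on the layer-$\ell$ weights $w_{v'\to v}$, not only on the parameters of layers $1,\dots,\ell-1$; what matters, and what you correctly use, is only that they are independent of $b_v$.
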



\begin{proof}[Proof of Lemma \ref{lemma:valeurs:de:a:et:polynome}]

Throughout the proof, we consider a fixed $x\in\RR^{N_0}$.

We first prove the first item, i.e. we prove that all activation patterns are reached. The set $\{0,1\}^{N_1 + \cdots + N_{L-1}} $ is finite and its cardinal is $2^{N_1 + \dots + N_{L-1}}$. Observe that for any $\delta \in \{0,1\}^{N_1 + \cdots + N_{L-1}}$, by taking $\theta \in \Par$ such that $w_{v \rightarrow v'} = 0$ for any $(v \rightarrow v') \in E$, $b_v = 0$ for $v \in V_L$ and $b_v = (-1)^{1+\delta_v}$ for any $v \in V_1 \cup \dots \cup V_{L-1} $, then, for any $v \in V_1 \cup \dots \cup V_{L-1}$, we have $a_v(x, \theta) = \delta_v$, i.e. $a(x, \theta) = \delta$.

In order to prove the second item, i.e. that the function $ \theta \longmapsto f_{\theta}(x)$ is polynomial of degree $L$ on $A^x_\delta$, when $\sigma_L=Id$, and it is analytic otherwise, we remind the definition of $f_\theta^\ell$, in \eqref{deff_l}, and we define for all $\theta$
\[
a_{\leq \ell}(x,\theta)
=
\begin{cases}
(a_v(x , \theta))_{v \in V_1 \cup \cdots \cup V_{\ell}}
& ~ \text{if} ~ \ell \geq 1, 
\\
1 & ~ \text{if} ~ \ell = 0.
\end{cases}
\]
We prove  by induction that the assertion
\[
H_{\ell} 
: \left\{ \begin{array}{l} 
\forall D \subseteq \RR^E \times \RR^B, ~ ~
\text{if $\theta \longmapsto a_{\leq \ell}(x,\theta)$
is constant on $D$, then } 
\\ 
\text{
$\theta \longmapsto f_{\theta}^{\ell}(x)$ is polynomial of degree $\ell$ on $D$
}
\end{array}\right.
\]
holds, for all $\ell \in \lb 0 , L-1 \rb$.

The assertion $H_0$ indeed holds because $f_\theta^0(x) = x$ is polynomial in $\theta$ (of degree $0$) on any subset of $\RR^E \times \RR^B$.
Assume now that $H_{\ell-1}$ holds, for some $\ell \in \lb 1 , L-1 \rb$, and let us prove $H_{\ell}$. 

Let $D \subseteq \RR^E \times \RR^B$ such that $\theta \longmapsto a_{\leq \ell} (x , \theta)$ is constant on $D$. For $\theta \in D$ and $v \in V_{\ell}$, using \eqref{a=relu}, we have 
\begin{align*}
    \left(  f_{\theta}^{\ell}(x) \right)_v 
    = &
a_v(x , \theta)
    \left( 
\sum_{v' \in V_{\ell-1}} 
w_{v' \to v} 
\left( 
f_{\theta}^{\ell - 1}(x) 
\right)_{v'}
+ b_v
    \right).
\end{align*}

The quantity $a_{\leq \ell-1}( x , \theta )$ is constant on $D$ and thus from $H_{\ell-1}$, for all $v'\in V_{\ell-1}$,  $\theta \longmapsto (  f_{\theta}^{\ell - 1} (x) )_{v'}$ is a polynomial function of $\theta$, of degree $\ell-1$, on $D$. Since $a_v(x , \theta)$ is constant on $D$, $ \theta \longmapsto   \left(  f_{\theta}^{\ell}(x) \right)_v $ is a polynomial function of $\theta$, of degree $\ell$. This concludes the proof  by induction that $H_{\ell}$ holds for all $ \ell \in \lb 0, L-1 \rb $. 
    
If we recall from \eqref{deff_l} that $y^L_\theta(x) \in \RR^{N_L}$ is the vector satisfying, for all $v \in V_L$,
\[
(y_\theta^L(x))_v
=
\sum_{v' \in V_{L-1}} 
w_{v' \to v}
(f_\theta^{L-1}(x))_{v'} 
+ b_v,
\]
we have
\[ f_\theta(x) = \sigma_L(y_\theta^L(x)).\]
We recall the definition of $A^x_\delta$, for $\delta \in \{ 0,1 \}^{N_1 + \dots + N_{L-1}}$, in \eqref{A_delta_x}. For $\delta \in \{ 0,1 \}^{N_1 + \dots + N_{L-1}}$, $a_{\leq L-1}(x , \theta) = a(x , \theta)$ is constant on $A^x_\delta$ and thus from $H_{L-1}$, $\theta \longmapsto f_\theta^{L-1}(x)$ is polynomial, of degree $L-1$, on $A^x_\delta$. As a consequence, $\theta \longmapsto y^L_\theta(x)$ is polynomial, of degree $L$, on $A^x_\delta$. When $\sigma_L\neq Id$, $\sigma_L$ is analytic, and $\theta\longmapsto f_\theta(x)$ is a composition of analytic functions and is analytic on $A^x_\delta$. This proves the second item of Lemma \ref{lemma:valeurs:de:a:et:polynome}.

Let us now show the third item, which states that $\cT^x$ has Lebesgue measure zero.
For that, let us show that for all $\ell \in \lb 1, L-1 \rb$ and $v \in V_{\ell}$, $\cT^x_v$ has Lebesgue measure zero. To do so, since $\cup_\delta A^x_\delta = \RR^E \times \RR^B$, we consider $\ell \in \lb 1, L-1 \rb$ and $v \in V_{\ell}$, and prove that, for all $\delta \in \{ 0,1 \}^{N_1 + \dots + N_{L-1}}$, $\cT^x_v \cap A^x_\delta$
has Lebesgue measure zero.
For $\delta \in \{ 0,1 \}^{N_1 + \dots + N_{L-1}}$, $a_{\leq \ell-1}(x , \theta)$ is constant on $A^x_\delta$ and thus from $H_{\ell-1}$, $\theta \longmapsto f_\theta^{\ell-1}(x)$ is a polynomial function of $\theta$ on $A^x_\delta$ and thus $\sum_{v' \in V_{\ell-1}} 
w_{v' \to v}
\left( f_\theta^{\ell-1}(x) \right)_{v'}
+
b_v$ also is. Since the variable $b_v$ is not present in the expression of $ f_\theta^{\ell-1}(x)$, it only appears in a single monomial of degree and coefficient $1$ of $\sum_{v' \in V_{\ell-1}} 
w_{v' \to v}
\left( f_\theta^{\ell-1}(x) \right)_{v'}
+
b_v$. The latter polynomial function is therefore non-constant. Hence the set  $\cT^x_v \cap A^x_\delta$, constituted by the zeros of this polynomial function, has Lebesgue measure zero. Since $ \cup_\delta A^x_\delta =\RR^E \times \RR^B$, we finally conclude that, for any $\ell \in \lb 1 , L-1 \rb$ and $v \in V_{\ell}$, $\cT^x_v$ has Lebesgue measure zero.

The set 
\[
\cT^x 
=
\cup_{\ell=1}^{L-1}
\cup_{v \in V_{\ell}} \cT^x_v
\]
is thus also of Lebesgue measure zero.

Let us now prove the set equality:
\begin{equation}\label{expression:of:T}
\bigcup_\delta \partial A^x_\delta = \cT^x . 
\end{equation}

We  first show the inclusion $\bigcup_\delta \partial A^x_\delta \subseteq \cT^x$. Consider $\delta \in \{ 0,1 \}^{N_1 + \dots + N_{L-1}}$ and let us now show that $\partial A^x_\delta \subseteq \cT^x$. To do so, consider $\theta \in \partial A^x_\delta $. Since  $\theta \not\in \Inter(A^x_\delta)$ and $\cup_{\delta'} A^x_{\delta'} = \RR^E \times \RR^B$, for any $\varepsilon$ there exists $\delta_\varepsilon \neq \delta$ such that $B(\theta, \varepsilon) \cap A_{\delta_\varepsilon} \neq \emptyset$. Since the set of all possible $\delta_\varepsilon$ is finite, we are sure that there exists $\delta' \neq \delta$ such that $\theta \in \overline{A}_{\delta'}$. Let $\ell \in \lb 1 , L-1 \rb$ and $v \in V_{\ell}$ such that $\delta_v \neq \delta'_v$. We assume without loss of generality that $\delta_v= 0$. The proof  is indeed similar when $\delta_v= 1$. There exists $(\theta_n)_{n\in\NN} \in (A^x_\delta)^{\NN}$ such that $\theta_n \to \theta$ as $n \to \infty$ and there exists $(\theta'_n)_{n\in\NN}  \in (A^x_{\delta'})^{\NN}$ such that $\theta'_n \to \theta$ as $n \to \infty$. We have $a_v(x , \theta_n) = 0$ and $a_v(x , \theta'_n) = 1$ for all $n$.

Using that $\theta \longmapsto \sum_{v' \in V_{\ell-1}} 
w_{v' \to v}
\left( f_\theta^{\ell-1}(x) \right)_{v'}
+
b_v$ is continuous and taking the limit of this function at $\theta_n$, as $n$ goes to infinity, we obtain that $\sum_{v' \in V_{\ell-1}} 
w_{v' \to v}
\left( f_\theta^{\ell-1}(x) \right)_{v'}
+
b_v \leq 0 $. Reasoning similarly with the sequence $ (\theta'_n)_{n\in\NN}$ we obtain the reverse inequality and conclude that $\sum_{v' \in V_{\ell-1}} 
w_{v' \to v}
\left( f_\theta^{\ell-1}(x) \right)_{v'}
+
b_v = 0$. This shows that $\theta \in \cT^x_v \subseteq \cT^x$. This finishes the proof of $\partial A^x_\delta \subseteq \cT^x$. 

Let us now show the reciprocal inclusion $ \cT^x \subseteq \bigcup_\delta \partial A^x_\delta$. Indeed, let $\theta \in \cT^x$. There exist $\ell \in \lb 1,L-1 \rb$ and $v \in V_\ell$ such that $\theta \in \cT^x_v$. There also exists $\delta \in \{ 0,1 \}^{N_1 + \dots + N_{L-1}}$ such that $\theta \in A^x_\delta.$ In particular, since $\sum_{v' \in V_{\ell-1}} 
w_{v' \to v}
\left( f_\theta^{\ell-1}(x) \right)_{v'}
+
b_v
=0$, we have $\delta_v = a_v(x, \theta) = 1$. For any $\varepsilon > 0$, by replacing $b_v$ by $b_v - \varepsilon$, we obtain a $\theta_\varepsilon$ satisfying $\| \theta - \theta_\varepsilon \| \leq \varepsilon$ and $a_v(x, \theta_\varepsilon) = 0 \neq \delta_v$, which shows $ \theta_\varepsilon \not\in A^x_\delta$. This shows $\theta \in \partial A^x_\delta \subseteq \bigcup_\delta \partial A^x_\delta$. This shows the desired inclusion, and thus the equality \eqref{expression:of:T}.

 For all $\delta \in \{ 0,1 \}^{N_1 + \dots + N_{L-1}}$, $\partial A^x_\delta$ is closed by definition of a boundary. Therefore $ \cT^x = \bigcup_\delta \partial A^x_\delta$ is also closed. Also, since $ \cT^x$ has been shown to have Lebesgue measure zero, $\partial A^x_\delta\subset \cT^x$ has Lebesgue measure zero for all $\delta$.
 
 This concludes the proof of  Lemma \ref{lemma:valeurs:de:a:et:polynome}.
\end{proof}

We state and prove another lemma before proving \Cref{theorem:constant:rank}. The lemma resembles \Cref{theorem:constant:rank} but does not include the statements on $\RK{\DT f_\theta(X)}$. 

For $n \in \NN^*$ and $X\in\RR^{N_0\times n}$, we define
\begin{equation}\label{def-ctX}
   \cT^X = \cup_{i=1}^n  \cT^{x^{(i)}},
\end{equation}

where $\cT^x$ is defined in \eqref{def_Tx} and \eqref{def_Txv}.

\begin{lem} \label{lemma:valeurs:de:a:et:polynome:deux}

        For all $n \in \NN^*$, for all $X \in \Xspace$, the sets $\widetilde{\cU}^X_1,\ldots,\widetilde{\cU}^X_{{p_X}}$ defined in \eqref{def:u:j:tilde} are non-empty, open and disjoint, and they satisfy
        \begin{itemize}
            \item $\left( \cup_{j=1}^{{p_X}} \widetilde{\cU}^X_j \right)^c = \cT^X$, and in particular the complement $\left( \cup_{j=1}^{{p_X}} \widetilde{\cU}^X_j \right)^c$ is a closed set with Lebesgue measure zero;
            \item for all $j \in \lb 1, {p_X} \rb$, the function $\theta \longmapsto a(X,\theta)$ is constant on each $\widetilde{\cU}^X_j$ and takes ${p_X}$ distinct values on $\cup_{j=1}^{p_X} \widetilde{\cU}^X_j$;
            \item for all $j \in \lb 1,{p_X} \rb$, the mapping $\theta \longmapsto f_{\theta}(X)$ is polynomial of degree $L$ on $\widetilde{\cU}^X_j$, when $\sigma_L=Id$, and it is analytic otherwise. 
        \end{itemize}

\end{lem}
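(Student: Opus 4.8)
The plan is to reduce everything to the single-input Lemma~\ref{lemma:valeurs:de:a:et:polynome}, exploiting that the batch activation pattern is the tuple of per-column patterns. Concretely, for a batch pattern $\Delta$ whose $i$-th column is $\delta^{(i)}$, one has $\{\theta \mid a(X,\theta) = \Delta\} = \bigcap_{i=1}^n A^{x^{(i)}}_{\delta^{(i)}}$, where $A^x_\delta$ is the single-input level set of Lemma~\ref{lemma:valeurs:de:a:et:polynome}. The structural claims are then immediate: each $\widetilde{\cU}^X_j$ is open since it is defined as a topological interior in~\eqref{def:u:j:tilde}; the sets are disjoint because distinct activation patterns have disjoint level sets, hence disjoint interiors; and they are non-empty by the construction that discards the empty interiors. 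On each $\widetilde{\cU}^X_j$ the map $\theta \mapsto a(X,\theta)$ is constant equal to $\Delta^X_j$, and the $\Delta^X_j$ are pairwise distinct, which gives the second bullet.

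For the polynomial/analytic statement (third bullet), I would apply the second item of Lemma~\ref{lemma:valeurs:de:a:et:polynome} column by column. Since $a(X,\cdot)$ is constant on $\widetilde{\cU}^X_j$, so is each $a(x^{(i)},\cdot)$; hence each map $\theta \mapsto f_\theta(x^{(i)})$ is polynomial of degree $L$ (when $\sigma_L = \Id$) or analytic otherwise on $\widetilde{\cU}^X_j$, and $f_\theta(X)$, being the matrix with these columns, inherits the same property.

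The core of the proof is the complement identity $\bigl(\cup_{j=1}^{p_X} \widetilde{\cU}^X_j\bigr)^c = \cT^X$. For the inclusion $\cT^X \subseteq (\cdots)^c$, take $\theta \in \cT^X$; then some neuron $v$ in a hidden layer $\ell$ has a vanishing pre-activation at some input $x^{(i)}$, and exactly as in the reciprocal inclusion of Lemma~\ref{lemma:valeurs:de:a:et:polynome}, decreasing $b_v$ by $\varepsilon > 0$ flips $a_v(x^{(i)},\cdot)$ from $1$ to $0$ without affecting $f_\theta^{\ell-1}(x^{(i)})$; this produces $\theta_\varepsilon \to \theta$ with $a(X,\theta_\varepsilon) \neq a(X,\theta)$, so $\theta$ is not interior to its batch level set. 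For the reverse inclusion I would argue contrapositively: if $\theta \notin \cT^X$, then for each $i$ we have $\theta \notin \cT^{x^{(i)}} = \cup_\delta \partial A^{x^{(i)}}_\delta$ (third item of Lemma~\ref{lemma:valeurs:de:a:et:polynome}); since $\theta$ lies in the single-input level set $A^{x^{(i)}}_{\delta^{(i)}}$ with $\delta^{(i)} = a(x^{(i)},\theta)$ but not on its boundary, it lies in its interior, giving a ball $B(\theta,\varepsilon_i) \subseteq A^{x^{(i)}}_{\delta^{(i)}}$. Intersecting the finitely many balls yields $B(\theta,\min_i \varepsilon_i) \subseteq \bigcap_{i=1}^n A^{x^{(i)}}_{\delta^{(i)}} = \{a(X,\cdot) = a(X,\theta)\}$, so $\theta$ belongs to some $\widetilde{\cU}^X_j$.

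Finally, since $\cT^X = \cup_{i=1}^n \cT^{x^{(i)}}$ is a finite union of sets that are closed and of Lebesgue measure zero by Lemma~\ref{lemma:valeurs:de:a:et:polynome}, it is itself closed with measure zero, which completes the complement statement. I expect the reverse inclusion of the complement identity to be the only delicate point: the passage from \emph{$\theta$ avoids every activation-switching hypersurface} to \emph{$\theta$ is interior to its batch level set} relies on the finite-intersection argument together with the single-input characterization of $\cT^x$ as the union of the boundaries $\partial A^x_\delta$; everything else is bookkeeping.
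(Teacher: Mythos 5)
Your proposal is correct and follows essentially the same route as the paper: both reduce to the single-input Lemma~\ref{lemma:valeurs:de:a:et:polynome}, use the characterization $\cT^{x}=\cup_\delta \partial A^{x}_\delta$ to prove the complement identity, and obtain the polynomial/analytic statement column by column. The only difference is cosmetic: for the inclusion $\bigl(\cup_j \widetilde{\cU}^X_j\bigr)^c \subseteq \cT^X$ you argue the contrapositive via a finite intersection of balls $B(\theta,\varepsilon_i)\subseteq A^{x^{(i)}}_{\delta^{(i)}}$, whereas the paper argues directly with a sequence and a subsequence extraction; both are valid and rest on the same single-input facts.
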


\begin{proof}[Proof of Lemma \ref{lemma:valeurs:de:a:et:polynome:deux}]

Throughout the proof we consider a fixed $n \in \NN^*$ and a fixed $X \in \Xspace$.

By definition, see \eqref{def:u:j:tilde}, the sets $\widetilde{\cU}^X_1 , \ldots  , \widetilde{\cU}^X_{{p_X}}$ are non-empty, open and disjoint. Before proving the first item of the lemma, let us notice that $\cT^X$ is closed and of Lebesgue measure zero since, for all $i\in\lb1,n\rb$, the third item of Lemma \ref{lemma:valeurs:de:a:et:polynome} states that $\cT^{x^{(i)}}$ is closed and has Lebesgue measure zero. Let us also write the following useful characterization: thanks to the characterization of $\cT^x$ in the third item of Lemma \ref{lemma:valeurs:de:a:et:polynome}, we have
\begin{equation}\label{charac:of:T_n}
\cT^X
=
\cup_{i=1}^n 
\cup_{\delta \in \{ 0,1 \}^{N_1 + \dots + N_{L-1}}}
\partial A^{x^{(i)}}_\delta.
\end{equation}

Let us now prove that  $ ( \cup_{j=1}^{{p_X} }\widetilde{\cU}^X_j )^c = \cT^X$.

To do so, let us first show that $ ( \cup_{j=1}^{{p_X} }\widetilde{\cU}^X_j )^c \subseteq \cT^X$.
Let $\theta \in ( \cup_{j=1}^{{p_X} }\widetilde{\cU}^X_j )^c $. Consider the $\Delta^X_1, \dots, \Delta^X_{q^X}$ defined just before \eqref{def:u:j:tilde}. There exists $j \in \lb 1 , q_X \rb$ such that $a(X , \theta) = \Delta^X_j$. Since $\theta \not \in \widetilde{\cU}^X_j$, there exists a sequence $(\theta_k)_{k\in\NN}$ such that $\theta_k \rightarrow \theta$, as $k \rightarrow \infty$ and $a(X , \theta_k) \neq \Delta^X_j$, for all $k$. Modulo the extraction of a sub-sequence, we can assume that there exists $i\in\lb 1, n \rb$ such that for all $k \in \NN$, $a(x^{(i)} , \theta_k)  \neq \delta$, where $x^{(i)}$ is the $i^{th}$ column of $X$, and $\delta$ is the $i^{th}$ column of $\Delta^X_j$.
Thus, we have $\theta_k \not\in A^{x^{(i)}}_{\delta}$, for all $k$,  and since $\theta\in A^{x^{(i)}}_{\delta}$, we conclude $\theta \in \partial A^{x^{(i)}}_{\delta}$. The characterization \eqref{charac:of:T_n} thus shows $ \theta \in \cT^X$. This shows $ ( \cup_{j=1}^{{p_X} }\widetilde{\cU}^X_j )^c \subseteq \cT^X$.

Let us now show that $\cT^X \subseteq  ( \cup_{j=1}^{{p_X} }\widetilde{\cU}^X_j )^c $. If $\theta \in \cT^X$, there exists $i \in \lb 1, n \rb$ and $\delta \in \{ 0,1 \}^{N_1 + \dots + N_{L-1}}$ such that $\theta \in \partial A^{x^{(i)}}_\delta$. Thus, for any $\varepsilon >  0$, $\theta' \longmapsto a(x^{(i)}, \theta')$ is not constant over $B(\theta, \varepsilon)$. As a consequence, $\theta$ does not belong to any of the open sets $\widetilde{\cU}^X_j$. This finishes the proof of $\cT^X =  ( \cup_{j=1}^{{p_X} }\widetilde{\cU}^X_j )^c $. 

Since, as said above, $\cT^X$ is closed and of Lebesgue measure zero, $( \cup_{j=1}^{{p_X} }\widetilde{\cU}^X_j )^c$ is too. This ends the proof of the first item of the lemma.

The second item holds by definition of $\widetilde{\cU}^X_1 , \ldots , \widetilde{\cU}^X_{{p_X}}$. 

Let us now show the third item. Let $j \in \lb 1, {p_X} \rb$. The function $\theta \longmapsto a(X,\theta)$ is constant on $\widetilde{\cU}^X_j$.
The set $\widetilde{\cU}^X_j$ is associated to $\Delta^X_j$ in \eqref{def:u:j:tilde} and  the latter is of the form $(\delta^{1} , \ldots ,\delta^{n})$ with $\delta^{i} \in \{ 0,1 \}^{N_1 + \dots + N_{L-1}}$, for $i \in \lb 1,n \rb$.
Fix $i' \in \lb 1,n \rb$.
Then for $X = (x^{(i)})_{i \in \lb 1,n \rb}$
with $\theta \in \widetilde{\cU}^X_j$, $ \theta \in A^{x^{(i')} }_{\delta^{i'}}$.
Hence, Lemma \ref{lemma:valeurs:de:a:et:polynome} second item shows that $\theta \longmapsto f_{\theta}(x^{(i')})$ is a polynomial function of degree $L$, when $\sigma_L=Id$, or an analytic function of $\theta$. The quantity $f_{\theta}(X)$ is a matrix whose columns are $f_{\theta}(x^{(i)})$, $i \in \lb 1,n \rb$. Hence $\theta \longmapsto f_{\theta}(X)$ is a polynomial function of degree $L$, when $\sigma_L=Id$, or is an analytic function on $\widetilde{\cU}^X_j$. 

This concludes the proof of Lemma \ref{lemma:valeurs:de:a:et:polynome:deux}. 
\end{proof}

\begin{proof}[Proof of Theorem \ref{theorem:constant:rank}]

    Consider $n \in \NN^*$ and $X\in\Xspace$. The sets ${\cU}^X_1, \dots, {\cU}^X_{p_X}$ are non-empty by definition of $r^X_1, \dots, r^X_{p_X}$, and they are disjoint because of the inclusion ${\cU}^X_j \subseteq \widetilde{\cU}^X_j$, for all $j$, and because the sets $\widetilde{\cU}^X_1,\ldots,\widetilde{\cU}^X_{{p_X}}$ are disjoint as shown in Lemma \ref{lemma:valeurs:de:a:et:polynome:deux}. 
    Hence the first item holds.
    The second item is a direct consequence of the definition of $\widetilde{\cU}^X_j$, in \eqref{def:u:j:tilde}.
    The third item holds by the definition of ${\cU}^X_j$, in \eqref{defU}.
    
    To see that ${\cU}^X_j$ is open, for all $j$, first recall that $\widetilde{\cU}^X_j$ is open, then note that since the function $\theta \longmapsto f_{\theta}(X)$ is polynomial or analytic over $\widetilde{\cU}^X_j$ (by Lemma \ref{lemma:valeurs:de:a:et:polynome:deux}, third item), the function $\theta \longmapsto \DT f_{\theta}(X)$ is continuous over $\widetilde{\cU}^X_j$. Consider $\theta\in{\cU}^X_j$, since the rank is lower semicontinuous and $\RK{\DT f_\theta(X)} = r^X_j $, there exists $\epsilon > 0$ such that for any  $
    \theta'\in B(\theta, \epsilon)$, we have $\RK{\DT f_{\theta'}(X)} \geq r^X_j-\frac{1}{2}$, which using the fact that $\RK{.}$ takes integer values and the maximality of $r^X_j$, is equivalent to $\RK{\DT f_{\theta'}(X)} = r^X_j$ and to $ \theta' \in {\cU}^X_j$. Summarizing, for any $\theta\in{\cU}^X_j$, there exists $\epsilon > 0$ such that $B(\theta, \epsilon)\subset {\cU}^X_j$. This shows that ${\cU}^X_j$ is open. Hence Item 4 holds.

    Item 6, stating that $\theta \longmapsto f_{\theta}(X)$ is polynomial or analytic on $\widetilde{\cU}^X_j$, comes directly from the last item of Lemma \ref{lemma:valeurs:de:a:et:polynome:deux} and from the inclusion ${\cU}^X_j \subseteq \widetilde{\cU}^X_j$.
    
    To finish the proof, we need to prove Item 5. The fact that $\left( \cup_{j=1}^{p_X} \widetilde{\cU}^X_j \right)^c$ is a closed set with Lebesgue measure zero follows from \Cref{lemma:valeurs:de:a:et:polynome:deux}, first item. The set $\left( \cup_{j=1}^{p_X} {\cU}^X_j \right)^c$ is closed because, as already proved, ${\cU}^X_j$ is open for all $j$. 
    
    Let us prove  that
    $\left( \cup_{j=1}^{p_X} {\cU}^X_j \right)^c$ is of Lebesgue measure zero. We consider a basis $\cB$ of $\RR^E \times \RR^B$ and a basis $\cB'$ of $\Yspace$. Let us write $Jf_{\theta}(X)$ for the matrix of the differential $\DT f_{\theta}(X)$ of the function $\theta \longmapsto f_\theta(X)$ in these two bases. Then $\theta \longmapsto Jf_{\theta}(X)$ is an analytic function on $\widetilde{\cU}^X_j$. Recall the notation $r^X_j = \max_{ \theta \in \widetilde{\cU}^X_j} \RK{\DT f_\theta(X)}$, and let $\theta' \in \widetilde{\cU}^X_j$ such that $\RK{ \DT f_{\theta'}(X) } = r^X_j$. We thus have $\RK{ Jf_{\theta'} (X)} = r^X_j$, and thus there exists a sub-matrix $N_{\theta'}(X)$ of $Jf_{\theta'}(X)$, of size $r^X_j \times r^X_j$, such that $\det N_{\theta'}(X) \neq 0$. The function $\theta \longmapsto Jf_{\theta}(X)$ is a polynomial or analytic function on $\widetilde{\cU}^X_j$ and thus $\theta\longmapsto \det (N_{\theta}(X))$ also is. This latter function is not uniformly zero on $\widetilde{\cU}^X_j$ and thus the set of its zeros, which we write $\cY_j$, is a closed set of Lebesgue measure zero \citep{mityagin2015zero}.

    For all $\theta \in  \widetilde{\cU}^X_j \setminus \cY_j$, we have $\det N_{\theta}(X) \neq 0$ and thus $\RK{ N_{\theta}(X) } = r^X_j$ and thus $\RK{ Jf_{\theta}(X) } \geq r^X_j$. We also have $\RK{ Jf_{\theta}(X) } = \RK{ \DT f_{\theta}(X) } \leq r^X_j$ by definition of $r^X_j$. Hence $\RK{ \DT f_{\theta}(X) } = r^X_j$. This shows $\widetilde{\cU}^X_j \setminus \cY_j \subseteq {\cU}^X_j$.

Finally, 
    \begin{align*}
        \left( 
        \cup_{j=1}^{p_X} {\cU}^X_j
        \right)^c 
        = & 
        \cap_{j=1}^{p_X} \left({\cU}^X_j\right)^c 
        \\ 
        \subseteq & 
        \cap_{j=1}^{p_X} 
         \left( 
       \widetilde{\cU}^X_j \setminus \cY_j
        \right)^c 
        \\ 
        = & 
         \cap_{j=1}^{p_X} 
         \left( 
       \left(\widetilde{\cU}^X_j\right)^c \cup  \cY_j
        \right) 
        \\ 
        \subseteq & 
        \cap_{j=1}^{p_X} 
         \left( 
         \left(\widetilde{\cU}^X_j\right)^c 
         \cup 
         \left(
         \cup_{j'=1}^{p_X}
       \cY_{j'} 
       \right)
        \right) 
        \\
        = & 
        \left( \cap_{j=1}^{p_X} 
        \left(\widetilde{\cU}^X_j \right)^c 
        \right) 
        \cup 
                \left( \cup_{j=1}^{p_X} 
\cY_j
        \right) 
        \\ 
         = & 
        \left( \cup_{j=1}^{p_X} 
        \widetilde{\cU}^X_j
        \right)^c  
        \cup 
                \left( \cup_{j=1}^{p_X} 
\cY_j
        \right). 
    \end{align*}
    We know from  Lemma \ref{lemma:valeurs:de:a:et:polynome:deux} first item that $\left( \cup_{j=1}^{p_X} 
        \widetilde{\cU}^X_j 
        \right)^c $ has Lebesgue measure zero. Also each $\cY_j$ has Lebesgue measure zero, thus $\cup_{j=1}^{p_X} 
\cY_j$ has Lebesgue measure zero. Hence, $ \left( 
        \cup_{j=1}^{p_X} {\cU}^X_j
        \right)^c $ has Lebesgue measure zero.
        
        This concludes the proof of Theorem \ref{theorem:constant:rank}. 
\end{proof}

\subsection{Proof of Proposition \ref{prop:rank:invariant:rescaling}}
\label{app:proof:rank:invariant:rescaling}

Let $\widetilde \theta \sim \theta$, let $n \in \NN^*$ and let $X \in \Xspace$. 

By definition of the relation $\sim$, in Section \ref{ReLU networks-sec-main}, there is an invertible linear map $M: \Par \longrightarrow \Par$ such that $ \widetilde\theta = M \theta $. Note that when expressed in the canonical basis of $\Par$, the matrix corresponding to $M$ is the product of a permutation matrix and a diagonal matrix, with strictly positive diagonal components whose values are given by \eqref{eq:scaling:un} and \eqref{eq:scaling:deux}. Notice that since $M$ corresponds to positive rescalings and neuron permutations, as discussed after \eqref{eq:scaling:deux}, we have, 
\begin{equation}\label{etoutgnj}
\mbox{for any }\theta'\in\Par, \qquad f_{\theta'}(X) = f_{M\theta'}(X).
\end{equation}

Assume that $\DT f_{\theta}(X)$ is well-defined, i.e. the map $\theta' \mapsto f_{\theta'}(X)$ is differentiable at $\theta$. Then, for all $u\in\Par$, the following calculation holds, using the fact that $M$ is invertible, using \eqref{etoutgnj} and using
\eqref{diff-def-eq},
\begin{align*}
    f_{\widetilde \theta + u}(X) =f_{M \theta + u}(X)
    =& 
    f_{M (\theta + M^{-1} u)}(X) 
    \\
     = &
     f_{\theta + M^{-1} u}(X) 
     \\ 
     = & f_{\theta}(X) + \DT f_{\theta}(X) (M^{-1} u  ) 
     + 
     o( \| M^{-1} u \|  )
     \\ 
     =& 
     f_{\theta}(X) + \DT f_{\theta}(X) (M^{-1} u  ) 
     + 
     o( \|  u \|  ).
\end{align*}
Hence, $\theta' \longmapsto f_{\theta'}(X)$ is differentiable at $\widetilde\theta$ and for all $u \in \Par$,
\[
\DT f_{\widetilde\theta}(X) ( u  )
=
\DT f_{\theta}(X) (M^{-1} u  ).
\]
Since $M^{-1}$ is invertible, it follows that 
 $\RK{ \DT f_{\widetilde \theta}(X) } = \RK{ \DT f_{\theta}(X) }$. 

This concludes the proof of Proposition \ref{prop:rank:invariant:rescaling}.

\section{Proofs and Calculations of \texorpdfstring{\Cref{section:geometric:interpretation}}{section 4}}

\subsection{Applying the Constant Rank Theorem to Obtain \Cref{coro-optim}}\label{appendix-constant-rank-thm} 

Since this is the central argument linking the regularity of the learned neural network to the flatness of the objective function, we recall, for completeness, the classical geometric reasoning leading to \Cref{coro-optim}.

Let us first recall the constant rank theorem.
\begin{thm}[Constant Rank Theorem]\label{cste-rk-thm}
Let \( U \subset \mathbb{R}^n \) be an open set, \( a \in U \), and let
\( g : U \to \mathbb{R}^p \) be a \(\mathcal{C}^1\) mapping.
If the differential of \( g \) has constant rank \( r \) on \( U \), then there exist:
\begin{itemize}
    \item a \(\mathcal{C}^1\)-diffeomorphism
    \( \varphi \) from an open set \( V \subset \mathbb{R}^n \) containing \( 0 \)
    onto an open subset of \( U \), with \( \varphi(0) = a \), and
    \item a \(\mathcal{C}^1\)-diffeomorphism
    \( \psi \) from an open subset of \( \mathbb{R}^p \) containing \( g(\varphi(V)) \)
    onto an open subset of \( \mathbb{R}^p \), with \( \psi(g(a)) = 0 \),
\end{itemize}
such that for all \( x = (x_1, \ldots, x_n) \in V \),
\begin{equation} \label{eq:constant:rank:ccl}
(\psi \circ g \circ \varphi)(x) = (x_1, \ldots, x_r, 0, \ldots, 0).
\end{equation}
\end{thm}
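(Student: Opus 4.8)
The plan is to obtain the normal form \eqref{eq:constant:rank:ccl} by two successive applications of the inverse function theorem, with the constant-rank hypothesis entering at exactly one step. By translating in the source and the target I first reduce to the case $a = 0$ and $g(a) = 0$, restoring the general statement by composing with translations at the very end. Since $Dg(0)$ has rank $r$, after permuting the coordinates of $\mathbb{R}^n$ and of $\mathbb{R}^p$ I may assume that the upper-left $r \times r$ block of $Dg(0)$ is invertible. Writing points of $\mathbb{R}^n$ as $(x', x'')$ with $x' \in \mathbb{R}^r$, $x'' \in \mathbb{R}^{n-r}$, and points of $\mathbb{R}^p$ as $(y', y'')$ with $y' \in \mathbb{R}^r$, $y'' \in \mathbb{R}^{p-r}$, I split $g = (g', g'')$ accordingly, so that $\partial g' / \partial x' (0)$ is invertible.

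Next I introduce the first diffeomorphism $\Phi(x', x'') = (g'(x', x''), x'')$. Its differential at $0$ is block lower-triangular with invertible diagonal blocks $\partial g'/\partial x'(0)$ and $I_{n-r}$, hence invertible, so by the inverse function theorem $\Phi$ is a $\mathcal{C}^1$-diffeomorphism of a neighborhood of $0$ onto its image, and I set $\varphi = \Phi^{-1}$, which satisfies $\varphi(0) = 0$. By construction the first $r$ components of $g \circ \varphi$ are the identity, so on a suitable open set $V$ containing $0$ one has
\[
(g \circ \varphi)(u', u'') = \big(u', h(u', u'')\big)
\]
for a $\mathcal{C}^1$ map $h = g'' \circ \varphi : V \to \mathbb{R}^{p-r}$.

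The constant-rank hypothesis is used here, and I expect this to be the main obstacle. The Jacobian of $g \circ \varphi$ has the block form
\[
D(g \circ \varphi) = \begin{pmatrix} I_r & 0 \\ \partial h/\partial u' & \partial h/\partial u'' \end{pmatrix},
\]
and, $\varphi$ being a diffeomorphism, it has rank $r$ at every point of $V$. The first $r$ columns already span an $r$-dimensional space through the identity block; hence each of the remaining $n-r$ columns, of the form $(0, c)$ with $c$ a column of $\partial h/\partial u''$, must lie in that span. Writing such a column as a combination of the first $r$ and reading off the top $r$ coordinates forces all coefficients to vanish, whence $c = 0$; thus $\partial h/\partial u'' \equiv 0$ on $V$. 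After shrinking $V$ to a connected (for instance convex) neighborhood of $0$, this shows that $h$ is independent of $u''$, say $h(u', u'') = \tilde h(u')$. The delicate points are that this is the only place where rank $r$ on all of $U$—not merely at $a$—is needed, and that one must pass to a connected neighborhood for the vanishing partials to yield genuine independence of $u''$.

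Finally I straighten the target with the second diffeomorphism $\psi(y', y'') = (y', y'' - \tilde h(y'))$, whose inverse is $(z', z'') \mapsto (z', z'' + \tilde h(z'))$ and which fixes $0$. Then
\[
(\psi \circ g \circ \varphi)(u', u'') = \psi\big(u', \tilde h(u')\big) = (u', 0),
\]
which is precisely $(u_1, \ldots, u_r, 0, \ldots, 0)$. Undoing the coordinate permutations and the initial translations gives $\varphi(0) = a$ and $\psi(g(a)) = 0$ and completes the proof.
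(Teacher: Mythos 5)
The paper does not prove this statement: Theorem~\ref{cste-rk-thm} is recalled in Appendix~\ref{appendix-constant-rank-thm} as a classical result and is used there only as a black box to derive Corollary~\ref{coro-optim}. There is therefore no in-paper proof to compare against, and your proposal should be judged on its own terms. It is correct, and it is the standard textbook argument: straighten the source with $\Phi(x',x'')=(g'(x',x''),x'')$ via the inverse function theorem, use the constant-rank hypothesis to force $\partial h/\partial u''\equiv 0$ (the identity block already exhausts the rank, so the remaining columns $(0,c)^{T}$ must vanish), conclude on a convex neighborhood that $h$ depends only on $u'$, and then straighten the target with $\psi(y',y'')=(y',y''-\tilde h(y'))$. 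You correctly isolate the one place where rank $r$ on a whole neighborhood (not just at $a$) is needed, and the connectedness caveat for deducing $h(u',u'')=\tilde h(u')$ from vanishing partials. Two cosmetic remarks: $D\Phi(0)$ is block \emph{upper}-triangular (the zero block sits below the diagonal), not lower-triangular, though invertibility follows either way; and the claim that $\psi$ fixes $0$ rests on $\tilde h(0)=g''(\varphi(0))=0$, which holds because you normalized $g(a)=0$ --- worth stating explicitly, but immediate. It is also cleanest to take $V=B'\times B''$ a product of balls so that $\tilde h$ is defined on $B'$ and $\psi$ on $B'\times\mathbb{R}^{p-r}\supseteq g(\varphi(V))$, as the theorem's statement requires.
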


In the context of our problem, set:
\[
n = |E| + |B|, \quad
U = \cU_j^X, \quad
a = \theta, \quad
g : \theta' \mapsto f_{\theta'}(X), \quad
p = n n_L, \quad
r = \RK{ \DT f_{\theta}(X) }.
\]
\Cref{theorem:constant:rank} guarantees that the hypotheses of \Cref{cste-rk-thm} hold.

Let \( \varepsilon_{X,\theta} \) be such that \( B(\theta, \varepsilon_{X,\theta}) \subset \varphi(V) \), and define \( V' = \varphi^{-1}\left(B(\theta, \varepsilon_{X,\theta})\right)
\subseteq V
\).
Then, to prove the first item of \Cref{coro-optim}, it suffices to show that \( \psi^{-1} \) is a smooth chart from
\[
\psi \circ g \circ \varphi\left( V' \right),
\]
which, using \eqref{eq:constant:rank:ccl}, satisfies $\psi \circ g \circ \varphi\left( V' \right) = W \times \{0\} $ for  an open set $W$ of $\mathbb{R}^r$ containing $0$,
onto
\[
\bigl\{ f_{\theta'}(X) \in \Yspace \mid \|\theta' - \theta\| < \varepsilon_{X,\theta} \bigr\}.
\]

Let us prove this. Indeed,  $\psi^{-1}$ is smooth and invertible by definition. Let us verify that it indeed maps the two sets mentioned above to one another.

For any $y \in \psi \circ g \circ \varphi\left( V'  \right)$, there is $x \in V'$ such that $y = \psi \circ g \circ \varphi (x)$ and thus $\psi^{-1}(y) = g \circ \varphi (x)= f_{\varphi(x)}(X) \in \{ f_{\theta'} (X) \mid \theta' \in\Par\}
$. 
Also, because $x \in V'$, we have $\varphi (x) \in B(\theta, \varepsilon_{X,\theta})$ and therefore $\psi^{-1}(y) = f_{\varphi(x)}(X) \in \bigl\{ f_{\theta'}(X)  \mid \|\theta' - \theta\| < \varepsilon_{X,\theta} \bigr\}$.

Conversely, let $y \in \bigl\{ f_{\theta'}(X)  \mid \|\theta' - \theta\| < \varepsilon_{X,\theta} \bigr\}$. Then, there is $\theta'$ with $\| \theta - \theta' \| < \varepsilon_{X,\theta} $ such that $y = f_{\theta'} (X) = g(\theta')$ and $(\psi^{-1})^{-1} (y) = \psi \circ g (\theta')$. We can write $\theta' =  \varphi (x)$ with $x \in V' $
and so $(\psi^{-1})^{-1} (y) = \psi \circ g \circ \varphi (x) \in \psi \circ g \circ \varphi (V')$. This concludes the proof of the first item of \Cref{coro-optim}.

To prove the second item of \Cref{coro-optim}, it suffices to show that the map $\varphi$ 
is a smooth chart from
\[
V' \cap \left( \{0\} \times \mathbb{R}^{n-r} \right)
\]
onto
\[
\bigl\{ \theta' \in \mathbb{R}^E \times \mathbb{R}^B \mid f_{\theta'}(X) = f_{\theta}(X) \text{ and } \|\theta' - \theta\| < \varepsilon_{X,\theta} \bigr\}.
\]

Let us prove this. Indeed, $\varphi$ is smooth and invertible.  Let us verify that it indeed maps the two sets mentioned above to one another.

Consider $x \in V' \cap \left( \{0\} \times \mathbb{R}^{n-r} \right)$ and denote $\theta' = \varphi(x)$. Using \eqref{eq:constant:rank:ccl}, we have $\psi \circ g \circ \varphi(x) = 0$, and thus, using  \Cref{cste-rk-thm} again, we have
\[f_{\theta'}(X) = g \circ \varphi(x) = \psi^{-1}(0)= g(a) = f_{\theta}(X).
\]
Also, since $x \in V'$, $\|\theta' - \theta\|\leq \varepsilon_{X,\theta}$. We finally conclude that
$\varphi(x) \in \bigl\{ \theta' \in \mathbb{R}^E \times \mathbb{R}^B \mid f_{\theta'}(X) = f_{\theta}(X) \text{ and } \|\theta' - \theta\| < \varepsilon_{X,\theta} \bigr\}$.

Conversely, for $y \in \bigl\{ \theta' \in \mathbb{R}^E \times \mathbb{R}^B \mid f_{\theta'}(X) = f_{\theta}(X) \text{ and } \|\theta' - \theta\| < \varepsilon_{X,\theta} \bigr\}$, we have $g(y)=g(a)$ and, using \Cref{cste-rk-thm}, $\psi \circ g (y) =\psi (g (a)) =0$. Let $x = (x_1,\ldots,x_n) = 
\varphi^{-1}(y) \in V'$. From \eqref{eq:constant:rank:ccl}, $0 = \psi \circ g (y) = \psi \circ g \circ \varphi(x) = (x_1,\ldots,x_r,0 , \dots , 0) $ and thus $x \in \{0\} \times \mathbb{R}^{n-r}$. Thus, $\varphi^{-1}(y) \in V' \cap \left( \{0\} \times \mathbb{R}^{n-r} \right)$. This concludes the proof of the second (and last) item of \Cref{coro-optim}.
 
\subsection{Calculations for the Example in Section \ref{example:image:preimage:dimdeux}}\label{appendix-example}

We provide in this appendix, the calculations permitting to construct Figure \ref{dessin-exemple}. We consider a one-hidden-layer neural network of widths $N_0=N_1=N_2=1$, with the identity activation function on the last layer. To simplify notations, we denote the weights and biases $\theta = (w,v,b,c)\in\RR^4$ so that $f_\theta(x) = v \sigma(wx+b)+c$, for all $x\in\RR$. We consider $X=(0, 1, 2)\in\RR^{1\times 3}$ and 
\[f_\theta(X) = \bigl(v \sigma(b) + c ~,~ v \sigma(w+b) + c ~,~v \sigma(2w+b) + c\bigr).
\]

The boundaries of the sets $\widetilde \cU_j^X$, corresponding to the parameters having the same activation pattern, are defined by the equation $b=0$, $w+b=0$ and $2w+b = 0$. There are $6$ possible activation patterns corresponding to the zones represented, in the $(w,b)$ plane, on the left of Figure \ref{dessin-exemple}.

Since the sets $f_{\widetilde \cU^X_j}(X) = \{f_\theta(X) ~|~ \theta \in \widetilde \cU^X_j\}$, for $j\in \lb1,6\rb$, are invariant to translations by vectors $(c,c,c)$, for $c\in\RR$, we consider the plane $\cP$ orthogonal to the vector $(1,1,1)$ and parameterize its elements using the mapping
\begin{eqnarray*}
p : \RR^2 & \longrightarrow & \cP \\    
(x,y) & \longmapsto & \frac{x}{\sqrt{6}} (1,1,-2) + \frac{y}{\sqrt{2}} (-1,1,0).
\end{eqnarray*}
Instead of representing $f_{\widetilde \cU^X_j}(X)$, we represent on the right of Figure \ref{dessin-exemple} its intersection with $\cP$, formally defined as the set $\cV_j \subseteq \RR^2$ such that
\[
f_{\widetilde \cU^X_j}(X) = \left\{ p(x,y) + (z,z,z) \in \RR^{1\times 3}~|~ (x,y) \in \cV_j \mbox{ and } z\in\RR\right\}.
\]
Below, we construct the sets $\cV_j$, for $j\in \lb1,6\rb$.
\paragraph{Case $j=1$:}We have $b\leq 0$, $2w+b \leq 0$ and therefore $w+b\leq 0$. This leads to $f_{(w,v,b,c)}(X) = (c,c,c)$ and $\cV_1=\{(0,0)\}$. 
\paragraph{Case $j=2$:}We have $b\geq 0$, $w+b \leq 0$ and therefore $2w+b\leq 0$. This leads to $f_{(w,v,b,c)}(X) = (vb+c,c,c)$ and
\[\cV_2 = \bigl\{(x,y) \in\RR^2~|~ \exists (w,v,b,c)\in \widetilde \cU^X_2, p(x,y) = (vb+c,c,c)\bigr\}.
\]
Solving
\[\left\{\begin{array}{rrl}
(1) : &vb+c & =\frac{x}{\sqrt{6}} - \frac{y}{\sqrt{2}} \\
(2) : &c & = \frac{x}{\sqrt{6}} + \frac{y}{\sqrt{2}} \\
(3) : &c & =  -2 \frac{x}{\sqrt{6}}
\end{array}\right. \qquad\Longleftrightarrow \qquad
\left\{\begin{array}{rrl} 
(1)-(2):& - \sqrt{2} y &= 
vb \\
\sqrt{2}((2)-(3)):&y &= - \sqrt{3} x\\
(3) :& c &= -2 \frac{x}{\sqrt{6}}
\end{array}\right.
\] 
 and we obtain
\[\cV_2 = \bigl\{ (x,y) \in \RR^2 ~|~ \sqrt{3} x + y = 0\bigr\}.
\]

\paragraph{Case $j=3$:}We have $b\geq 0$, $w+b \geq 0$ and  $2w+b\leq 0$. This leads to $f_{(w,v,b,c)}(X) = (vb+c,v(w+b)+c,c)$ and
\[\cV_3 = \bigl\{(x,y) \in\RR^2~|~ \exists (w,v,b,c)\in \widetilde \cU^X_3, p(x,y) = (vb+c,v(w+b)+c,c)\bigr\}.
\]
We have
\[\left\{\begin{array}{rrl}
(1): &vb+c & =\frac{x}{\sqrt{6}} - \frac{y}{\sqrt{2}} \\
(2): &v(w+b)+c & = \frac{x}{\sqrt{6}} + \frac{y}{\sqrt{2}} \\
(3): &c & =  -2 \frac{x}{\sqrt{6}}
\end{array}\right. \qquad \Longleftrightarrow \qquad
\left\{\begin{array}{rrl}
\sqrt{2}((1)-(3)):& \sqrt{3} x &= y+ \sqrt{2} vb \\
(2) - (1):& \sqrt{2} y& = vw \\
(3):& c &= -2 \frac{x}{\sqrt{6}}
\end{array}\right. .
\]
Using $(w,v,b,c) \in\widetilde \cU_3^X$, we obtain $b\in[-w, -2w]$, where we recall that $w\leq 0$.
\begin{itemize}
\item Taking, for simplicity, $v=1$ and choosing the value of $w$, the second equation shows that we can reach any $y = \frac{w}{\sqrt{2}}\leq 0$. Moreover, as $b$ goes through $[-w, -2w]$, $\sqrt{2} vb$ goes through $[-\sqrt{2}w, -2\sqrt{2}w]=[-2y, -4y]$. Therefore, we see with the first equation that $\sqrt{3}x$ goes through $[-y,-3y]$, that is $x$ goes through $[-\frac{y}{\sqrt{3}}, -\sqrt{3}y]$. It is not possible to reach other values for other values when $v\geq 0$.
\item Similarly, taking $v=-1$ and choosing the value of $w$, the second equation shows that we can reach any $y = -\frac{w}{\sqrt{2}}\geq 0$. Moreover, as $b$ goes through $[-w, -2w]$, $\sqrt{2} vb$ goes through $[ 2\sqrt{2}w, \sqrt{2}w]=[ -4y, -2y]$. Therefore, we see with the first equation that $\sqrt{3}x$ goes through $[-3y,-y]$, that is $x$ goes through $[-\sqrt{3}y,-\frac{y}{\sqrt{3}}]$. Again, it is not possible to reach other values for other values when $v\leq 0$.
\end{itemize}

Finally, the set $\cV_3$ is the set in between the two lines $x+\sqrt{3}y=0$ and $\sqrt{3}x+y=0$, as on the right of Figure \ref{dessin-exemple}.

\paragraph{Case $j=4$:}We have $b\geq 0$, $w+b \geq 0$ and  $2w+b\geq 0$. This leads to $f_{(w,v,b,c)}(X) = (vb+c,v(w+b)+c,v(2w+b)+c)$ and
\[\cV_4 = \bigl\{(x,y) \in\RR^2~|~ \exists (w,v,b,c)\in \widetilde \cU^X_4, p(x,y) = (vb+c,v(w+b)+c,v(2w+b)+c)\bigr\}.
\]
We have
\[\left\{\begin{array}{rrl}
(1): &vb+c & =\frac{x}{\sqrt{6}} - \frac{y}{\sqrt{2}} \\
(2): &v(w+b)+c & = \frac{x}{\sqrt{6}} + \frac{y}{\sqrt{2}} \\
(3): &v(2w+b)+c & =  -2 \frac{x}{\sqrt{6}}
\end{array}\right. \qquad \Longleftrightarrow \qquad
\left\{\begin{array}{rrl}
(2)-(1): &\sqrt{2} y &= vw \\
(3)-(2): & - 3 \frac{x}{\sqrt{6}} - \frac{y}{\sqrt{2}} &= vw\\
(3):&v(2w+b)+c &= -2 \frac{x}{\sqrt{6}}
\end{array}\right. 
\]
which is equivalent to
\[
 \left\{\begin{array}{rrl}
(1): & \sqrt{2} y &= vw \\
\sqrt{2}( (2) - (1) ) : & 3y &= - \sqrt{3} x \\
(3): & v(2w+b)+c &= -2 \frac{x}{\sqrt{6}}
\end{array}\right. 
\]
This leads to
\[\cV_4 = \bigl\{ (x,y) \in \RR^2 ~|~   x + \sqrt{3} y = 0\bigr\}.
\]

\paragraph{Case $j=5$:}We have $b\leq 0$, $w+b \geq 0$ and therefore $2w+b\geq 0$. This leads to $f_{(w,v,b,c)}(X) = (c,v(w+b)+c,v(2w+b)+c)$ and
\[\cV_5 = \bigl\{(x,y) \in\RR^2~|~ \exists (w,v,b,c)\in \widetilde \cU^X_5, p(x,y) = (c,v(w+b)+c,v(2w+b)+c)\bigr\}.
\]
We have
\[\left\{\begin{array}{rrl}
(1): & c & =\frac{x}{\sqrt{6}} - \frac{y}{\sqrt{2}} \\
(2): &v(w+b)+c & = \frac{x}{\sqrt{6}} + \frac{y}{\sqrt{2}} \\
(3): &v(2w+b)+c & =  -2 \frac{x}{\sqrt{6}}
\end{array}\right. \qquad \Longleftrightarrow \qquad
\left\{\begin{array}{rrl}
(1): & \frac{x}{\sqrt{6}} - \frac{y}{\sqrt{2}} &= c \\
(2) - (1): &\sqrt{2} y &= v(w+b)\\
(3)-(1): &- 3 \frac{x}{\sqrt{6}} + \frac{y}{\sqrt{2}} &= v(2w+b)
\end{array}\right. .
\]
Using $v$ and $w+b\geq 0$, we see with the second equation that $y$ can take any value in $\RR$. Let us consider an arbitrary fixed $y \in \RR$. In fact, there are infinitely many choices for $v, w$ and $b$ corresponding to this value. Taking $v=sign(y)$, we have $w+b = sign(y) \sqrt{2} y = \sqrt{2} |y|$ and, since $b\leq 0$, $w$ can take any value in $[\sqrt{2} |y|, +\infty)$. Therefore, $2w+b=w+(w+b)$ can take any value in $[2\sqrt{2} |y|, +\infty)$.
\begin{itemize}
    \item If $y\geq 0$: $- 3 \frac{x}{\sqrt{6}} + \frac{y}{\sqrt{2}} = 2w+b$ goes through $[2\sqrt{2} y, +\infty)$. Therefore, $- 3 \frac{x}{\sqrt{6}} $ goes through $[3 \frac{y}{\sqrt{2}}, +\infty)$, which means $x$ goes through $(-\infty, -\sqrt{3} y ]$.
    \item If $y\leq 0$: $- 3 \frac{x}{\sqrt{6}} + \frac{y}{\sqrt{2}}=-(2w+b)$ goes through $( -\infty, -2\sqrt{2}  |y| ]$. Therefore, $- 3 \frac{x}{\sqrt{6}} $ goes through  $( -\infty, 3 \frac{y}{\sqrt{2} } ]$, which means $x$ goes through $[-\sqrt{3} y, +\infty)$.
\end{itemize}

Finally, the set $\cV_5$ is the set in between the two lines $x+\sqrt{3}y=0$ and $y=0$, as on the right of Figure \ref{dessin-exemple}.

\paragraph{Case $j=6$:}We have $b\leq 0$, $w+b \leq 0$ and $2w+b\geq 0$. This leads to $f_{(w,v,b,c)}(X) = (c,c,v(2w+b)+c)$ and
\[\cV_6 = \bigl\{(x,y) \in\RR^2~|~ \exists (w,v,b,c)\in \widetilde \cU^X_6, p(x,y) = (c,c,v(2w+b)+c)\bigr\}.
\]
We have
\[\left\{\begin{array}{rrl}
(1): & c & =\frac{x}{\sqrt{6}} - \frac{y}{\sqrt{2}} \\
(2): &c & = \frac{x}{\sqrt{6}} + \frac{y}{\sqrt{2}} \\
(3): &v(2w+b)+c & =  -2 \frac{x}{\sqrt{6}}
\end{array}\right. \qquad \Longleftrightarrow \qquad
\left\{\begin{array}{rrl}
(1): &\frac{x}{\sqrt{6}} - \frac{y}{\sqrt{2}} &= c \\
((2)-(1))/\sqrt{2}: &y &= 0\\
(3)-(1): &- 3 \frac{x}{\sqrt{6}} + \frac{y}{\sqrt{2}} &= v(2w+b)
\end{array}\right. .
\]
Using either $c$ or $v(2w+b)$, $x$ can take any value in $\RR$ and
\[\cV_6 = \bigl\{(x,y) \in\RR^2~|~ y=0\bigr\}.
\]

\subsection{Proof of \texorpdfstring{\Cref{regul-implicit-sec-1}}{corollary 4} }\label{proof-sec4-app-1}

Throughout the proof, we consider $\theta^* \in \cup_{j=1}^{p_X} \cU_j^X$. We denote $j^*\in\lb1,p_X\rb$ such that $\theta^*\in \cU_{j^*}^X$ and $k= \dim^-(\theta^*,X) = \RK{ \DT f_{\theta^*}(X)}$. We define, for all $\theta\in  \Par$, the function  $h(\theta) = \dim^-(\theta,X) - k$. Using \Cref{theorem:constant:rank},  we have $h(\theta)= 0$ for all $\theta\in\cU_{j^*}^X$. Since $\theta^*\in \cU_{j^*}^X$ and since, using \Cref{theorem:constant:rank}, $\cU_{j^*}^X$ is open, the function $h$ equals $0$ in $\cU_{j^*}^X$. It is also differentiable at $\theta^*$ and $\nabla h (\theta^*) = 0$.

Let us first prove that if $\theta^*$ is a critical point of $(P)$, then $(\theta^*,1)$ satisfies the KKT conditions of $(P_k)$. Assume that $\theta^*$ is a critical point of $(P)$. Denoting $\cL(\theta) = R(f_\theta(X))$,  we have $\nabla \cL(\theta^*) = 0$. Since $\nabla h (\theta^*) = 0$, we have
\[\nabla \cL(\theta^*) + \nabla h(\theta^*) = 0.
\]
Using $h(\theta^*) = 0$, we conclude that $(\theta^*,1)$ satisfies the KKT conditions of $(P_k)$.

Let us now prove that if $(\theta^*,1)$ satisfies the KKT conditions of $(P_k)$ then $\theta^*$ is a critical point of $(P)$.
Indeed, if the former holds, 
\[\nabla \cL(\theta^*) + \nabla h(\theta^*) = 0.
\]
Using $\nabla h(\theta^*) = 0$, we deduce that $\nabla \cL(\theta^*) = 0$ and conclude that $\theta^*$ is a critical point of $(P)$.

This concludes the proof.

\subsection{Proof of \texorpdfstring{\Cref{regul-implicit-sec-2}}{Corollary 5} }\label{proof-sec4-app-2}

We detail the proof for local minimizers but, because it uses similar arguments, we omit the proof of the statement for saddle points.

For simplicity, we denote for all $\theta\in\Par$, $\cL(\theta) = R(f_\theta(X))$. 

We first consider $\theta^*\in\Par$, a local minimizer of $(P)$, and prove that $\theta^*$ is a local minimizer of $(P_k)$, for $k=\dim^+(\theta^*,X)$. 
From the definition \eqref{dim_def},
we have $\dim^-(\theta^*,X) \le \dim^+(\theta^*,X) = k$. Also, the hypothesis on $\theta^*$ guarantees that there exists $\varepsilon >0$ such that for all $\theta\in B(\theta^*,\varepsilon)$, $\cL(\theta^*)\leq \cL(\theta)$. A fortiori, for all $\theta\in B(\theta^*,\varepsilon)$ such that $\dim^-(\theta,X) \le k$, $\cL(\theta^*)\leq \cL(\theta)$, and since $\dim^-(\theta^*,X) \le k $, then $\theta^*$ is a local minimizer of $(P_k)$.

Let us now prove the converse statement.
Let $\theta^*\in\Par$ be a local minimizer of $(P_k)$ for $k=\dim^+(\theta^*,X)$.
There exists $\varepsilon>0$ such that for all $\theta \in B(\theta^*,\varepsilon)$ satisfying $\dim^-(\theta,X) \leq k$, we have $\cL(\theta^*) \leq \cL(\theta)$.
Using that the $\rk$ takes integer values and the definition of $\dim^+(\theta^*,X)$ in \eqref{dim_def}, there exist $\varepsilon'>0$  and $j^*\in\lb1,p_X\rb$ such that $ B(\theta^*,\varepsilon') \cap \cU_{j^*}^X \neq \emptyset$, $k=\dim^+(\theta^*,X) = r_{j^*}^X$, and $r_{j}^X\leq k$ for all $j$ such that $ B(\theta^*,\varepsilon') \cap \cU_{j}^X \neq \emptyset$. Using the definitions of $\dim^-(\theta^*,X)$, we know that for all $\theta \in B(\theta^*,\varepsilon')$ we have $\dim^-(\theta,X) \leq r_{j^*}^X = k$.

Denoting $\epsilon=\min(\varepsilon, \varepsilon')$, it follows that for all $\theta \in B(\theta^*,\epsilon)$, we have  $\dim^-(\theta,X) \leq  k$ and thus $\cL(\theta^*)\leq \cL(\theta)$. As a consequence, $\theta^*$ is a local minimizer of $(P)$. 

This concludes the proof.

\section{Proof of Theorem \ref{shallow-case-thm}}\label{shallow-case-app}

The proof of Theorem \ref{shallow-case-thm} is decomposed into the detailed study of the architecture $(N_0,N_1,N_2) = (1, 1, 1)$, in \Cref{archi-OneOneOne}, and the proof in the general case, in \Cref{archi-ONENN}. Notice that the results of  \Cref{archi-OneOneOne} extend the results described in the example in \Cref{example:image:preimage:dimdeux}.

\subsection{Architecture \texorpdfstring{$(N_0,N_1,N_2) = (1, 1, 1)$}{(N0,N1,N2) = (1, 1, 1)}}\label{archi-OneOneOne}

 In this section, we investigate neural network functions with the architecture $(1, 1, 1)$. For simplicity, we assume  throughout the section that $X = (x^{(1)}, \dots, x^{(n)}) \in\RR^{1\times n}$ is such that 
 \begin{equation}\label{x_ordered}
     x^{(1)}< x^{(2)} < \dots < x^{(n)}.
 \end{equation}
 We also simplify notations and consider the neural network function applied to the sample \(X\) defined by 
\begin{equation}\label{arch-1-1-1}
	f_{(w,v,b,c)}(X) = v\sigma (wX + b\mathbf{1}) + c\mathbf{1} \in \mathbb{R}^{1 \times n}, \quad \forall w,b,v,c \in \mathbb{R},
\end{equation}
where all the components of $\One\in\RR^{1 \times n}$ equal $1$. We also adapt the notation given in Section \ref{activation-patterns-sec} and consider the activation pattern $a(X,w,b)\in\RR^{1 \times n}$ defined by
\[
	a(X, w, b)_{1,j} = 
			 \begin{cases}
			1 & \text{if } w x^{(j)} + b \geq 0,\\
			0 & \text{otherwise},
			\end{cases} \quad \forall j=1, \dots, n.
\]
We have,
\begin{equation}\label{property-activation}
    \sigma(w X + b\mathbf{1}) = a(X,w,b) \odot (w X + b \mathbf{1}),
\end{equation}
where $\odot$ stands for the Hadamard product.

Let us introduce the vectors \(\mathbf{1}_{1}, \dots, \mathbf{1}_{2n} \in \mathbb{R}^{1 \times n}\) by defining
\begin{equation}\label{eqn:1i-vectors}
\begin{aligned}	\mathbf{1}_{i} &= a\Big(X, 1, -x^{(i)}\Big), & & \text{for } i=1, \dots, n,\\
	\mathbf{1}_{n+i} &= a\Big(X, -1, x^{(i-1)}\Big), & & \text{for } i=2, \dots, n,
\end{aligned}
\end{equation}
and \(\mathbf{1}_{n+1} = \mathbf{0}\), that is, a vector of zeroes. As in \eqref{defDesOnes}, we have, for \(i=1, \dots, n\),
\begin{equation}
\label{eqn:1-order}
\mathbf{1}_{i} = (0,\dots, 0, \underset{\underset{\text{\(i\)}}{\uparrow}}{1}, \dots, 1), \quad 
	\mathbf{1}_{n+i} = (1,\dots, 1,  \underset{\underset{\text{\(i\)}}{\uparrow}}{0}, \dots, 0).
\end{equation} 
Given these notations, we consider the sets
\begin{equation}\label{defn:activations}
    \overline\cU^X_{i} = \{(w,b) \in \mathbb{R}^{2} \mid a(X, w, b) = \mathbf{1}_{i}\}, \quad \text{for }i=1,\dots, 2n.
\end{equation}
The use of the \lq overline\rq{} shall not be confused with the closure. As will be clarified in the sequel, the sets $\overline\cU^X_{i}$ can be closed, open, or neither. Considering the definition of $\widetilde\cU^X_{j}$ in \eqref{def:u:j:tilde}, for all $j\in\lb1,p_X\rb$ there exists $i\in\lb1,2n\rb$  such that, modulo a change of order of the components\footnote{Because it simplifies notations and is harmless, we will make this abuse of notation throughout the section.}, $\widetilde\cU^X_{j} = \Inter{\left(\overline\cU^X_{i}\times\RR^2\right)}$.

The following lemma shows that the sets $\overline\cU^X_{i}$ constitute a partition of \(\mathbb{R}^{2}\) into constant components for the activation function with a fixed sample \(X\). Moreover, we give a geometric parameterization of these regions as the cone generated by segments between some chosen parameter pairs. 

The parameterization uses the notation \(({y}, {z}]\) which is defined as \(\{(1-t) {y} + t{z} \mid 0 < t \leq 1\}\) and represents the open-closed line segment between vectors \({y}\) and \({z}\), with similar interpretations for other combinations of brackets. Also, for a subset $\mathcal{V}$ of a vector space, we define $\mathbb{R}_{>0} \mathcal{V}$ as the set $\{\lambda v \mid v \in \mathcal{V}, \lambda > 0\}$. Similarly, $\mathbb{R}_{\geq 0} \mathcal{V}=\{\lambda v \mid v \in \mathcal{V}, \lambda \geq 0\}$. A subset $\mathcal{V}$ of a vector space is recognized as a positive cone if it satisfies $\mathbb{R}_{>0}\mathcal{V} \subseteq \mathcal{V}$.

An illustration of \Cref{lem:sets} is in \Cref{ima:regions}.
\begin{lem}\label{lem:sets}
Assume that \(n \geq 2\) and the sample \(X\in\RR^{1 \times n}\) satisfies \eqref{x_ordered}.
Then, the activation regions \(\overline\cU^X_{1}, \dots, \overline\cU^X_{2n}\) are a partition of \(\mathbb{R}^{2}\) into  convex positive cones; precisely, each region is characterized by 
\begin{equation}\label{eqn:Ui-vectors}
\left\{
\begin{aligned}
    \overline\cU^X_{1} &= \mathbb{R}_{\geq 0}\left[(-1, x^{(n)}), (1, -x^{(1)})\right], \\
    \overline\cU^X_{i} &= \mathbb{R}_{>0} \left((1,-x^{(i-1)}),(1,-x^{(i)})\right], & \quad &\text{for } i=2, \dots, n, \\
    \overline\cU^X_{n+1} &= \mathbb{R}_{>0} \left((1,-x^{(n)}),(-1,x^{(1)})\right),\\
    \overline\cU^X_{n+i} &= \mathbb{R}_{>0} \left[(-1,x^{(i-1)}),(-1,x^{(i)})\right), & \quad &\text{for } i=2, \dots, n.
\end{aligned}
\right.
\end{equation}
\end{lem}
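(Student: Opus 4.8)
The plan is to exploit that $a(X,w,b)$ depends on $(w,b)$ only through the $n$ signs $\sgn\!\big(wx^{(j)}+b\big)$, $j\in\lb1,n\rb$. For each fixed $j$ the locus $\{(w,b)\mid wx^{(j)}+b=0\}$ is exactly the line through the origin directed by $(1,-x^{(j)})$; since $X$ satisfies \eqref{x_ordered} these are $n$ pairwise distinct lines through $0$, and their direction vectors are precisely the $(1,-x^{(j)})$ (together with their opposites $(-1,x^{(j)})$) appearing in \eqref{eqn:Ui-vectors}. These concurrent lines cut $\RR^2$ into angular sectors on which all $n$ signs, hence the whole pattern $a(X,w,b)$, stay constant. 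The task reduces to identifying each sector with the correct $\mathbf 1_i$ and tracking which bounding ray belongs to it.

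First I would run a case analysis on $\sgn(w)$. For $w>0$ the inequality $wx^{(j)}+b\ge0$ reads $x^{(j)}\ge -b/w$; since $X$ is ordered, the activated indices form a suffix $\{i,\dots,n\}$, so the pattern is $\mathbf 1_i$. Converting $x^{(i-1)}<-b/w\le x^{(i)}$ (for $i\in\lb2,n\rb$) back into $(w,b)$ gives the sector $-wx^{(i)}\le b<-wx^{(i-1)}$ inside $\{w>0\}$, which is exactly $\RR_{>0}\big((1,-x^{(i-1)}),(1,-x^{(i)})\big]$, while $-b/w\le x^{(1)}$ yields the $w>0$ part of $\overline\cU^X_{1}$. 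For $w<0$ the inequality flips to $x^{(j)}\le -b/w$, the activated indices form a prefix $\{1,\dots,i-1\}$, and the same computation produces $\overline\cU^X_{n+i}=\RR_{>0}\big[(-1,x^{(i-1)}),(-1,x^{(i)})\big)$ together with the $w<0$ parts of $\overline\cU^X_{1}$ and $\overline\cU^X_{n+1}$. The axis $w=0$ is handled directly: there $a(X,0,b)$ is the all-ones pattern $\mathbf 1_1$ when $b\ge0$ and the all-zeros pattern $\mathbf 1_{n+1}$ when $b<0$, placing the origin and the positive $b$-axis in $\overline\cU^X_1$ and the negative $b$-axis in $\overline\cU^X_{n+1}$, consistently with the closed resp. open endpoints of the two extreme cones. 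Since $w>0$, $w<0$, $w=0$ exhaust $\RR^2$ and the patterns $\mathbf 1_1,\dots,\mathbf 1_{2n}$ are pairwise distinct (the suffix patterns $\mathbf 1_1,\dots,\mathbf 1_n$ have ones-counts $n,\dots,1$, the prefix patterns $\mathbf 1_{n+1},\dots,\mathbf 1_{2n}$ have ones-counts $0,\dots,n-1$, and a nonempty suffix and a nonempty prefix of $\lb1,n\rb$ coincide only when both equal all of $\lb1,n\rb$, which the prefix patterns never do), the level sets of $a(X,\cdot,\cdot)$ are precisely the $2n$ cones of \eqref{eqn:Ui-vectors} and in particular partition $\RR^2$. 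Convexity and the positive-cone property then need no extra argument: each set is written as $\RR_{\ge0}$ or $\RR_{>0}$ applied to a segment avoiding the origin, hence is a convex cone.

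I expect the only genuinely delicate point to be the boundary bookkeeping. Because the activation indicator uses $wx^{(j)}+b\ge0$, counting equality as activation, each bounding line $\{wx^{(j)}+b=0\}$ is attached to the region on its \emph{activated} side, and this is exactly what fixes the precise bracket types (open, closed, half-open) in \eqref{eqn:Ui-vectors}. Concretely one must check at each shared ray $\RR_{>0}(1,-x^{(j)})$ (resp. $\RR_{>0}(-1,x^{(j)})$) that a point on it satisfies $wx^{(j)}+b=0\ge0$, hence is assigned to the region where neuron $j$ is on, and verify that the endpoint statuses of adjacent cones are complementary so the sectors tile without overlap. The remaining manipulations are routine sign inequalities; the picture of $n$ lines through the origin makes the global consistency transparent once these per-ray inclusions are settled.
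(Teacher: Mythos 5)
Your proof is correct and follows essentially the same route as the paper's: the paper first checks that each cone in \eqref{eqn:Ui-vectors} is contained in the corresponding activation region via the convex-combination parametrization, and then establishes coverage of $\RR^{2}$ by exactly your case analysis on $\sgn(w)$ and the threshold $-b/w$ (concluding by disjointness of the regions), which is the same suffix/prefix observation you make. The only detail to add when writing it up is the subcase $w>0$ with $-b/w>x^{(n)}$ (the all-zeros pattern, contributing the $w>0$ part of $\overline\cU^X_{n+1}$), which your sketch mentions only for $w<0$ and $w=0$; it is handled identically.
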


\begin{figure}[ht]
\centering
\begin{tikzpicture}[>=Stealth]
    \fill[red!20] (0,0) -- (-1.5,3) -- (2,3) -- cycle;
    \fill[green!20] (0,0) -- (2,3) -- (4,3) -- (4,-3) -- (3.5,-3) -- cycle;
    \fill[blue!20] (0,0) -- (3.5,-3) -- (1.5,-3)-- cycle;
    \fill[gray!20] (0,0) -- (1.5,-3) -- (-2,-3)-- cycle;
    \fill[blue!20] (0,0) -- (-2,-3) -- (-4,-3) -- (-4,-2.5) -- cycle;
    \fill[green!20] (0,0) -- (-4,-2.5) -- (-4,3) -- (-1.5,3) -- cycle;

    \draw[->] (-4,0) -- (4,0) node[right] {\(w\)};
    \draw[->] (0,-3) -- (0,3) node[above] {\(b\)};

    \draw[red, thick] (0,0) -- (2,3);
    \draw[blue, thick] (0,0) -- (-2,-3);
    \draw[green!80!black, thick] (0,0) -- (4,2.5);
    \draw[green!80!black, thick] (0,0) -- (-4,-2.5);
    \draw[green!80!black, thick] (0,0) -- (-3.5,3);
    \draw[green!80!black, thick] (0,0) -- (3.5,-3);
    \draw[red, thick] (0,0) -- (-1.5,3);
    \draw[blue, thick] (0,0) -- (1.5,-3);
    \fill[red] (0,0) circle (2pt);

    \draw[black, <-] (1,3) -- (1,-3) node[below] {\(w=1\)};
    \draw[black, <-] (-1,3) -- (-1,-3) node[below] {\(w=-1\)};

    \fill[black] (1,1.5) circle (2pt) node[right] {\(-x^{(1)}\)};
    \fill[black] (1,0.62) circle (2pt) node[right] {\(-x^{(2)}\)};
    \fill[black] (1,-0.85) circle (2pt) node[right] {\(-x^{(n-1)}\)};
    \fill[black] (1,-2.02) circle (2pt) node[right] {\(-x^{(n)}\)};
    \fill[black] (-1,-1.5) circle (2pt) node[left] {\(x^{(1)}\)};
    \fill[black] (-1,-0.62) circle (2pt) node[left] {\(x^{(2)}\)};
    \fill[black] (-1,0.85) circle (2pt) node[left] {\(x^{(n-1)}\)};
    \fill[black] (-1,2.02) circle (2pt) node[left] {\(x^{(n)}\)};

    \node at (-1.8,2.5) {\(\overline\cU^X_{2n}\)};
    \node at (2.5,2.5) {\(\overline\cU^X_{2}\)};
    \node at (0.5,2.5) {\(\overline\cU^X_{1}\)};
    \node at (2.1,-2.5) {\(\overline\cU^X_{n}\)};
    \node at (-0.5,-2.5) {\(\overline\cU^X_{n+1}\)};
    \node at (-2.7, -2.5) {\(\overline\cU^X_{n+2}\)};

    \foreach \y in {0.1,-0.1,-0.3, -0.5}{
      \fill[black] (1.5,\y) circle (1pt);
    }
    \foreach \y in {0.5,0.3,0.1,-0.1}{
      \fill[black] (-1.5,\y) circle (1pt);
    }
\end{tikzpicture}
\caption{\label{ima:regions}Illustration of the activation regions for a neural network of architecture $(1,1,1)$ with sample \(X = (x^{(1)}, \dots, x^{(n)})\) satisfying \eqref{x_ordered}. The coloring of the activation regions corresponds to different local behaviors of the neural network function described in \Cref{lem:sets}.}
\end{figure}

\begin{proof}
We start by demonstrating that the sets on the right-hand side of \eqref{eqn:Ui-vectors} are subsets of their respective \(\overline\cU^X_{i}\). 

For $i=1$, parameters  $(w,b)\in\RR_{\geq0}\left[(-1, x^{(n)}), (1, -x^{(1)})\right]$ can be expressed as 
\[
	(w,b)=\lambda \left((1-t)\left(-1, x^{(n)}\right) + t\left(1, -x^{(1)}\right) \right)\quad \text{for $t\in [0,1]$ and $\lambda \geq 0$.}
\] 
The preactivation hidden layer's content for \(X\) with these parameters is
\begin{align*}
	w X + b \vb 1
	&= \big(\lambda(1-t)(-1) + \lambda t\big)X +  \big(\lambda(1-t)x^{(n)} + \lambda t(-x^{(1)})\big) \vb 1\\
	&= \lambda\left((1-t)\left( x^{(n)} \vb 1 - \vphantom{x^{(1)}}X\right) + t\left(X - x^{(1)}\vb 1\right)\right),
	\end{align*}
which has non-negative components, since \(x^{(n)} \vb 1 - X\) and $X - x^{(1)}\vb 1$ are nonnegative. Therefore, the activation pattern for these parameters is \(\mathbf{1} = \mathbf{1}_{1}\), implying that \((w,b) \in \overline\cU^X_{1}\), and therefore $\RR_{\geq0}\left[(-1, x^{n}), (1, -x^{1})\right]\subset \overline\cU^X_{1}$.

Similarly, for \(i = 2, \dots, n\), the preactivation hidden layer's content for \(X\) with parameters $(w,b)$ in $\mathbb{R}_{>0}\big((1,-x^{(i-1)}),(1,-x^{(i)})\big]$ is 
\[
 wX +b\vb 1=\lambda\left((1-t)\left( X - x^{(i-1)} \vb 1\right) + t\left(X - x^{(i)}\vb 1\right)\right), \quad \text{for $t\in (0,1]$ and $\lambda>0$.}
\]
Exactly the first \(i-1\) components of these vectors are negative. This arises because, for all $t\in (0,1]$, the term $(1-t)(x^{(j)}-x^{(i-1)})+t(x^{(j)}-x^{(i)})$ yields a negative value for $j\leq i-1$ and is nonnegative for $j\geq i$. With the help of expression \eqref{eqn:1-order}, we recognise that the activation pattern for these parameters is \(\mathbf{1}_{i}\) and therefore $\mathbb{R}_{>0}\big((1,-x^{(i-1)}),(1,-x^{(i)})\big]\subset \overline\cU^X_{i}$, by the definition of $\overline\cU^X_{i}$.

For \(i = n+1\), the preactivation hidden layer's content for \(X\) with parameters  $(w,b)$ in $\mathbb{R}_{>0}\big((1,-x^{(n)}),(-1,x^{(1)})\big)$ is
\[
	wX +b\vb 1= \lambda\left((1-t)\left(X - x^{(n)}\vb 1\right)+ t\left( x^{(1)} \vb 1 - X\right) \right)\quad \text{for $t\in (0,1)$ and $\lambda>0$.}
\]
These vectors have negative components. This is because the only non-negative components in   \(x^{(1)} \vb 1 - X\) and $X - x^{(n)}\vb 1$ are, respectively, the first and the last which are zero, with all other components being negative. Therefore, their strict convex combination also yields negative values. It follows that the activation pattern is \(\mathbf{1}_{n+1} = \mathbf{0}\) and that $\mathbb{R}_{>0}\big((1,-x^{(n)}),(-1,x^{(1)})\big)\subset \overline\cU^X_{n+1}$, by definition of $\overline\cU^X_{n+1}$.

Finally, for \(i = 2, \dots, n\), the preactivation hidden layer's content for \(X\) with parameters $(w,b)$ in \(\mathbb{R}_{>0}\big[(-1,x^{(i-1)}),(-1,x^{(i)})\big)\) is
\[
	wX +b\vb 1= \lambda\left((1-t)\left( x^{(i-1)}\vb 1- X\right)+ t\left( x^{(i)} \vb 1 - X \right) \right) \quad \text{for $t\in [0,1)$ and $\lambda>0$.}
\]
Exactly the components \(j = i, \dots, n\) of these vectors are negative. This is because, for all $t\in [0,1)$, the term $(1-t)(x^{(i-1)}-x^{(j)})+t(x^{(i)}-x^{(j)})$ yields a negative value for $j\geq i$ and is nonnegative for $j\leq i-1$. It follows that the activation pattern is \(\mathbf{1}_{n+i}\) and that $\mathbb{R}_{>0}\big[(-1,x^{(i-1)}),(-1,x^{(i)})\big) \subset \overline\cU^X_{n+i}$, by definition of $\overline\cU^X_{n+i}$. 

 To establish the inclusion  of the activation regions $\overline\cU^X_{j}$ in their respective sets  in \eqref{eqn:Ui-vectors}, since,  by definition, the activation regions are disjoint, it is sufficient  to demonstrate that the subsets on the right-hand side of \eqref{eqn:Ui-vectors} cover the entire $\RR^{2}$. This will also ensure that the activation regions partition $\RR^{2}$.
 
To proceed with this, let us consider any point \((w,b)\) in \(\mathbb{R}^{2}\). 

 If $w=0$, since $x^{(n)} - x^{(1)} >0$, $(w,b)$ belongs either to $\RR_{\geq0}\left[(-1, x^{(n)}), (1, -x^{(1)})\right]$, if $b\geq 0$, or to $\mathbb{R}_{>0}\big((1,-x^{(n)}),(-1,x^{(1)})\big)$, if $b<0$.
 
If $w>0$, several cases arise:
\begin{itemize} 
\item If $-x^{(1)}\leq b/w$, then we decompose $(w,b)=w(1,-x^{(1)}) + w(0, b/w + x^{(1)})$, which belongs to $\RR_{\geq0}\left[(-1, x^{(n)}), (1, -x^{(1)})\right]$ since the latter is a convex cone to which both $(1,-x^{(1)})$ and $(0, b/w + x^{(1)})$ belong.
\item If $-x^{(i)}\leq b/w <-x^{(i-1)}$ with $i=2, \dots, n$, then we recognize that $(w,b) = w(1,b/w)\in \mathbb{R}_{>0}\big((1,-x^{(i-1)}),(1,-x^{(i)})\big]$ because $b/w\in [-x^{(i)}, -x^{(i-1)})$ and $w>0$.
\item If $b/w< -x^{(n)}$, then we decompose  $(w,b)= w(1,-x^{(n)}) + w(0, b/w + x^{(n)})$, which belongs to $\mathbb{R}_{>0}\big((1,-x^{(n)}),(-1,x^{(1)})\big)$ since the latter is a convex cone, to which $(0, b/w + x^{(n)})$ belongs, $(1,-x^{(n)})$ is in its closure, and $w>0$.
 \end{itemize}
 Similarly, if $w<0$, several cases arise:
\begin{itemize} 
\item If $-b/w< x^{(1)}$, then we decompose $(w,b) = -w(-1,x^{(1)}) + -w(0, -b/w - x^{(1)})$ belongs to $\mathbb{R}_{>0}\big((1,-x^{(n)}),(-1,x^{(1)})\big)$ since the latter is a convex cone, to which $(0, -b/w - x^{(1)})$ belongs, $(-1,x^{(1)})$ is in its closure, and $-w>0$.
\item If $x^{(i-1)}\leq -b/w <x^{(i)}$ with $i=2, \dots, n$, then we recognize that $(w,b) = -w(-1,-b/w)\in \mathbb{R}_{>0}\big[(-1,x^{(i-1)}),(-1,x^{(i)})\big)$ because $-b/w\in [x^{(i-1)}, x^{(i)})$.
\item If $x^{(n)}\leq -b/w$, then we decompose $(w,b)= -w(-1,x^{(n)}) -w(0, -b/w - x^{(n)})$, which belongs to $\RR_{\geq0}\left[(-1, x^{(n)}), (1, -x^{(1)})\right]$ since the latter is convex cone to which both $(-1,x^{(n)})$ and (since $-b/w - x^{(n)}\geq 0$) $(0, -b/w - x^{(n)})$ belong, and $-w> 0$.
 \end{itemize}
 This concludes the proof.
 \end{proof}

Since we have shown that activation regions as defined in  \eqref{defn:activations} partition $\RR^{2}$, one can observe that, for the architecture $(1,1,1)$, when $X$ satisfies \eqref{x_ordered} and $(w,b)$ varies, the only achievable activation patterns are the row vectors $\vb 1_{1}, \dots, \vb 1_{2n}$ defined in \eqref{eqn:1i-vectors}, or equivalently \eqref{eqn:1-order}.

The next lemma provides a simple parameterization of the sets $\mathcal V_{i}$, defined for all $i=1,\dots,2n$, by
\begin{equation}\label{def_Vi}
    \mathcal V_{i}= \{\sigma(wX+b) \in \RR^{1\times n} \mid (w,b) \in \overline\cU^X_{i}\}.
\end{equation}
The union of these sets constitutes the image of $X$ in the hidden layer.
To parameterize the sets, we define, for each $i=1, \dots, n$, the vectors $\mathbf{e}_{i}$ and $\mathbf{e}_{n+i} \in \RR^{1\times n}$ by
\begin{equation}\label{def_ei_proof}
    	\mathbf{e}_{i} = \sigma\big( X -x^{(i)} \vb 1\big)
	\quad \text{and} \quad
	\mathbf{e}_{n+i} =  \sigma\big( x^{(i)}\vb 1 - X\big).
\end{equation}
We also set $\mathbf{e}_{0} = \mathbf{e}_{2n}$. These vectors correspond to the vectors defined in \eqref{def_bfe_0}. Since the sample \(X=(x^{(1)},\ldots,x^{(n)})\in\RR^{1 \times n}\) satisfies \eqref{x_ordered}, the vectors $\vb e_{i}$ are such that, for all $i \in \lb 1, n\rb$,
\begin{multline}\label{qrounetivb}
   \mathbf{e}_{i} = (0,\dots, \underset{\underset{\text{\(i\)}}{\uparrow}}{0},  x^{(i+1)} - x^{(i)}, \dots, x^{(n)}-x^{(i)}), \\ \mbox{and} \quad
	\mathbf{e}_{n+i} = (x^{(i)}-x^{(1)}, \dots, x^{(i)}-x^{(i-1)},  \underset{\underset{\text{\(i\)}}{\uparrow}}{0}, \dots, 0).
\end{multline}

In particular, $\mathbf{e}_{n} =\mathbf{e}_{n+1} =0$.

The following lemma is illustrated in Figure \ref{ima:images}.
\begin{lem}\label{lem:images}
Assume that \(n \geq 2\) and the sample \(X\in\RR^{1 \times n}\) satisfies \eqref{x_ordered}. We have
 \begin{equation}\label{eqn:images}
 \mathcal V_{i} = \left\{\begin{array}{ll}
\RR_{\geq 0} \left[\vb e_{i-1}, ~ \vb e_{i}\right] & \text{, for }i=1 \\
\RR_{>0} \left(\vb e_{i-1} ,~ \vb e_{i}\right] & \text{, for }i=2,\dots,n,\\
\RR_{>0} \left(\vb e_{i-1} ,~ \vb e_{i}\right) & \text{, for }i=n+1\\
\RR_{>0} \left[  \vb e_{i-1},~  \vb e_{i}\right) & \text{, for } i=n+2,\dots,2n.
 \end{array} \right.
 \end{equation}
 \end{lem}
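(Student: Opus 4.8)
The plan is to exploit the single fact that makes everything collapse to a linear algebra computation: on each activation region the ReLU freezes. Concretely, set $g(w,b)=\sigma(wX+b\One)$, so that by \eqref{def_Vi} we have $\mathcal V_i=g(\overline\cU^X_i)$. On $\overline\cU^X_i$ the activation pattern is constantly $\mathbf{1}_i$, hence by \eqref{property-activation} the map $g$ coincides there with the \emph{linear} map $L_i:(w,b)\longmapsto \mathbf{1}_i\odot(wX+b\One)$. Since $\sigma$ is continuous and $L_i$ is continuous, this identity $g=L_i$ passes to the closure of $\overline\cU^X_i$, so in particular it holds at the two vertices that generate the cone $\overline\cU^X_i$ in \Cref{lem:sets}, including the vertex lying at an open end of the generating segment.

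With this reduction, I would transport the cone-over-a-segment description of \Cref{lem:sets} through $L_i$. Because $L_i$ is linear, it commutes with the operations $\RR_{>0}(\cdot)$ and with convex combinations, so $\mathcal V_i=L_i(\overline\cU^X_i)$ is exactly the positive cone over the segment joining the images of the two generating vertices, carrying over the same bracket type. It then remains to evaluate $g$ at each vertex, which is a direct substitution into $\sigma(wX+b\One)$ and an appeal to \eqref{def_ei_proof}: e.g. $g(1,-x^{(i)})=\sigma(X-x^{(i)}\One)=\mathbf{e}_i$ and $g(-1,x^{(i)})=\sigma(x^{(i)}\One-X)=\mathbf{e}_{n+i}$. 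Running through the four families of \eqref{eqn:Ui-vectors}, the pair of vertices of $\overline\cU^X_i$ maps precisely to the consecutive pair $(\mathbf{e}_{i-1},\mathbf{e}_i)$ displayed in \eqref{eqn:images} (using $\mathbf{e}_0=\mathbf{e}_{2n}$), which gives the asserted equalities.

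The step I expect to be the real obstacle is the bracket bookkeeping in the \emph{degenerate} cases, where $L_i$ is not injective and the naive slogan \lq\lq linear image of a half-open segment is a half-open segment\rq\rq{} can fail. This happens exactly when one image endpoint is $0$, i.e. when $\mathbf{e}_n$ or $\mathbf{e}_{n+1}$ appears, which occurs only for $i\in\{n,n+1,n+2\}$. Away from these, the two image vertices are distinct and nonzero, so $t\longmapsto (1-t)L_i(p_1)+tL_i(p_2)$ is injective and the open/half-open/closed structure of the segment is preserved verbatim before coning, yielding \eqref{eqn:images} immediately. For the finitely many degenerate indices I would argue by direct inspection: for $i=n+1$ one has $\mathbf{1}_{n+1}=\mathbf{0}$, so $g\equiv 0$ and $\mathcal V_{n+1}=\{0\}=\RR_{>0}(\mathbf{e}_n,\mathbf{e}_{n+1})$; and when a single endpoint maps to $0$ (as at $i=n$ or $i=n+2$), one checks that the positive cone absorbs the vanishing endpoint so that, for instance, $\RR_{>0}(\mathbf{e}_{n-1},\mathbf{e}_n]=\RR_{\geq 0}\mathbf{e}_{n-1}$ still matches the stated set. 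Once these boundary cases are dispatched, the remaining content is the routine substitutions from \eqref{def_ei_proof} and \eqref{eqn:1-order}.
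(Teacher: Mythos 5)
Your proposal is correct and follows essentially the same route as the paper: freeze the ReLU to the linear map $\mathbf{1}_i\odot(wX+b\One)$ on each activation region, push the cone-over-a-segment description of \Cref{lem:sets} through this linear map, and evaluate at the generating vertices via \eqref{def_ei_proof}. Your explicit continuity-to-the-closure argument for the excluded endpoint, and your separate check of the degenerate indices $i\in\{n,n+1,n+2\}$, are clean ways of handling steps the paper treats implicitly (note that since $(y,z]$ is defined parametrically, the bracket bookkeeping in fact survives any linear map automatically), so no gap remains.
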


\begin{figure}[ht]
\begin{center}
  \begin{tikzpicture}[>=Stealth]
\coordinate (Origin) at (0, 0, 0);
\coordinate (X1) at (5, 0, 0);
\coordinate (X2) at (0, 0, -5);
\coordinate (X3) at (0, 5, 0);
\coordinate (P0) at (0, 0, 0);
\coordinate (P1) at (5, 0, 0);
\coordinate (P2) at (5, 0,3.125);
\coordinate (P3) at (0, 5, 1.875);
\coordinate (P4) at (0, 5, 0);

\draw[->] (0,0,-1) -- (0, 0, 4) node[left, black] {$x^{2}$};
\draw[->] (-1,0,0) -- (5, 0, 0) node[above right, black] {$x^{1}$};
\draw[->] (0,-0.5,0) -- (0, 5, 0) node[right, black] {$x^{3}$};

\draw[->] (Origin) -- (X1) node[anchor=north west]{$\RR_{>0}\vb e_{n+2}  $} ;
\draw[->] (Origin) -- (X3) node[anchor=east]{$\RR_{>0}\vb e_{n-1} $};

\draw[dashed] (0,4,0) -- (0, 4, 1.5);
\draw[dashed] (0,0,1.5) -- (0, 4, 1.5);
\draw[dashed] (4,0,0) -- (4, 0,2.5);
\draw[dashed] (0,0,2.5) -- (4, 0,2.5);

\draw[fill=green!20, opacity=0.5, draw=none] (Origin) -- (P1) -- (P2) -- cycle;
\draw[fill=green!20, opacity=0.5, draw=none] (Origin) -- (P3) -- (P4) -- cycle;
\draw[fill=red!20, opacity=0.5, draw=none] (Origin) -- (P2) -- (P3) -- cycle;

\fill[gray]  (0,0,0)   circle (3pt)  node[above right, black] {$\vb e_{n} = \vb e_{n+1}=\vb 0$};

\draw[thick, red] (Origin) -- (P2) ;
\draw[thick, red] (Origin) -- (P3);
\draw[thick, red,->] (Origin) -- (2.5,2.5,2.5) node[right, black]{$\vb 1$};
\draw[very thick, blue] (Origin) -- (P1);
\draw[very thick, blue] (Origin) -- (P4);
\draw[green!80!black] (Origin) -- (P4);
\draw[green!80!black] (Origin) -- (P1);

\fill[black]  (P2)  circle (0pt) node[below right] {$\RR_{>0}\vb e_{2n}$};
\fill[black]  (P3)  circle (0pt) node[below left] {$\RR_{>0}\vb e_{1}$};
\end{tikzpicture}
\caption{\label{ima:images}Illustration of $\{\sigma(wX + b\vb 1)\mid (w,b)\in \RR^{2}\}\subset \RR^{1\times n}$ for $n=3$ and sample set $X$ satisfying \eqref{x_ordered}.
The colours in this figure correspond to those in Figure \ref{ima:regions} for the partition of $\RR^{2}$ in activation regions $\overline\cU^X_{1}, \dots, \overline\cU^X_{2n}$.
}

\end{center}
\end{figure}

 \begin{proof}
 Due to \eqref{property-activation} and the definition of $\overline\cU^X_{i}$, in \eqref{defn:activations}, we have for all $ i=1, \dots ,2n$,
\begin{equation}\label{images-vi}
	\sigma(w X + b\vb 1) = a(X, w, b)\odot (w X +b \vb 1)= \vb 1_{i} \odot (wX+ b\vb 1), \quad \forall (w,b) \in \overline\cU^X_{i},
\end{equation}
which is linear in $(w,b)$.

Adapting the following arguments to other values of $i=1,\dots,2n$, and therefore other brackets and inequality signs, will lead to an analogue of \eqref{res_intermediaire} for all $i=1,\dots,2n$. For simplicity, we only detail the proof of \eqref{res_intermediaire} for an arbitrary $i=2,\dots,n$.
Considering \eqref{eqn:Ui-vectors}, we have \(\overline\cU_i^X = \mathbb{R}_{>0}(y, z]\), with $y=(y_1,y_2)=(1,-x^{(i-1)})$ and $z=(z_1,z_2)=(1,-x^{(i)})$. Using also the linearity obtained from \eqref{images-vi}, we obtain
\begin{align*}
	& \{\sigma(wX+b\One) \in \RR^{1\times n} \mid (w,b) \in \overline\cU^X_{i}\} \\
		&= \left\{\sigma\Big(\lambda \big((1-t)y_1+tz_1\big)~ X + \lambda \big((1-t)y_2+tz_2\big )~\One\Big) \in \RR^{1\times n} \mid \lambda>0 \mbox{ and }t\in(0,1]\right\} \\
		&= \left\{\lambda (1-t)~\sigma \big( y_1X+y_2 \One\big) + \lambda t ~\sigma \big(z_1X +z_2\One\big) \in \RR^{1\times n} \mid \lambda>0 \mbox{ and }t\in(0,1]\right\}.
\end{align*}
Using \eqref{def_ei_proof}, we find that $\sigma \big( y_1X+y_2 \One\big) =\mathbf{e}_{i-1} $ and $\sigma \big(z_1X +z_2\One\big) = \mathbf{e}_{i}$, which leads to 
\begin{equation}\label{res_intermediaire}
    \{\sigma(wX+b\One) \in \RR^{1\times n} \mid (w,b) \in \overline\cU^X_{i}\} = 
\RR_{>0}(\mathbf{e}_{i-1},\mathbf{e}_{i} ].
\end{equation}
As already said, adaptations to sets of the form $\mathbb{R}_{>0}[y, z)$ and $\mathbb{R}_{\geq0}[y, z]$ are straightforward. Using \eqref{def_ei_proof}, we find that $\sigma(y_1X+y_2\One) =\mathbf{e}_{i} $, when $(y_1,y_2) = (1,-x^{(i)})$, and $\sigma(y_1X+y_2\One) =\mathbf{e}_{n+i} $, when $(y_1,y_2) = (-1,x^{(i)})$.

This concludes the proof.
\end{proof}
Notice that, since $\mathbf{e}_{n}=\mathbf{e}_{n+1}=0$, we have
\[
 \mathcal V_{n} = \RR_{\geq 0}  \vb e_{n-1}, \quad  \mathcal V_{n+1} = \{0\}, \qquad\text{ and }\quad  \mathcal V_{n+2} = \RR_{\geq 0}  \vb e_{n+2}.
\]

The following proposition is not required for the proof of Theorem \ref{shallow-case-thm}. However, we present it as an illustration—a generalization of the example described in Section \ref{example:image:preimage:dimdeux}. A visual representation of the proposition is provided in Figure \ref{ima:output}.

In the proposition, we denote for all $i=1,\dots,2n$,
\[
f_{\overline\cU^X_i \times \RR^2} = \big\{ v\sigma(wX+b\One)+c\One \in\RR^{1\times n} \mid (w,b,v,c)\in \overline\cU^X_i \times \RR^2 \big\}.
\]
\begin{prop}[Architecture $ (N_0,N_1,N_2)= (1,1,1)$]\label{prop:1-1-1} Assume that \(n \geq 2\) and the sample \(X\in\RR^{1 \times n}\) satisfies \eqref{x_ordered}. We have, for all $i=1,\dots,2n$,
\begin{eqnarray*}
    f_{\overline\cU^X_i \times \RR^2} & = & \RR \mathcal V_{i} + \RR \vb 1 \\
& = &\left\{\begin{array}{ll}
\RR \left[\vb e_{i-1}, ~ \vb e_{i}\right] + \RR \vb 1 & \text{, for }i=1 \\
\RR \left(\vb e_{i-1} ,~ \vb e_{i}\right] + \RR \vb 1& \text{, for }i=2,\dots,n,\\
\RR \left(\vb e_{i-1} ,~ \vb e_{i}\right) + \RR \vb 1& \text{, for }i=n+1\\
\RR \left[  \vb e_{i-1},~  \vb e_{i}\right)+ \RR \vb 1 & \text{, for } i=n+2,\dots,2n.
 \end{array} \right.
\end{eqnarray*}
\end{prop}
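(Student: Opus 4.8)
The plan is to obtain \Cref{prop:1-1-1} as an immediate consequence of \Cref{lem:images}, by analysing how the two remaining scalar parameters $v$ and $c$ act on the hidden-layer image $\mathcal V_i$. The first move is to prove the set identity $f_{\overline\cU^X_i\times\RR^2}=\RR\mathcal V_i+\RR\One$ directly from the definitions. Writing a generic element as $v\,\sigma(wX+b\One)+c\One$ with $(w,b)\in\overline\cU^X_i$ and $v,c\in\RR$, I note that $\sigma(wX+b\One)$ ranges over $\mathcal V_i$ as $(w,b)$ ranges over $\overline\cU^X_i$, by the definition \eqref{def_Vi} of $\mathcal V_i$. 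For one inclusion, any such element equals $v\,u+c\One$ with $u\in\mathcal V_i$, hence lies in $\RR\mathcal V_i+\RR\One$. For the reverse inclusion, any $\lambda u+c\One$ with $u\in\mathcal V_i$ and $\lambda,c\in\RR$ is realised by choosing $(w,b)\in\overline\cU^X_i$ with $\sigma(wX+b\One)=u$ and output weight $\lambda$, i.e.\ it equals $f_{(w,\lambda,b,c)}(X)$. This yields the first displayed equality of the proposition, and it holds irrespective of the precise bracket structure of $\mathcal V_i$.

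Next I would substitute the explicit description of $\mathcal V_i$ from \Cref{lem:images} and simplify $\RR\mathcal V_i$. The key elementary fact is that for any subset $\mathcal S$ of a vector space one has $\RR\cdot(\RR_{>0}\mathcal S)=\RR\mathcal S$ and $\RR\cdot(\RR_{\geq0}\mathcal S)=\RR\mathcal S$, since $\{\mu\lambda\mid\mu\in\RR,\ \lambda>0\}=\RR$ already. Applying this with $\mathcal S$ equal to the relevant segment ($[\vb e_{i-1},\vb e_i]$, $(\vb e_{i-1},\vb e_i]$, $(\vb e_{i-1},\vb e_i)$, or $[\vb e_{i-1},\vb e_i)$), the positive-cone prefactor attached to $\mathcal V_i$ is absorbed into the ambient $\RR$ coming from the output weight $v$, while the open/closed bracket of the segment is left unchanged. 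This produces exactly the four claimed cases $\RR[\vb e_{i-1},\vb e_i]$, $\RR(\vb e_{i-1},\vb e_i]$, $\RR(\vb e_{i-1},\vb e_i)$ and $\RR[\vb e_{i-1},\vb e_i)$, and adding $\RR\One$ gives the second displayed equality.

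There is no genuine obstacle here; the statement is essentially a repackaging of \Cref{lem:images}, and the only point requiring care is the scalar bookkeeping in the second step. One must check that enlarging the output weight from $\RR_{>0}$-type cones to all of $\RR$ collapses the positivity entirely (not merely reflecting each ray through the origin, but filling out its full real span), and that the Minkowski sum with $\RR\One$ is stated without any implicit reduction, since $\One$ need not lie in $\RR\mathcal V_i$. I would finally note the degenerate case $i=n+1$, where $\vb e_{n}=\vb e_{n+1}=\vb 0$ forces $\mathcal V_{n+1}=\{\vb 0\}$ and hence $f_{\overline\cU^X_{n+1}\times\RR^2}=\RR\One$, in agreement with the formula read off from $(\vb e_{n},\vb e_{n+1})=\{\vb 0\}$.
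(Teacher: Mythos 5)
Your proposal is correct and follows essentially the same route as the paper, which simply observes that the proposition is a direct consequence of the definition of $f_{(w,v,b,c)}(X)$, the definition of $\mathcal V_i$ in \eqref{def_Vi}, and \Cref{lem:images}; you merely spell out the two inclusions and the absorption of the positive-cone prefactor $\RR_{>0}$ (or $\RR_{\geq 0}$) into the output weight $v\in\RR$. The details you supply, including the degenerate case $i=n+1$, are all accurate.
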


\tdplotsetmaincoords{80}{120}

\begin{figure}[ht]
\begin{center}
  \begin{tikzpicture}[>=Stealth, tdplot_main_coords]
  
\def\axisscale{5}

\draw[] (-7,0,0) -- (0,0,0);
\draw[->] (0,-\axisscale,0) -- (0,\axisscale,0) node[anchor=north west]{$x^{2}$};
\draw[->] (0,0,-3) -- (0,0,\axisscale) node[anchor=south]{$x^{3}$};

\def\radius{2}

\tdplotsetrotatedcoords{45}{54.73561}{0}


\draw[tdplot_rotated_coords,fill=blue!20,fill opacity=0.5, draw=none] (0,0,5.2) -- ({\radius*cos(180)},{\radius*sin(180)},5.2) --({\radius*cos(180)},{\radius*sin(180)},-5.2) -- (0,0,-5.2) -- cycle;

\draw[tdplot_rotated_coords,fill=blue!20,fill opacity=0.5, draw=none] (0,0,5.2) -- ({\radius*cos(60)},{\radius*sin(60)},5.2) --({\radius*cos(60)},{\radius*sin(60)},-5.2) -- (0,0,-5.2) -- cycle;

\draw[tdplot_rotated_coords,fill=green!45,fill opacity=0.5, draw=green]  ({\radius*cos(180)},{\radius*sin(180)},5.2) arc (180:240:\radius)  -- ({\radius*cos(240)},{\radius*sin(240)},-5.2) arc (240:180:\radius) --  cycle ;

\draw[tdplot_rotated_coords,fill=green!40,fill opacity=0.5, draw=none] (0,0,-5.2) -- ({\radius*cos(180)},{\radius*sin(180)},-5.2) arc (180:240:\radius) -- cycle;
\draw[fill=red!20,fill opacity=0.5, draw=none] (-3.18732, -4.31122, -1.50146)  -- (2.81268, 1.68878, 4.49854) -- (3.18732, 4.31122, 1.50146)  --  (-2.81268, -1.68878, -4.49854)  -- cycle;

\draw[tdplot_rotated_coords,fill=green!40,fill opacity=0.5, draw=green] (0,0,5.2) -- ({\radius*cos(0)},{\radius*sin(0)},5.2) arc (0:60:\radius) -- cycle;
\draw[tdplot_rotated_coords,fill=green!40,fill opacity=0.5, draw=green] (0,0,5.2) -- ({\radius*cos(180)},{\radius*sin(180)},5.2) arc (180:240:\radius) -- cycle;

\draw[tdplot_rotated_coords,fill=green!45,fill opacity=0.5, draw=green]  ({\radius*cos(0)},{\radius*sin(0)},5.2) arc (0:60:\radius)  -- ({\radius*cos(60)},{\radius*sin(60)},-5.2) arc (60:0:\radius) --  cycle ;

\draw[tdplot_rotated_coords,fill=blue!15,fill opacity=0.5, draw=none] (0,0,5.2) -- (0,0,-5.2)-- ({\radius*cos(0)},{\radius*sin(0)},-5.2) -- ({\radius*cos(0)},{\radius*sin(0)},5.2) --cycle;

\draw[tdplot_rotated_coords,fill=blue!20,fill opacity=0.5, draw=none] (0,0,5.2) -- ({\radius*cos(240)},{\radius*sin(240)},5.2) --({\radius*cos(240)},{\radius*sin(240)},-5.2) -- (0,0,-5.2) -- cycle;

\draw[very thick, blue] (-3.8165, -1.36701, -3.8165)  -- (2.1835, 4.63299, 2.1835); 
\draw[very thick, blue] (-2.1835, -4.63299, -2.1835)  -- ( 3.8165, 1.36701, 3.8165); 
\draw[very thick, blue] (2.1835, 4.63299, 2.1835)  -- ( 3.8165, 1.36701, 3.8165); 

\draw[very thick, blue] (-2.1835, -2.1835, -4.63299)  -- ( 3.8165, 3.8165, 1.36701); 
\draw[very thick, blue] (2.1835, 2.1835, 4.63299)  -- ( 3.8165, 3.8165, 1.36701); 


\draw[very thick, red] (3.18732, 4.31122, 1.50146)  -- (2.81268, 1.68878, 4.49854); 

\draw[very thick, red] (-2.81268, -1.68878, -4.49854)  -- (3.18732, 4.31122, 1.50146); 

\draw[very thick, gray, ->] (-3,-3,-3) -- (4,4,4) node[anchor=south west, black]{$\vb 1$};

\draw[->] (-0,0,0) -- (7,0,0) node[anchor=north east]{$x^{1}$};

\draw[] (0,0,-4.5) -- (0,0,-2.5);

\draw[tdplot_rotated_coords, fill=white,fill opacity=0] (0,0,5.2) --  ({\radius*cos(60)},{\radius*sin(60)},5.2) arc (60:180:\radius) --  cycle ;
\draw[tdplot_rotated_coords, fill=white,fill opacity=0] (0,0,5.2) --  ({\radius*cos(240)},{\radius*sin(240)},5.2) arc (240:360:\radius) --  cycle ;

\draw[tdplot_rotated_coords, dashed, blue] (0,0,5.2) --  ({\radius*cos(300)},{\radius*sin(300)},5.2)  --  ({\radius*cos(120)},{\radius*sin(120)},5.2) ;

\end{tikzpicture}
\caption{\label{ima:output}Illustration of $\{f_{(w,v,b,c)}(X)\in\RR^{1\times n} \mid (w,v,b,c)\in\RR^4\}$ for $n=3$ and the sample $X = (2,0,5)$. 
Only a cylindrical section of the output is illustrated, with the vector $\vb 1$ as the axis of the cylinder and circular section in the plane orthogonal to the vector $\vb 1$.
The colors in this image correspond to those in Figure \ref{ima:regions} for the partition of $\RR^{2}$ in activation regions $\overline\cU^X_{1}, \dots, \overline\cU^X_{2n}$.
}

\end{center}
\end{figure}

\begin{proof} 
The proposition is a direct consequence of the definition 
\[
	f_{(w,v,b,c)}(X) = v \sigma(wX+b\One)+c\vb 1, \qquad \forall (w,v,b,c)\in \RR^{4}.
\] 
the definition of $\mathcal V_i$, in \eqref{def_Vi}, and  \Cref{lem:images}.
\end{proof}
In particular, using again that $\vb e_{n} =\vb e_{n+1} =0 $, we find
\[
f_{\overline\cU^X_{n}\times \RR^{2}}(X) = \RR \vb e_{n-1} \oplus \RR\vb 1 \quad, f_{\overline\cU^X_{n+1}\times \RR^{2}}(X) = \RR\One \quad \text{, and } \quad f_{\overline\cU^X_{n+2}\times \RR^{2}}(X) = \RR \vb e_{n+2} \oplus \RR\vb 1.
\]
We can also simplify the description of  \(f_{\overline\cU^X_{1}\times \RR^{2}}(X)\). Noting that \(\mathbf{e}_{0} =\mathbf{e}_{2n} = \sigma(x^{(n)}\vb 1 - X) = x^{(n)} \mathbf{1} - X\) and \(\mathbf{e}_{1} = \sigma(X - x^{(1)}\vb 1)=X - x^{(1)} \mathbf{1}\), both being in $\RR X \oplus \RR\vb 1$, we find 
\[
f_{\overline\cU^X_{1}\times \RR^{2}}(X) =\RR \left[\vb e_{0}, ~ \vb e_{1}\right] + \RR \vb 1 = \RR X \oplus \RR\vb 1.
\]

We can also state the following corollary which provides the dimensions of the different sets. In the corollary, we denote by $\dim$ the dimension in the manifold sense and the notation of $\Inter$ for the interior of the set.

 The four distinct behaviors, delineated in the following corollary, correspond to the activation regions depicted in Figure \ref{ima:regions}, color-coded in red, green, blue, and gray, respectively. Before stating the corollary, we remind that for all $j\in\lb1,p_X\rb$, there exists $i\in\lb1,2n\rb$ such that $\widetilde\cU^X_{j} = \Inter\left(\overline\cU^X_{i} \times\RR^2\right)$.

\begin{coro}[Architecture $ (N_0,N_1,N_2)= (1,1,1)$]\label{cool-dimension-1}
Assume that \(n \geq 2\) and the sample \(X\in\RR^{1 \times n}\) satisfies \eqref{x_ordered}. We have, 
\[
	\dim f_{\Inter\left(\overline\cU^X_{i} \times\RR^2\right)}(X) 
    = \left\{\begin{array}{lll}
    2 &\mbox{, if } i =1,\\
	 3 &\mbox{, if } i =2, \dots n-1, n+3, \dots 2n,\\
	 2 &\mbox{, if } i =n, n+2,\\
	 1 &\mbox{, if } i =n+1.
    \end{array}\right.
\]
\end{coro}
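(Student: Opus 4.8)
The plan is to read the dimension directly off the explicit descriptions of the image sets furnished by \Cref{prop:1-1-1} and the simplifications recorded immediately after it. By \Cref{prop:1-1-1}, for every $i$ one has $f_{\overline\cU^X_i\times\RR^2}(X)=\RR\mathcal V_i+\RR\One$, a set contained in the linear subspace $S_i:=\Span(\mathbf{e}_{i-1},\mathbf{e}_i,\One)$. Since $\Inter(\overline\cU^X_i\times\RR^2)=\Inter(\overline\cU^X_i)\times\RR^2$, the image $f_{\Inter(\overline\cU^X_i\times\RR^2)}(X)$ is obtained by letting $(w,b)$ range over the open cone and $v,c$ over $\RR$; writing $f_{(w,v,b,c)}(X)=v\sigma(wX+b\One)+c\One$ and using that $\sigma(wX+b\One)$ sweeps the open cone $\RR_{>0}(\mathbf{e}_{i-1},\mathbf{e}_i)$ (\Cref{lem:images}), this image is a relatively open, full-dimensional subset of $S_i$ whenever $\mathbf{e}_{i-1},\mathbf{e}_i$ are nonzero and independent: the factor $v\in\RR$ of either sign sweeps the full two-dimensional double wedge $\RR\cdot\RR_{>0}(\mathbf{e}_{i-1},\mathbf{e}_i)$, and the shift $c\One$ supplies the remaining direction precisely when $\One\notin\Span(\mathbf{e}_{i-1},\mathbf{e}_i)$. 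Hence the manifold dimension sought equals $\dim S_i$, and the whole problem reduces to computing $\dim\Span(\mathbf{e}_{i-1},\mathbf{e}_i,\One)$ in each regime, with the convention that a vanishing generator drops out.

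First I would dispatch the four exceptional indices using the explicit formulas already derived after \Cref{prop:1-1-1}. For $i=n+1$ both $\mathbf{e}_n=\mathbf{e}_{n+1}=0$, so the image is $\RR\One$ and the dimension is $1$. For $i=n$ and $i=n+2$ exactly one generator vanishes ($\mathbf{e}_n=0$, resp. $\mathbf{e}_{n+1}=0$), giving $\RR\mathbf{e}_{n-1}\oplus\RR\One$ and $\RR\mathbf{e}_{n+2}\oplus\RR\One$; since $\mathbf{e}_{n-1},\mathbf{e}_{n+2}\neq0$ are not proportional to $\One$, both have dimension $2$. For $i=1$, the identities $\mathbf{e}_0=x^{(n)}\One-X$ and $\mathbf{e}_1=X-x^{(1)}\One$ give $f_{\overline\cU^X_1\times\RR^2}(X)=\RR X\oplus\RR\One$, of dimension $2$ (here $\One$ does lie in $\Span(\mathbf{e}_0,\mathbf{e}_1)$, so the extra direction is not added).

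Then for the generic indices $i\in\{2,\dots,n-1\}$ and $i\in\{n+3,\dots,2n\}$ I would show $\dim S_i=3$ from the coordinate expressions \eqref{qrounetivb}. In the first range, $\mathbf{e}_{i-1}$ has its first $i-1$ entries zero with positive $i$-th entry $x^{(i)}-x^{(i-1)}$, while $\mathbf{e}_i$ has its first $i$ entries zero with positive last entry $x^{(n)}-x^{(i)}$ (using $i\le n-1$); the $i$-th coordinate then provides a pivot forcing independence, and both vectors vanish in coordinate $1\le i-1$ where $\One$ equals $1$, so $\One\notin\Span(\mathbf{e}_{i-1},\mathbf{e}_i)$ and $\dim S_i=3$. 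The second range is symmetric: writing $i=n+k$ with $3\le k\le n$, the vectors $\mathbf{e}_{n+k-1},\mathbf{e}_{n+k}$ are supported on leading coordinates, independence follows from a pivot in coordinate $k-1$, and both vanish in the last coordinate where $\One$ equals $1$, again giving $\dim S_i=3$.

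The main obstacle is the reduction carried out in the first paragraph: one must argue cleanly that the manifold dimension of the image of the \emph{open} region $\Inter(\overline\cU^X_i)\times\RR^2$ equals $\dim S_i$ rather than something smaller. Concretely, this requires verifying that letting $v$ take both signs indeed sweeps the full two-dimensional double wedge $\RR\cdot\RR_{>0}(\mathbf{e}_{i-1},\mathbf{e}_i)$ (a full-dimensional subset of $\Span(\mathbf{e}_{i-1},\mathbf{e}_i)$ when the generators are independent), that the direction $\RR\One$ contributes the third dimension exactly when $\One\notin\Span(\mathbf{e}_{i-1},\mathbf{e}_i)$, and that the degenerate cones (a generator equal to $0$) collapse the dimension by precisely the stated amount. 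Once this geometric reduction is secured, the remaining work is the routine linear-independence bookkeeping via the staircase of leading and trailing zeros, which is straightforward.
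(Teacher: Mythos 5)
Your proof is correct and follows the same route the paper intends: the paper states \Cref{cool-dimension-1} without a separate proof, treating it as an immediate consequence of \Cref{prop:1-1-1}, \Cref{lem:images}, and the explicit identities $\mathbf{e}_n=\mathbf{e}_{n+1}=0$, $\mathbf{e}_0=x^{(n)}\One-X$, $\mathbf{e}_1=X-x^{(1)}\One$ recorded just before it, and your argument fills in exactly those dimension counts (including the correct observation that $\One\in\Span(\mathbf{e}_0,\mathbf{e}_1)$ for $i=1$, and the pivot-based independence checks for the generic indices).
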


\subsection{Proof of Theorem \ref{shallow-case-thm}, for the Architecture \texorpdfstring{$(1,N_1,1)$}{(1,N1,N2)}}\label{archi-ONENN}

We extend the results from the previous section to neural networks with architecture $(1,N_{1},1)$, for a positive integer $N_{1}\geq 1$. The sample \(X = (x^{(1)}, \dots, x^{(n)}) \in\RR^{1 \times n} \) is assumed to satisfy \eqref{x_ordered}. We simplify notations  and, throughout the section, denote the parameters of the neural network $w,b \in \RR^{N_{1}}$, $V\in \RR^{1\times N_{1}}$ and $c \in \mathbb{R}^{1}$. The image of $X$ is defined by
\begin{equation}\label{arch-1-d1-d2}
	f_{(w,V,b,c)}(X) = V\sigma (wX + b\mathbf{1}) + c\mathbf{1} \in \mathbb{R}^{1 \times n},
\end{equation}
where $\One \in\RR^{1 \times n}$.

Expanding this equation to explicitly represent the vector-matrix multiplication of the second layer yields
\begin{equation}\label{eqn:fexp-d1-d2}
    f_{(w,V,b,c)}(X) = 
    \sum_{i=1}^{N_{1}} V_{1,i} \sigma\left( w_{i} X + b_{i} \vb 1\right) + c  \vb 1
\end{equation}
with $\One \in\RR^{1 \times n}$. As in the previous section, we also explicit the important parameters for the activation pattern. The latter is given by the matrix $a(X, w, b) \in \RR^{N_{1}\times n}$ defined, for all $i=1, \dots, N_{1}$ and $j=1, \dots, n$ by
\[
	a(X, w, b)_{i,j} = 
			 \begin{cases}
			1 & \text{if } w_{i} x^{(j)} + b_{i} \geq 0,\\
			0 & \text{otherwise}.
			\end{cases}
\]
Note that for each $i=1, \dots, N_{1}$, the row $a(X, w,b)_{i,:}$ coincides with the activation pattern $a(X, w_{i}, b_{i})$ for the $(1,1,1)$ architecture. As discussed in the previous section, since $X$ satisfies \eqref{x_ordered}, a consequence of \Cref{lem:sets} is that the achievable activation patterns associated with the $(1,1,1)$ architecture are the row vectors $\vb 1_{1}, \dots, \vb 1_{2n}$, as defined in \eqref{eqn:1i-vectors}. This leads us to introduce the following notation: for a $N_{1}$-tuple $\alpha = (\alpha_{1}, \dots, \alpha_{N_{1}})\in\{1, \dots, 2n\}^{N_{1}}$, we define $\vb 1_{\alpha}$ as 
	\[
	\vb 1_{\alpha} = \begin{pmatrix}
		\vb 1_{\alpha_{1}}\\
		\vdots\\
		\vb 1_{\alpha_{N_{1}}}
		\end{pmatrix} \in \RR^{N_{1}\times n}.
	\]
As for the architecture $(1,1,1)$, we define the sets
\begin{equation}\label{def-overlineU}
 	\overline{\mathcal U}^X_{\alpha} = \{(w,b)\in \RR^{N_{1}}\times \RR^{N_{1}} \mid a(X, w,b) = \vb 1_{\alpha} \}.   
\end{equation}

It is immediate to verify that the pair of vectors $(w,b)\in \RR^{N_{1}}\times \RR^{N_{1}}$ belongs to $\overline{\mathcal U}^X_{\alpha}$ if and only if, for all $i=\lb 1, N_{1}\rb$, the $2$ dimensional point $(w_{i}, b_{i})$ belongs to $\overline\cU^X_{\alpha_{i}}$. Therefore, due to \Cref{lem:sets}, it follows that the activation regions $\overline{\mathcal U}^X_{\alpha}$, for $\alpha \in\{1, \dots, 2n\}^{N_{1}}$, partition $\RR^{N_{1}}\times \RR^{N_{1}}$.

For any $\calC\subset \RR^{N_{1}}\times \RR^{N_{1}}$, we define
\[
f_{\calC \times \RR^{1\times N_{1}} \times \RR} (X) = \big\{f_{(w,V,b,c)}(X) \in\RR^{1\times n} \mid (w,b) \in \calC,  V\in \RR^{1\times N_{1}}\mbox{ and } c\in \RR\big\}.
\]
Using that $\bigcup_{\alpha \in \{1, \dots, 2n\}^{N_{1}}} \overline{\mathcal U}^X_{\alpha} = \RR^{N_{1}}\times \RR^{N_{1}}$, we have 
$$f_{\RR^{N_1}\times \RR^{N_1}\times \RR^{1\times N_{1}} \times \RR} (X) = \bigcup_{\alpha \in \{1, \dots, 2n\}^{N_{1}}} f_{\overline{\mathcal U}^X_{\alpha}\times \RR^{1\times N_{1}} \times \RR} (X).
$$

We now describe the output of neural networks with the architecture $(1,N_{1},1)$ across the activation regions $\overline{\mathcal U}^X_{\alpha}$, employing  the sets $\mathcal V_{i}\subset \RR^{1\times n}$ introduced in \eqref{def_Vi} and parameterized in \Cref{lem:images}, for the architecture $(1,1,1)$.

\begin{prop}[Architecture $(1,N_{1},1)$]\label{prop:1d1d2}
Assume that \(n \geq 2\) and the sample \(X\in\RR^{1 \times n}\) satisfies \eqref{x_ordered}. We have, for any $\alpha \in \{1, \dots, 2n\}^{N_{1}}$, 
\begin{equation}\label{param_image}
f_{\overline{\mathcal U}^X_{\alpha}\times \RR^{1\times N_{1}} \times \RR} (X)
	=    \RR \mathcal V_{\alpha_{1}} + \cdots + \RR \mathcal V_{\alpha_{N_{1}}} + \RR\vb 1 ,
\end{equation}
where $\vb 1 \in\RR^{1 \times n}$. 

Moreover for all $\theta\in \Inter{\left(\overline{\mathcal U}^X_{\alpha}\right)}\times \RR^{1\times N_{1}} \times \RR$, the mapping $\theta' \longmapsto f_{\theta'}(X)$ is differentiable at $\theta$ and
\begin{equation}\label{eruvnoetb}
    \RK{ \DT f_{\theta}(X) } 
	= \rank\left(\vb 1, \vb 1_{\alpha_{1}} \odot X, \vb 1_{\alpha_{1}}, \ldots,  \vb 1_{\alpha_{N_1}} \odot X, \vb 1_{\alpha_{N_1}}
    \right),
\end{equation}
where we remind that $\odot$ stands for the Hadamard product.

\end{prop}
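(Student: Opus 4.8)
The plan is to dispatch the two assertions separately, since the image‑set identity \eqref{param_image} is essentially bookkeeping whereas the rank formula \eqref{eruvnoetb} carries the content. For \eqref{param_image} I would start from the expansion \eqref{eqn:fexp-d1-d2}, i.e. $f_{(w,V,b,c)}(X)=\sum_{i=1}^{N_1}V_{1,i}\,\sigma(w_iX+b_i\One)+c\One$, and use the equivalence already recorded just before the proposition: $(w,b)\in\overline{\mathcal U}^X_{\alpha}$ if and only if $(w_i,b_i)\in\overline\cU^X_{\alpha_i}$ for every $i$. Because these constraints decouple across $i$ and each $V_{1,i}$ varies freely over $\RR$, the set of reachable outputs is the Minkowski sum over $i$ of the sets $\{V_{1,i}\sigma(w_iX+b_i\One):V_{1,i}\in\RR,\ (w_i,b_i)\in\overline\cU^X_{\alpha_i}\}$, augmented by $\RR\One$. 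By the definition \eqref{def_Vi} of $\mathcal V_{\alpha_i}$ together with the free scaling, the $i$‑th summand is exactly $\RR\mathcal V_{\alpha_i}$, which yields \eqref{param_image}; the explicit description then follows by substituting \Cref{lem:images}.

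For the rank formula I would first record differentiability: on $\Inter(\overline{\mathcal U}^X_{\alpha})$ the activation matrix is constantly $\vb 1_{\alpha}$ and the pre‑activations stay bounded away from $0$, so locally $f_{(w,V,b,c)}(X)=\sum_i V_{1,i}\big(\vb 1_{\alpha_i}\odot(w_iX+b_i\One)\big)+c\One$ is polynomial in $\theta$ and hence differentiable (this is also covered by \Cref{theorem:constant:rank}, last item). Differentiating the coordinates gives the Jacobian columns $\partial_{w_i}f=V_{1,i}(\vb 1_{\alpha_i}\odot X)$, $\partial_{b_i}f=V_{1,i}\vb 1_{\alpha_i}$ (using $\vb 1_{\alpha_i}\odot\One=\vb 1_{\alpha_i}$), $\partial_{V_{1,i}}f=w_i(\vb 1_{\alpha_i}\odot X)+b_i\vb 1_{\alpha_i}$, and $\partial_c f=\One$. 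Every column lies in the span of $\One,\vb 1_{\alpha_1}\odot X,\vb 1_{\alpha_1},\dots,\vb 1_{\alpha_{N_1}}\odot X,\vb 1_{\alpha_{N_1}}$, so the inequality $\RK{\DT f_{\theta}(X)}\le\rank(\One,\vb 1_{\alpha_1}\odot X,\vb 1_{\alpha_1},\dots,\vb 1_{\alpha_{N_1}}\odot X,\vb 1_{\alpha_{N_1}})$ is immediate.

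The reverse inequality is where I expect the difficulty to concentrate. When all $V_{1,i}\neq 0$ it is painless, since $\partial_{w_i}f$ and $\partial_{b_i}f$ then recover $\vb 1_{\alpha_i}\odot X$ and $\vb 1_{\alpha_i}$ up to nonzero scalars, so the columns already span the claimed target. The delicate part is to obtain the same span for \emph{every} $\theta$ in the region, including parameters at which some $V_{1,i}=0$: there $\partial_{w_i}f$ and $\partial_{b_i}f$ vanish and one is left only with $\partial_{V_{1,i}}f=w_i(\vb 1_{\alpha_i}\odot X)+b_i\vb 1_{\alpha_i}$, a single vector inside the two‑dimensional block $\Span(\vb 1_{\alpha_i}\odot X,\vb 1_{\alpha_i})$. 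I would handle this by grouping the neurons according to the shared value of $\alpha_i$ and checking block by block that the surviving derivatives still generate each block $\Span(\vb 1_s\odot X,\vb 1_s)$ on the interior, where each $(w_i,b_i)\neq(0,0)$; this is precisely the step where the maximality built into the definition of $\cU^X_j$ and the identity $\widetilde\cU^X_j=\cU^X_j$ from \Cref{shallow-case-thm} must be brought in, and it is the crux of the argument. Finally, converting the resulting span $\Span(\One,\vb 1_s\odot X,\vb 1_s)$ into the $\bfe$‑generators of \eqref{def_bfe} is a routine change of generators that bridges \eqref{eruvnoetb} with \eqref{formule_rank_shallow}.
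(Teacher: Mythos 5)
Your treatment of \eqref{param_image}, of the differentiability claim, and of the upper bound $\RK{\DT f_{\theta}(X)}\leq\rank\left(\vb 1, \vb 1_{\alpha_{1}}\odot X, \vb 1_{\alpha_{1}},\ldots\right)$ coincides with what the paper does: the paper proves \eqref{param_image} by the same two inclusions, using the decoupling of the constraints across neurons and the definition \eqref{def_Vi} of $\mathcal V_{\alpha_i}$, and it obtains differentiability from the same local polynomial expression $f_{\theta'}(X)=\sum_i\bigl(V'_{1,i}w'_i(\vb 1_{\alpha_i}\odot X)+V'_{1,i}b'_i\vb 1_{\alpha_i}\bigr)+c'\One$. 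Up to that point your argument is correct.

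The gap is in the reverse inequality, and your proposed fix does not close it. First, it is circular: the identity $\widetilde\cU^X_j=\cU^X_j$ of \Cref{shallow-case-thm} is itself deduced (via \Cref{cor:1d1d2} and \Cref{shallow-case-thm-accurate}) from the formula \eqref{eruvnoetb} you are trying to prove, so it cannot be invoked here; and the maximality in the definition \eqref{defU} only says the rank is maximal \emph{on} $\cU^X_j$, not that $\cU^X_j$ exhausts $\widetilde\cU^X_j$. Second, the assertion that at a parameter with $V_{1,i}=0$ the \lq surviving derivatives still generate each block\rq{} cannot hold when a pattern value $s$ is carried by a single neuron: that neuron then contributes only the one vector $\partial_{V_{1,i}}f=w_i(\vb 1_{s}\odot X)+b_i\vb 1_{s}$ inside a two-dimensional block, so the rank genuinely drops. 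Concretely, for $N_1=1$, $X=(0,1,2)$, $\alpha_1=2$ and $\theta=(w,V,b,c)=(1,0,-\tfrac12,0)$, the Jacobian columns are $(0,0,0)$, $(0,\tfrac12,\tfrac32)$, $(0,0,0)$, $(1,1,1)$, of rank $2$, whereas the right-hand side of \eqref{eruvnoetb} equals $3$. The paper's own proof takes a different route here: it does not compute the columns at all, but sandwiches $f_{B(\theta,\varepsilon)}(X)$ between a relatively open subset of $\calW_\alpha$ and $\calW_\alpha$ and concludes $\Range(\DT f_{\theta}(X))=\calW_\alpha$; note that this step likewise does not resolve the degenerate-$V$ case, since the image of a neighbourhood containing a relatively open piece of $\calW_\alpha$ does not determine the rank of the differential at the centre point (compare $x\mapsto x^3$ at $0$). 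Your column computation is in fact the more transparent argument and yields \eqref{eruvnoetb} cleanly on the (dense, full-measure) set where all $V_{1,i}\neq 0$; the degenerate set must either be excluded from the statement or handled by a genuinely different idea, not by appealing to downstream results.
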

\begin{proof}
To first prove that the set on the left of the equality sign of \eqref{param_image} is included in the set on the right, we consider  $\alpha \in \{1, \dots, 2n\}^{N_{1}}$ and an arbitrary $(w,b)\in \overline{\mathcal U}^X_{\alpha}$. For all $i=1, \dots, N_{1}$, we have $(w_{i}, b_{i})\in \overline{\mathcal{U}}^X_{\alpha_{i}}$,
and, using \eqref{def_Vi}, 
\[\sigma(w_{i}X + b_{i}\vb 1)\in \cV_i.
\]
Using \eqref{eqn:fexp-d1-d2}, we find
\begin{eqnarray*}
    	 f_{(w,V,b,c)}(X) & = &\sum_{i=1}^{N_{1}} V_{1,i} \sigma(w_{i}X + b_{i}\vb 1) + c \mathbf{1} , \quad \text{for }  V_{1,i}, \dots,  V_{1,N_{1}},c\in \RR, \\
         & \in & \RR \mathcal  V_{\alpha_{1}} + \cdots + \RR \mathcal V_{\alpha_{N_{1}}} + \RR\vb 1 .
\end{eqnarray*}
This proves that, for all $\alpha \in \{1, \dots, 2n\}^{N_{1}}$, 
\[f_{\overline{\mathcal U}^X_{\alpha}\times \RR^{1\times N_{1}} \times \RR} (X)
	~\subset~  \RR \mathcal V_{\alpha_{1}} + \cdots + \RR \mathcal V_{\alpha_{N_{1}}} + \RR\vb 1.
\]

Conversely, we consider $\alpha \in \{1, \dots, 2n\}^{N_{1}}$, arbitrary elements $y_i\in \cV_{\alpha_i}$, for all $i=1,\dots,N_1$, and arbitrary real numbers $ V_{1,1}, \dots,  V_{1,N_{1}}$, $c$. The point $\sum_{i=1}^{N_{1}} V_{1,i} y_i + c\One$ is therefore an arbitrary element of $\RR \mathcal  V_{\alpha_{1}} + \cdots + \RR \mathcal V_{\alpha_{N_{1}}} + \RR\vb 1.$
Using the definition of $\cV_i$, in \eqref{def_Vi}, we know that, for all $i$, there exists  $(w_i,b_i) \in \overline{\mathcal{U}}^X_{\alpha_{i}}$ such that 
\[y_i = \sigma(w_{i}X + b_{i}\vb 1).
\]
Setting $w=(w_i)_{i=1,\ldots,N_1}$, $b=(b_i)_{i=1,\ldots,N_1}$, and $V=(V_{1,i})_{i=1,\ldots,N_1}$, we find $(w,b) \in \overline{\mathcal{U}}^X_{\alpha}$ and 
\[\sum_{i=1}^{N_{1}} V_{1,i} y_i + c\One = f_{(w,V,b,c)}(X) \in f_{\overline{\mathcal U}^X_{\alpha}\times \RR^{1\times N_{1}} \times \RR} (X).
\]
This proves the converse inclusion and finishes the proof of \eqref{param_image}.

To prove the differentiability statement and \eqref{eruvnoetb}, we consider $\alpha \in \{1, \dots, 2n\}^{N_{1}}$, $\theta\in \Inter{\left(\overline{\mathcal U}^X_{\alpha}\right)}\times \RR^{1\times N_{1}} \times \RR$ and $\varepsilon>0$ such that $B(\theta,\varepsilon) \subset \Inter{\left(\overline{\mathcal U}^X_{\alpha}\right)}\times \RR^{1\times N_{1}} \times \RR$. We have, for all $\theta'=(w',V',b',c')\in B(\theta,\varepsilon)$,
\[    f_{\theta'}(X) = \sum_{i=1}^{N_{1}} V'_{1,i} \sigma(w'_{i}X + b'_{i}\vb 1) + c' \mathbf{1}.
\]
Using \eqref{property-activation} and \eqref{defn:activations} and  since $(w',b') \in \Inter{\left(\overline{\mathcal U}^X_{\alpha}\right)}$, we obtain
\begin{eqnarray}
    f_{\theta'}(X) & =&  \sum_{i=1}^{N_{1}} V'_{1,i} \Bigl(\vb 1_{\alpha_i}\odot (w'_{i}X + b'_{i}\vb 1)\Bigr) + c' \mathbf{1} \nonumber\\
    & =&  \sum_{i=1}^{N_{1}} \Bigl(V'_{1,i} w'_{i}~\bigl(\vb 1_{\alpha_i}\odot X\bigr) + V'_{1,i} b'_{i} ~\vb 1_{\alpha_i} + c' ~\mathbf{1} \Bigr). \label{ernunbt}
\end{eqnarray}  
The term on the right of the above equality sign is a polynomial in $\theta'$ and therefore  $\theta' \longmapsto f_{\theta'}(X)$ is differentiable at $\theta$. Also, given \eqref{ernunbt}, we can construct an open set $\cO_\varepsilon\subset \RR^{1\times n}$ such that
\[ \Bigl(\cO_\varepsilon \cap \calW_\alpha  \Bigr) ~\subset~ f_{B(\theta,\varepsilon)} (X) ~ \subset ~ \calW_\alpha,
\]
where
\[\calW_\alpha = \vect \left({\vb 1, \vb 1_{\alpha_{1}} \odot X, \vb 1_{\alpha_{1}}, \ldots,  \vb 1_{\alpha_{N_1}} \odot X, \vb 1_{\alpha_{N_1}} }
\right).
\]
Therefore $\Range(\DT f_{\theta}(X)) = \calW_\alpha$ and \eqref{eruvnoetb} holds.

This concludes the proof of \Cref{prop:1d1d2}.
\end{proof}

Notice that, in \eqref{param_image}, for all $\alpha \in \{1,\dots, 2n\}$, we deduce from \Cref{lem:images} that
\[
	\RR \mathcal V_{\alpha} + \RR \mathcal V_{\alpha} = \Span\{\vb e_{\alpha-1}, \vb e_{\alpha}\}.
\]

The next proposition makes the link with the notations in the main part of the article and makes a step towards the simplification of \eqref{eruvnoetb}.
\begin{prop}[Architecture $(1,N_{1},1)$]\label{cor:1d1d2}
Assume that \(n \geq 2\) and the sample \(X\in\RR^{1 \times n}\) satisfies \eqref{x_ordered}. For any $j\in\lb1,p_X\rb$, there exists $\alpha \in \{1, \dots, 2n\}^{N_{1}}$, such that
\[\widetilde{\mathcal U}^X_{j} = \Inter{\left(\overline{\mathcal U}^X_{\alpha}\right)}\times \RR^{1\times N_{1}} \times \RR
\]
and for all $\theta\in \widetilde{\mathcal U}^X_{j}$
\begin{equation}\label{rzpiqriuntb}
    	\RK{ \DT f_{\theta}(X) } 
	= \rank\left(\vb 1, \vb e_{\alpha_{1}-1}, \vb e_{\alpha_{1}}, \dots, \vb e_{\alpha_{N_{1}}-1}, \vb e_{\alpha_{N_{1}}}\right),
\end{equation}
where we remind that $\vb e_{0} = \vb e_{2n}$ and  $\vb e_{n} =\vb e_{n+1} =0 $.

As a consequence, $\widetilde{\mathcal U}^X_{j}=\mathcal U^X_{j}$.
\end{prop}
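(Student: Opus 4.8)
The plan is to obtain the statement from \Cref{prop:1d1d2}, which already carries all the analytic content; the remaining work is essentially bookkeeping, in three steps.

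First I would identify the regions. For the architecture $(1,N_1,1)$ the activation pattern $a(X,\theta)$ depends only on the hidden-layer weights $(w,b)$, so the value $\Delta^X_j$ attached to $\widetilde{\mathcal U}^X_{j}$ in \eqref{def:u:j:tilde} is necessarily one of the matrices $\vb 1_\alpha$: indeed, since $X$ satisfies \eqref{x_ordered}, \Cref{lem:sets} shows that each row of an achievable pattern is one of $\vb 1_1,\dots,\vb 1_{2n}$. Consequently $\{\theta\mid a(X,\theta)=\vb 1_\alpha\}=\overline{\mathcal U}^X_{\alpha}\times\RR^{1\times N_{1}}\times\RR$, and since $\RR^{1\times N_{1}}\times\RR$ equals its own interior, taking interiors gives $\widetilde{\mathcal U}^X_{j}=\Inter(\overline{\mathcal U}^X_{\alpha})\times\RR^{1\times N_{1}}\times\RR$ for the $\alpha$ with $\Delta^X_j=\vb 1_\alpha$; the nonempty regions $\widetilde{\mathcal U}^X_{j}$ correspond precisely to the $\alpha$ with $\Inter(\overline{\mathcal U}^X_{\alpha})\neq\emptyset$.

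Next I would rewrite the rank. On such a region, \Cref{prop:1d1d2} supplies differentiability together with $\Range(\DT f_{\theta}(X))=\vect(\vb 1,\vb 1_{\alpha_{1}}\odot X,\vb 1_{\alpha_{1}},\dots,\vb 1_{\alpha_{N_1}}\odot X,\vb 1_{\alpha_{N_1}})$, i.e.\ \eqref{eruvnoetb}. To pass to \eqref{rzpiqriuntb} I would invoke, for each $i\in\lb1,2n\rb$, the subspace identity $\vect(\vb 1_{i}\odot X,\vb 1_{i})=\Span\{\vb e_{i-1},\vb e_{i}\}$ (with $\vb e_{0}:=\vb e_{2n}$). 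This is exactly the observation $\RR\mathcal V_{i}=\Span\{\vb e_{i-1},\vb e_{i}\}$ recorded just after \Cref{prop:1d1d2}, valid because $\mathcal V_{i}$ consists of the vectors $w(\vb 1_{i}\odot X)+b\vb 1_{i}$ and is parameterized as a cone over $\vb e_{i-1},\vb e_{i}$ in \Cref{lem:images}. Substituting term by term into the span above yields $\Range(\DT f_{\theta}(X))=\vect(\vb 1,\vb e_{\alpha_{1}-1},\vb e_{\alpha_{1}},\dots,\vb e_{\alpha_{N_1}-1},\vb e_{\alpha_{N_1}})$, and taking dimensions gives \eqref{rzpiqriuntb}.

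Finally, the right-hand side of \eqref{rzpiqriuntb} depends only on $\alpha$, hence $\RK{\DT f_{\theta}(X)}$ is constant on $\widetilde{\mathcal U}^X_{j}$. Thus $r^X_j=\max_{\theta\in\widetilde{\mathcal U}^X_{j}}\RK{\DT f_{\theta}(X)}$ is attained at every point, so by the definition \eqref{defU} we conclude $\mathcal U^X_{j}=\widetilde{\mathcal U}^X_{j}$. The only step requiring genuine care is the subspace identity of the second paragraph: it must be checked across the cases $\alpha_i=1$, $\alpha_i\in\lb2,n\rb$, $\alpha_i=n+1$, and $\alpha_i\in\lb n+2,2n\rb$, using the boundary conventions $\vb e_{n}=\vb e_{n+1}=0$ and $\vb e_{0}=\vb e_{2n}$, so that the degenerate one-dimensional cases and the trivial case $\alpha_i=n+1$ are correctly accounted for. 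Since that identity is already isolated right after \Cref{prop:1d1d2}, this proposition is essentially a restatement and presents no further obstacle.
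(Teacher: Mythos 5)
Your proposal is correct and follows essentially the same route as the paper: the region identification is read off from the definitions of $\widetilde{\cU}^X_j$ and $\overline{\cU}^X_\alpha$ together with \Cref{lem:sets}, the rank formula is obtained from \eqref{eruvnoetb} of \Cref{prop:1d1d2} via the subspace identity $\vect\bigl(\vb 1_{i}\odot X,\vb 1_{i}\bigr)=\vect\bigl(\vb e_{i-1},\vb e_{i}\bigr)$, and $\widetilde{\cU}^X_j=\cU^X_j$ follows from the constancy of the right-hand side of \eqref{rzpiqriuntb}. The only cosmetic difference is that the paper verifies the subspace identity by a direct four-case computation from \eqref{def_bfe} and \eqref{defDesOnes}, whereas you deduce it from the cone parameterization of $\mathcal V_i$ in \Cref{lem:images} (noting that the remark after \Cref{prop:1d1d2} actually states $\RR\mathcal V_\alpha+\RR\mathcal V_\alpha=\Span\{\vb e_{\alpha-1},\vb e_\alpha\}$, so you still implicitly use that each $\overline{\cU}^X_i$ spans $\RR^2$); both justifications are sound.
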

\begin{proof}
The first statement is a direct consequence of the definition of $\widetilde \cU^X_j$, in \eqref{def:u:j:tilde}, and the definition of $\overline{\mathcal U}^X_{\alpha}$, in \eqref{def-overlineU}.

To prove \eqref{rzpiqriuntb}, we prove in the following that, 
\begin{equation}\label{jetounbt}
    \vect\Bigl(\vb e_{i}, \vb e_{i-1} \Bigr) = \vect\Bigl(\vb 1_{i} \odot X, \vb 1_{i} \Bigr), \qquad \mbox{for all }i\in\{1,\ldots,2n\}.
\end{equation}
Once this is established, \eqref{rzpiqriuntb} is indeed a direct consequence of \eqref{eruvnoetb} and \eqref{jetounbt}.

To prove \eqref{jetounbt}, we distinguish four cases: $i=1$, $i\in\{2,\ldots,n\}$, $i=n+1$, and $i\in\{n+2,\ldots,2n\}$. All the cases rely on \eqref{def_bfe} and \eqref{defDesOnes}, which we remind here: For all $i\in\{1,\ldots, n\}$,
\[\left\{\begin{array}{l}
    \bfe_{i} = (0,\dots, \underset{\underset{i}{\uparrow}}{0},  x^{(i+1)} - x^{(i)}, \dots, x^{(n)}-x^{(i)}), \\  
	\bfe_{n+i} = (x^{(i)}-x^{(1)}, \dots, x^{(i)}-x^{(i-1)},  \underset{\underset{i}{\uparrow}}{0}, \dots, 0).
    \end{array}\right.
\]
We also set $\bfe_0 = \bfe_{2n}$. Finally, for all $i\in\{1,\ldots, n\}$,
\[\One_{i} = (0,\dots,0, \underset{\underset{i}{\uparrow}}{1},  \dots, 1)\in\RR^{1\times n} \quad \mbox{and}\quad
	\One_{n+i} = (1, \dots,1, \underset{\underset{i}{\uparrow}}{0}, \dots, 0)\in\RR^{1\times n}.
\]
\begin{itemize}
    \item Case $i=1$: We have $\bfe_1= \vb 1_{1} \odot X - x^{(1) }\vb 1_{1}$ and  $\bfe_{0} =\bfe_{2n} = x^{(n)} \vb 1_1 - \vb 1_1\odot X$. This proves that $\vect\bigl(\vb e_{1}, \vb e_{0} \bigr) \subset \vect\bigl(\vb 1_{1} \odot X, \vb 1_{1} \bigr)$.

    Conversely, since $x^{(n)} \neq x^{(1)}$, the equalities $(x^{(n)} - x^{(1)})\vb 1_{1} =  \bfe_1 + \bfe_0$ and $\vb 1_{1} \odot X= X = \vb e_1 + x^{(1)} \vb 1_{1} $ prove that $\vect\bigl(\vb 1_{1} \odot X, \vb 1_{1} \bigr) \subset \vect\bigl(\vb e_{1}, \vb e_{0} \bigr)$.
    \item Case $i\in\{2,\ldots,n\}$: We have $\bfe_i = \vb 1_{i} \odot X - x^{(i) }\vb 1_{i}$ and  $\bfe_{i-1} = \bfe_{i} + ( x^{(i)} - x^{(i-1)})\vb 1_{i}$, which proves that $\vect\bigl(\vb e_{i}, \vb e_{i-1} \bigr) \subset \vect\bigl(\vb 1_{i} \odot X, \vb 1_{i} \bigr)$. We conclude using that, since $X$ satisfies \eqref{x_ordered}, we have
    \begin{eqnarray*}
        \dim  \vect\bigl(\vb e_{i}, \vb e_{i-1} \bigr) & = & \left\{\begin{array}{ll}
        2 & \mbox{ if } i\in\{2,\ldots, n-1\} \\
        1 & \mbox{ if } i=n
    \end{array}\right. \\
    & = & \dim \vect\bigl(\vb 1_{i} \odot X, \vb 1_{i} \bigr).
    \end{eqnarray*}
    \item Case $i=n+1$: We have $\vb 1_{n+1} = \vb 1_{n+1} \odot X = \bfe_{n+1} = \bfe_n = 0$ and therefore  $\vect\bigl(\vb e_{n+1}, \vb e_{n} \bigr) = \vect\bigl(\vb 1_{n+1} \odot X, \vb 1_{n+1} \bigr)$.
    \item Case $i\in\{n+2,\ldots,2n\}$: The equalities $\bfe_i = x^{(i-n)} \vb 1_i - \vb 1_i \odot X$ and $\bfe_{i-1} =\bfe_{i} + \bigl(x^{(i-n-1)} - x^{(i-n)}\bigr) \vb 1_i $ lead to $\vect\bigl(\vb e_{i}, \vb e_{i-1} \bigr) \subset \vect\bigl(\vb 1_{i} \odot X, \vb 1_{i} \bigr)$.

    We conclude using that, since $X$ satisfies \eqref{x_ordered}, we have
    \begin{eqnarray*}
        \dim  \vect\bigl(\vb e_{i}, \vb e_{i-1} \bigr) & = & \left\{\begin{array}{ll}
        2 & \mbox{ if } i\in\{n+3,\ldots, 2n\} \\
        1 & \mbox{ if } i=n+2
    \end{array}\right. \\
    & = & \dim \vect\bigl(\vb 1_{i} \odot X, \vb 1_{i} \bigr).
    \end{eqnarray*}
\end{itemize}

The fact that $\widetilde{\mathcal U}^X_{j}=\mathcal U^X_{j}$ is then a direct consequence of the definition of $\mathcal U^X_{j}$, in \eqref{defU}, and \eqref{rzpiqriuntb}. This concludes the proof of Proposition \ref{cor:1d1d2}.
\end{proof}

Before proving Theorem \ref{shallow-case-thm}, we state and prove a similar theorem with more accurate, but less interpretable, upper and lower bounds. We will then deduce the bounds of Theorem \ref{shallow-case-thm} from \eqref{upper_bound_rank_shallow-accurate}.

\begin{thm}\label{shallow-case-thm-accurate}
    Consider any deep fully-connected ReLU network architecture $(E,V, Id)$, with $L=2$ and $N_0=N_2=1$. Consider $n\in\NN^*$, and a sample $X=(x^{(1)},x^{(2)}, \ldots,  x^{(n)})\in\RR^{1\times n}$ satisfying \eqref{x_ordered_main_part}. 

    For any $j\in\lb1,p_X\rb$, there exists $\alpha \in \{1,\dots,2n\}^{N_1}$ such that for all $\theta \in \widetilde\cU_j^X$ and all $k\in \lb1,N_1\rb$, $a(X,\theta)_{k,:}= \One_{\alpha_k} $, and 
    \begin{equation}\label{formule_rank_shallow-accurate}
    \RK{ \DT f_{\theta}(X) } = \RK{\One, \bfe_{\alpha_1 -1},\bfe_{\alpha_1 }, \ldots ,\bfe_{\alpha_{N_1} -1},\bfe_{\alpha_{N_1} }  }.
    \end{equation}
     As a consequence, $\widetilde\cU_j^X = \cU_j^X$ and for all $\theta \in \cU_j^X$
    \begin{equation}\label{upper_bound_rank_shallow-accurate}
     \Big(1+\frac{1}{2}\ell^0_{neurons}(\alpha) \Big) \leq 
    \RK{ \DT f_{\theta}(X) }  \\
    \leq  \min\Big(1+\ell^0_{neurons}(\alpha), \ell^0_{linear}(f_{\theta}, X) \Big),
    \end{equation}
    with
    \begin{multline}\label{def-l0-neurons}
    \ell^0_{neurons}(\alpha) = \\
    \left|\big\{l\in\lb0, 2n\rb\setminus\{n,n+1\} \mid \mbox{there exists } k\in\lb 1,N_1\rb, \alpha_k = l \mbox{ or }\alpha_k-1 = l\big\}\right|,     
    \end{multline}
    represents the number of effective neurons in the hidden-layer and
    \begin{equation}\label{def-l0-linear}
\ell^0_{linear}(f_{\theta}, X)= \sum_{\delta\in\{0,1\}^{N_1} } \min\Bigl(2, \Bigl|\Bigl\{i\in\lb1,n\rb \mid a(x^{(i)}, \theta)=\delta\Bigr\}\Bigr| \Bigr)
    \end{equation}
    is the number of linear regions of $f_\theta$ perceived by $X$.
\end{thm}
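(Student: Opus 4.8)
The first assertion of the theorem---the existence of $\alpha$, the rank formula \eqref{formule_rank_shallow-accurate}, and the identity $\widetilde\cU_j^X = \cU_j^X$---is exactly the content of \Cref{cor:1d1d2}, so the plan is simply to invoke it. All the real work goes into the two-sided bound \eqref{upper_bound_rank_shallow-accurate} on
\[
\RK{\DT f_\theta(X)} = \dim \calW_\alpha, \qquad \calW_\alpha = \vect\left(\One, \bfe_{\alpha_1-1}, \bfe_{\alpha_1}, \dots, \bfe_{\alpha_{N_1}-1}, \bfe_{\alpha_{N_1}}\right),
\]
the range of the differential identified in \Cref{prop:1d1d2}. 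Throughout I would use the explicit coordinates of $\bfe_i$ from \eqref{def_bfe}, the facts $\bfe_n=\bfe_{n+1}=0$ and $\bfe_0=\bfe_{2n}$, and the equivalent generating form $\calW_\alpha = \vect(\One, \One_{\alpha_1}\odot X, \One_{\alpha_1}, \dots)$ from \eqref{eruvnoetb}.

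For the upper bound $\RK{\DT f_\theta(X)} \leq \ell^0_{linear}(f_\theta,X)$, I would partition $\lb 1,n\rb$ into the groups $G_\delta = \{i : a(x^{(i)},\theta)=\delta\}$, yielding the orthogonal decomposition $\RR^{1\times n} = \bigoplus_\delta \RR^{G_\delta}$. On a fixed group the activation of every hidden neuron is constant, so each generator restricts on $G_\delta$ to a multiple of $\One_{G_\delta}$ (the vectors $\One_{\alpha_k}$ and $\One$) or of $X_{G_\delta}$ (the vectors $\One_{\alpha_k}\odot X$); hence the projection of $\calW_\alpha$ onto $\RR^{G_\delta}$ lies in $\Span(\One_{G_\delta}, X_{G_\delta})$, of dimension at most $\min(2,|G_\delta|)$. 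Since $\calW_\alpha$ embeds into the direct sum of these projections, $\dim\calW_\alpha \leq \sum_\delta \min(2,|G_\delta|)$, which is precisely \eqref{def-l0-linear}. The other upper bound $\RK{\DT f_\theta(X)} \leq 1+\ell^0_{neurons}(\alpha)$ is immediate: after discarding $\bfe_n,\bfe_{n+1}=0$ and identifying $\bfe_0=\bfe_{2n}$, the family $\{\bfe_{\alpha_k-1},\bfe_{\alpha_k}\}_k$ contains exactly $\ell^0_{neurons}(\alpha)$ distinct nonzero vectors, so $\calW_\alpha$ is generated by $\One$ together with at most that many vectors.

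The crux is the lower bound $1+\tfrac12\ell^0_{neurons}(\alpha) \leq \RK{\DT f_\theta(X)}$. Here I would exploit the staircase structure of \eqref{def_bfe}: the \emph{right} vectors $\bfe_l$, $1\leq l\leq n-1$, are supported on $\{l+1,\dots,n\}$ with distinct leading coordinates, hence independent and contained in $\{v:v_1=0\}$; dually the \emph{left} vectors $\bfe_l$, $l\in\{0\}\cup\lb n+2,2n\rb$, are supported on a head with distinct trailing coordinates, independent and contained in $\{v:v_n=0\}$. Splitting the distinct nonzero generators into these two families $D_R,D_L$ (so $|D_R|+|D_L|=\ell^0_{neurons}(\alpha)$), and noting that $\One$ lies in neither $\{v_1=0\}$ nor $\{v_n=0\}$, I obtain $\dim\vect(\One,D_R)=1+|D_R|$ and $\dim\vect(\One,D_L)=1+|D_L|$. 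Therefore
\[
\dim\calW_\alpha \ \geq\ \max\bigl(1+|D_R|,\,1+|D_L|\bigr)\ \geq\ 1+\tfrac12\bigl(|D_R|+|D_L|\bigr)\ =\ 1+\tfrac12\ell^0_{neurons}(\alpha),
\]
and integrality of the rank absorbs the half-integer (a maximum of two integers exceeds the ceiling of their average).

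The main obstacle is exactly this last step: keeping the right and left families entirely separate gives a bound that is too weak, and the clean route is the observation that each family is already in echelon form and independent from $\One$, so the rank dominates the average of the two one-sided counts. The one genuine subtlety is the identification $\bfe_0=\bfe_{2n}$ (i.e.\ whether \eqref{def-l0-neurons} should double count the two extreme breakpoints): one must count \emph{distinct nonzero} vectors, assign $\bfe_0=\bfe_{2n}$ to $D_L$, and verify that the degenerate cases $D_R=\emptyset$ or $D_L=\emptyset$ still satisfy $\dim\vect(\One,\cdot)=1+|\cdot|$ (which forces $n\geq 2$). Finally, the cruder bounds \eqref{upper_bound_rank_shallow} of \Cref{shallow-case-thm} follow by comparing $\ell^0_{neurons}(\alpha)$ and $\ell^0_{linear}(f_\theta,X)$ with $|\cA(X,\theta)|$: each perceived linear region contributes between $1$ and $2$ to $\ell^0_{linear}$, and the number of distinct breakpoints differs from the number of perceived regions by a bounded amount, which converts \eqref{upper_bound_rank_shallow-accurate} into \eqref{upper_bound_rank_shallow}.
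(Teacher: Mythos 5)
Your proposal follows essentially the same route as the paper's proof. The first block (existence of $\alpha$, the rank formula \eqref{formule_rank_shallow-accurate}, and $\widetilde\cU_j^X=\cU_j^X$) is indeed just \Cref{cor:1d1d2}. For $\RK{\DT f_\theta(X)}\leq \ell^0_{linear}(f_\theta,X)$, the paper groups the \emph{columns} of the matrix built from $\One$ and the $\bfe_l$ by activation pattern and shows that within a group every column has the form $v+x^{(i)}u$, so all but two columns per group can be deleted; your projection of $\calW_\alpha$ onto $\bigoplus_\delta \RR^{G_\delta}$ is the dual formulation of the same observation and is equally valid. The bound by $1+\ell^0_{neurons}(\alpha)$ is the row count in both proofs, and the lower bound is obtained in both by splitting the nonzero generators into a right echelon family (zero prefix, contained in $\{v_1=0\}$) and a left echelon family (zero suffix, contained in $\{v_n=0\}$), adjoining $\One$ to each, and keeping the larger.

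The subtlety you flag at the end, the identification $\bfe_0=\bfe_{2n}$, is real, and the verification you defer cannot actually be completed as stated. If some neuron has $\alpha_k=1$ and another has $\alpha_{k'}=2n$, then both $0$ and $2n$ lie in the index set defining $\ell^0_{neurons}(\alpha)$ in \eqref{def-l0-neurons} yet contribute a single vector, so the number of distinct nonzero generators is $\ell^0_{neurons}(\alpha)-1$ and your inequality $\max(1+|D_R|,1+|D_L|)\geq 1+\tfrac12\ell^0_{neurons}(\alpha)$ can fail when $|D_R|=|D_L|$. Concretely, take $n=2$, $N_1=2$ and $\alpha=(1,4)$ (one neuron active on both examples, one active only on $x^{(1)}$): then $\ell^0_{neurons}(\alpha)=|\{0,1,4\}|=3$, while $\RK{\DT f_\theta(X)}=\RK{\One,\bfe_0,\bfe_1,\bfe_3,\bfe_4}=2$ since the output space is $\RR^{1\times 2}$, so the claimed bound $1+\tfrac32\leq \RK{\DT f_\theta(X)}$ is violated. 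This is not a defect specific to your argument: the paper's staircase extraction makes the same implicit assumption that the $1+\ell^0_{neurons}(\alpha)$ rows of $A'$ are pairwise distinct and in echelon position, and breaks on exactly the same configuration (two identical rows $\bfe_0=\bfe_{2n}$ in the lower block). The repair is to count indices modulo the identification $0\equiv 2n$, i.e., to define $\ell^0_{neurons}$ as the number of \emph{distinct nonzero vectors} $\bfe_l$; with that convention both your argument and the paper's go through verbatim, and your closing derivation of the cruder bounds of \Cref{shallow-case-thm} from \eqref{upper_bound_rank_shallow-accurate} matches the paper's Lemmas \ref{lem12} and \ref{dernier_lemme}.
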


Note that we can also write $\ell^0_{neurons}$ as
\[
\ell^0_{neurons}(\alpha) =
\left| 
\left\{ 
\alpha_1,\alpha_1-1,
\ldots,
\alpha_{N_1},\alpha_{N_1}-1
\right\}
\backslash 
\left\{ 
n,n+1
\right\}
\right|.
\]

\begin{proof}
Notice first that the hypotheses on the neural network architecture and the sample $X$ in Theorem \ref{shallow-case-thm-accurate} are identical to the hypotheses in this section.

Consider  $j\in\lb1,p_X\rb$. Proposition \ref{cor:1d1d2} guarantees that there exists $\alpha \in \{1, \dots, 2n\}^{N_{1}}$, such that
\[\widetilde{\mathcal U}^X_{j} = \Inter{\left(\overline{\mathcal U}^X_{\alpha}\right)}\times \RR^{1\times N_{1}} \times \RR.
\]
Using the definition of $\widetilde \cU^X_j$, in \eqref{def:u:j:tilde}, and the definition of $\overline{\mathcal U}^X_{\alpha}$, in \eqref{def-overlineU}, we also know that $a(X,\theta) = {\mathbf 1}_{\alpha}$ which implies that for all $\theta \in \widetilde\cU_j^X$ and all $k\in\lb1,N_1\rb$, 
\[a(X,\theta)_{k,:} = {\mathbf 1}_{\alpha_k}.
\]
The second statement of Proposition \ref{cor:1d1d2} is exactly \eqref{formule_rank_shallow-accurate}. This guarantees that the part of Theorem \ref{shallow-case-thm-accurate} until \eqref{formule_rank_shallow-accurate} holds.

Let us now prove \eqref{upper_bound_rank_shallow-accurate}.

To do so, we first remove the elements of the list $\left(\vb 1, \vb e_{\alpha_{1}-1}, \vb e_{\alpha_{1}}, \dots, \vb e_{\alpha_{N_{1}}-1}, \vb e_{\alpha_{N_{1}}}\right) $ which do not contribute to the rank (those which are repeated or null). We consider
    \begin{equation}\label{eriquvt}
        \cL(\alpha)= \big\{l\in\lb0, 2n\rb\setminus\{n,n+1\} \mid \mbox{there exists } k\in\lb1,N_1\rb, \alpha_k = l \mbox{ or }\alpha_k-1 = l\big\},
    \end{equation}
 and we have
    \[
    \RK{ \DT f_{\theta}(X) } = \RK{\{\One\}\cup \{\vb e _l\mid {l\in \cL(\alpha)}\}}.
    \]

    To express the right-hand side of this equation in matrix form, let $A \in \RR^{(1 + |\cL(\alpha)|) \times n}$ be the matrix whose first row is the vector $\One$, and whose remaining rows are the vectors $\{\vb e_l \mid l \in \cL(\alpha)\}$. We obtain
    \[\RK{ \DT f_{\theta}(X) } = \RK{A}.
    \]
Notice that, given \eqref{def-l0-neurons}, we have $\ell^0_{neurons}(\alpha) = |\cL(\alpha)|$. Given the definition of $\ell^0_{linear}(f_{\theta}, X)$, in \eqref{def-l0-linear}, to establish the upper-bound in \eqref{upper_bound_rank_shallow-accurate}, it suffices to prove that for any activation pattern $\delta\in\{0,1\}^{N_1}$, if
\[I(\delta) = \bigl\{i\in\lb1,n\rb \mid a(x^{(i)}, \theta)=\delta\bigr\}
\]
contains $3$ elements or more, the corresponding columns of $A$ are linearly dependent. If this property holds, indeed, we can remove the linearly dependent columns from $A$ and obtain a matrix $A'\in\RR^{(1 + |\cL(\alpha)|) \times \ell^0_{linear}(f_{\theta}, X)}$ such that $\RK{A'} = \RK{A}$ and deduce the upper-bound.

To prove the property, we consider $\delta\in\{0,1\}^{N_1}$. We assume that $|I(\delta)| > 2$ and we prove that, 
\begin{equation}\label{erttbunojqetb}
\mbox{for all }i\in I(\delta), \qquad A_{:,i} = v + x^{(i)} u,
\end{equation}
where the vectors $v\in\RR^{1+|\cL(\alpha)|}$ and $u\in\RR^{1+|\cL(\alpha)|}$ depend on $\delta$ but not on $i$. More precisely, denoting by $l_m$ the element of $\cL(\alpha)$ corresponding to the row $m\in\lb2, 1+|\cL(\alpha)|\rb$, in $A$, $i_{min} = \min \{i\in I(\delta)\}$ and $i_{max} = \max\{i\in I(\delta)\}$, we set for all $m\in\lb1,1+|\cL(\alpha)|\rb$
\[
v_m=\left\{
\begin{array}{ll}
1 & \mbox{if }m=1 \\
x^{(n)} & \mbox{if }l_m=0 \\
- x^{(l_m)} & \mbox{if }1\leq l_m\leq i_{min} \\
0 & \mbox{if } i_{max}\leq l_m \leq n+i_{min} \\
x^{(l_m-n)} & \mbox{if } n+i_{max}\leq l_m \leq 2n \\
\end{array}
\right. 
\mbox{ and }\quad
u_m=\left\{
\begin{array}{ll}
0 & \mbox{if }m=1 \\
-1 & \mbox{if }l_m=0 \\
1 & \mbox{if }1\leq l_m\leq i_{min} \\
0 & \mbox{if } i_{max}\leq l_m \leq n+i_{min} \\
-1 & \mbox{if } n+i_{max}< l_m \leq 2n \\
\end{array}
\right..
\]
Notice that, because of the definition of $I(\delta)$, the columns $a(X,\theta)_{:,i}$, for $i\in I(\delta)$, are all equal. Therefore, for all $k\in\lb1,N_1\rb$, given \eqref{defDesOnes} and since $a(X,\theta)_{k,:}=\One_{\alpha_{k}}$, we have $\alpha_k\not\in \lb i_{min}+1,i_{max}\rb \cup \lb n+i_{min}+1,n+i_{max}\rb $. Given the definitions of $\cL(\alpha)$ and $l_m$, we have $l_m=\alpha_k$ or $l_m=\alpha_k-1$, 
for some $k \in \lb 1,N_1\rb$,
for all row $m\in\lb2, 1+|\cL(\alpha)|\rb$, and therefore $l_m\not\in\lb i_{min}+1,i_{max}-1\rb \cup\lb n+i_{min}+1,n+i_{max}-1\rb $. 
As a consequence, all the components of $u$ and $v$ are properly defined by the above definition. The vectors $u$ and $v$ depend only on $\cL(\alpha)$ and $I(\delta)$, and are the same for all $i\in I(\delta)$.

To prove that \eqref{erttbunojqetb} holds, we consider $i\in I(\delta)$ and $m\in\lb1,1+|\cL(\alpha)|\rb$. Using the definition of $A$ and reminding that, using \eqref{def_bfe},
\[ \left\{\begin{array}{ll}   
\bfe_0 = \bfe_{2n} = (x^{(n)} - x^{(1)},x^{(n)} - x^{(2)}, \ldots, x^{(n)} - x^{(n)})& \mbox{ if } l_m = 0 \\
\bfe_{l_m} = (0,\dots, \underset{\underset{l_m}{\uparrow}}{0},  x^{(l_m+1)} - x^{(l_m)}, \dots, x^{(n)}-x^{(l_m)}) & \mbox{ if } 1 \leq l_m\leq n \\
\bfe_{l_m} = (x^{(l_m-n)}-x^{(1)}, \dots, x^{(l_m-n)}-x^{(l_m-n-1)},  \underset{\underset{l_m-n}{\uparrow}}{0}, \dots, 0)& \mbox{ if } n+1 \leq  l_m\leq 2n \\
\end{array} \right.
,
\]
we obtain
\[A_{m,i} = \left\{\begin{array}{lll}
 1    & = v_m + x^{(i)} u_m  &  \mbox{if }m=1 \\
x^{(n)} - x^{(i)}& = v_m + x^{(i)} u_m  & \mbox{if }l_m=0 \\
x^{(i)} - x^{(l_m)}& = x^{(i)} u_m +  v_m  & \mbox{if }1\leq l_m\leq i_{min} \\
0 & = v_m + x^{(i)} u_m  & \mbox{if } i_{max}\leq l_m \leq n+i_{min} \\
x^{(l_m-n)} - x^{(i)}& = v_m + x^{(i)} u_m  & \mbox{if } n+i_{max}\leq l_m \leq 2n \\
\end{array}
\right. .
\]
This concludes the proof of \eqref{erttbunojqetb}. As already said, this proves that, for all $\delta\in\{0,1\}^{N_1}$ such that $|I(\delta)|\geq 3$, we can remove $I(\delta)-2$ columns from $A$ without changing its rank. Doing so for all $\delta$, we obtain a matrix $A'\in\RR^{(1 + |\cL(\alpha)|) \times \ell^0_{linear}(f_{\theta}, X)}$ such that $\RK{A'} = \RK{A}$. Reminding that $|\cL(\alpha)| = \ell^0_{neurons}(\alpha)$, we then deduce the upper-bound of \eqref{upper_bound_rank_shallow-accurate}:
\[\RK{ \DT f_{\theta}(X) } = \RK{A'} \leq \min(1+\ell^0_{neurons}(\alpha),\ell^0_{linear}(f_\theta,X)) .
\]

To prove the lower-bound of \eqref{upper_bound_rank_shallow-accurate}, we remark that given the forms of the vectors $\vb e_l$, in \eqref{def_bfe}, up to re-ordering the lines of $A'$, if we draw positive values in blue and null values in red, $A'$ has the form
\[
A'= \begin{pmatrix}
\color{blue}{ \rule[0pt]{6cm}{3pt} } \\
\color{red}{ \rule[0pt]{0.5cm}{3pt} }\color{blue}{ \rule[0pt]{5.5cm}{3pt} } \\
\color{red}{ \rule[0pt]{2cm}{3pt} }\color{blue}{ \rule[0pt]{4cm}{3pt} } \\
\color{red}{ \rule[0pt]{3.5cm}{3pt} }\color{blue}{ \rule[0pt]{2.5cm}{3pt} } \\
\color{red}{ \rule[0pt]{4.5cm}{3pt} }\color{blue}{ \rule[0pt]{1.5cm}{3pt} } \\
\color{blue}{ \rule[0pt]{1.5cm}{3pt} }\color{red}{ \rule[0pt]{4.5cm}{3pt} } \\
\color{blue}{ \rule[0pt]{2.5cm}{3pt} }\color{red}{ \rule[0pt]{3.5cm}{3pt} } \\
\color{blue}{ \rule[0pt]{5cm}{3pt} }\color{red}{ \rule[0pt]{1cm}{3pt} } \\
\end{pmatrix} \in \RR^{(1+\ell^0_{neurons}(\alpha) )\times \ell^0_{linear}(f_\theta,X)}.
\]
We can extract from $A'$ a full row rank matrix by keeping its upper-triangular part or by keeping the part below the  upper-triangular part, which we can augment by the first line of $A$. The largest of the two matrices has more than $1+\frac{1}{2}\ell^0_{neurons}(\alpha)$ lines. This proves the lower-bound of \eqref{upper_bound_rank_shallow-accurate}.

This concludes the proof of Theorem \ref{shallow-case-thm-accurate}.
\end{proof}
We denote, for all $X$ and $\theta$,
\begin{equation}\label{def_A_annexe}
    \cA(X,\theta) = \{\delta \in\{0,1 \}^{N_1} \mid \mbox{there exists }i\in\lb1,n\rb \mbox{, such that } a(x^{(i)}, \theta) = \delta \}.
\end{equation}
The set $\cA(X,\theta)$ contains all the activation patterns perceived by $X$.

We denote, for all $\alpha \in \lb1,2n\rb^{N_1}$
\[\cL'(\alpha) = \big\{l\in\lb0, 2n\rb\setminus\{n,n+1\} \mid \mbox{there exists } k\in\lb1,N_1\rb, \alpha_k = l \big\}.
\]
The next definition is similar to the usual \lq modulo\rq{}. This is the reason why we abuse of its notation. For all $l'\in\cL'(\alpha)$, we denote
\[
l'[n] = \left\{\begin{array}{ll}
n & \mbox{if } l'=0 \\
l' & \mbox{if } l'\in\lb1,n-1\rb \\
l'-n & \mbox{if } l'\in\lb n+2,2n\rb
\end{array}\right. .
\]
Notice that we always have $l'[n] \in \lb1,n\rb$. We finaly denote, for all $\alpha \in \lb1,2n\rb^{N_1}$
\[\cL''(\alpha) = \big\{l\in\lb1, n\rb\mid \mbox{there exists } l'\in\cL'(\alpha) \mbox{, such that } l = l'[n] \big\}.
\]
Due to the fact that to every element of $\cL''(\alpha)$ corresponds at least one, and at most two, elements of $\cL'(\alpha)$, we have
   \begin{equation}\label{ineq2}
        |\cL''(\alpha)| \leq |\cL'(\alpha)| \leq 2|\cL''(\alpha)|.
    \end{equation}
    
The following lemma makes the connection between $|\cL''(\alpha)|$
and $ |\cA(X,\theta)|$.
\begin{lem}\label{lem12} 
 Consider any deep fully-connected ReLU network architecture $(E,V, Id)$, with $L=2$ and $N_0=N_2=1$. Consider $n\in\NN^*$, and a sample $X=(x^{(1)},x^{(2)}, \ldots,  x^{(n)})\in\RR^{1\times n}$ satisfying \eqref{x_ordered_main_part}. 

    For any $j\in\lb1,p_X\rb$, there exists $\alpha \in \{1,\dots,2n\}^{N_1}$ such that for all $\theta \in \widetilde\cU_j^X$ and all $k\in \lb1,N_1\rb$, $a(X,\theta)_{k,:}= \One_{\alpha_k} $, and 
    \begin{equation}\label{ineq3}
        |\cA(X,\theta)| - 2 \leq |\cL''(\alpha)| \leq |\cA(X,\theta)|.
    \end{equation}

\end{lem}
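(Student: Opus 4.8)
The plan is to reduce the statement to a purely combinatorial count of the ``change points'' of the rows $\One_{\alpha_k}$. First, Proposition \ref{cor:1d1d2} supplies the tuple $\alpha$ with $a(X,\theta)_{k,:} = \One_{\alpha_k}$ for every $\theta \in \widetilde\cU_j^X$ and every $k \in \lb 1, N_1\rb$; in particular the matrix $a(X,\theta)$, and hence the set $\cA(X,\theta)$ of \eqref{def_A_annexe}, is the same for all $\theta \in \widetilde\cU_j^X$ and depends only on $\alpha$. The case $n=1$ is immediate (then $\cA$ is a singleton and $\cL''(\alpha)=\emptyset$), so I assume $n\ge 2$.

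The key structural observation is that each row $i \mapsto (\One_{\alpha_k})_i$ is monotone in $i$: reading off \eqref{defDesOnes}, it performs a single $0\to1$ jump when $\alpha_k \le n$ and a single $1 \to 0$ jump when $\alpha_k \ge n+1$. Consequently, if two columns $a(x^{(i)},\theta)$ and $a(x^{(i')},\theta)$ coincide for $i<i'$, then every coordinate is constant on $\lb i, i'\rb$, so all intermediate columns coincide as well. Thus equal columns occur only in contiguous blocks and distinct blocks carry distinct values, which gives $|\cA(X,\theta)| = 1 + |C|$, where $C = \{ i \in \lb 1, n-1\rb \mid a(x^{(i)},\theta) \ne a(x^{(i+1)},\theta)\}$ is the set of indices at which consecutive columns differ.

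Next I make $C$ explicit in terms of $\alpha$. From the jump positions in \eqref{defDesOnes}, the row of neuron $k$ changes between columns $i$ and $i+1$ precisely when $\alpha_k = i+1$ (if $\alpha_k \le n$) or $\alpha_k = n+i+1$ (if $\alpha_k \ge n+1$). Writing $m = i+1 \in \lb 2, n\rb$, this yields $C = \{ m \in \lb 2, n\rb \mid \exists k,\ \alpha_k = m \text{ or } \alpha_k = n+m\}$. On the other hand, unfolding the definitions of $\cL'(\alpha)$ and of $l'[n]$, I can write both sets as unions sharing a common part $Q = \{\alpha_k - n \mid \alpha_k \in \lb n+2, 2n\rb\}$: namely $\cL''(\alpha) = P \cup Q$ with $P = \{\alpha_k \mid \alpha_k \in \lb 1, n-1\rb\}$, and $C = P' \cup Q$ with $P' = \{\alpha_k \mid \alpha_k \in \lb 2, n\rb\}$.

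It then remains to compare $P$ and $P'$. These agree on $\lb 2, n-1\rb$, while $P \subseteq \lb 1, n-1\rb$ can only additionally contain $1$ and $P' \subseteq \lb 2, n\rb$ can only additionally contain $n$; since $1,n \notin Q$, one checks $\cL''(\alpha) \setminus C \subseteq \{1\}$ and $C \setminus \cL''(\alpha) \subseteq \{n\}$. Hence $\bigl| |\cL''(\alpha)| - |C| \bigr| \le 1$, and combining with $|\cA(X,\theta)| = 1 + |C|$ gives $|\cA(X,\theta)| - 2 \le |\cL''(\alpha)| \le |\cA(X,\theta)|$, which is \eqref{ineq3}. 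The only point requiring real care -- and the source of the asymmetric slack rather than an equality -- is the bookkeeping at the two boundary indices $1$ and $n$ combined with the exclusion of $n,n+1$ built into $\cL'(\alpha)$: a neuron with $\alpha_k = 1$ contributes to $\cL''$ without creating a change point, whereas a neuron with $\alpha_k = n$ creates a change point without contributing to $\cL''$.
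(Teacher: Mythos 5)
Your proof is correct, but it follows a genuinely different route from the paper. The paper constructs the map $f:\cL''(\alpha)\to\cA(X,\theta)$, $l\mapsto a(x^{(l)},\theta)$, proves it is injective (giving the upper bound) and then shows its range contains $\cA(X,\theta)\setminus\{a(x^{(1)},\theta),a(x^{(n)},\theta)\}$ via the minimal index $i_{\min}$ of each pattern (giving the lower bound). You instead exploit the monotonicity of each row $\One_{\alpha_k}$ to get the exact identity $|\cA(X,\theta)|=1+|C|$, where $C$ is the set of change points between consecutive columns, and then compare $C$ and $\cL''(\alpha)$ as explicit subsets of $\lb 1,n\rb$, showing each difference is contained in a single boundary index. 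The two arguments rest on the same combinatorial fact (a neuron with pattern $\One_{\alpha_k}$ flips exactly once, at position $\alpha_k[n]$), but your set-theoretic bookkeeping is more symmetric and pinpoints precisely where the slack of $2$ originates ($\alpha_k=1$ contributes to $\cL''$ without a change point; $\alpha_k=n$ creates a change point without contributing to $\cL''$), whereas the paper's injection argument is slightly less explicit about why exactly two patterns can be missed. One small inaccuracy: your parenthetical claim that $n\notin Q$ is false in general, since $\alpha_k=2n$ for some $k$ gives $n\in Q$; however, this is not load-bearing, because the inclusions $\cL''(\alpha)\setminus C\subseteq P\setminus P'\subseteq\{1\}$ and $C\setminus\cL''(\alpha)\subseteq P'\setminus P\subseteq\{n\}$ hold regardless of whether $1$ or $n$ belongs to $Q$ (any element of $Q$ lies in both unions and so never appears in either difference). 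The conclusion $\bigl||\cL''(\alpha)|-|C|\bigr|\le 1$ and hence \eqref{ineq3} stands.
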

\begin{proof}
 To establish  \eqref{ineq3}, we consider the mapping
    \begin{eqnarray*}
        f: \cL''(\alpha) & \longrightarrow & \cA(X,\theta) \\
        l & \longmapsto  & a(x^{(l)}, \theta ).
    \end{eqnarray*}
    Below, we first prove that $f$ is injective and then prove that at most two element of $\cA(X,\theta)$ are not in the range of $f$. This leads to \eqref{ineq3}.

    To establish that $f$ is injective, we consider $l$ and $l'\in\cL''(\alpha)$ such that $l\neq l'$. Without loss of generality, we assume that $l< l'$. Because of the definition of $\cL''(\alpha)$ and $\cL'(\alpha)$, we know that there exists $k\in\lb1,N_1\rb$ such that $l' = \alpha_{k}[n]$. We consider such a $k\in\lb1,N_1\rb$ and distinguish two cases.
    \begin{itemize}
        \item If $\alpha_k \in\lb 1, n-1\rb$: We have
        \[a(X,\theta)_{k,:} = \One_{\alpha_k} = (0,\dots,0, \underset{\underset{l'}{\uparrow}}{1},  \dots, 1).
        \]
        As a consequence, since $l<l'$, $a_{k} ( x^{(l)}, \theta ) = a(X,\theta)_{k,l} = 0 \neq 1 =a(X,\theta)_{k,l'} = a_{k} ( x^{(l')}, \theta )$ and therefore $f(l) = a ( x^{(l)}, \theta ) \neq a ( x^{(l')}, \theta )= f(l')$.
        \item If $\alpha_k \in\lb n+2, 2n\rb \cup\{0\}$: We have
        \[a(X,\theta)_{k,:} = \One_{\alpha_k} = (1, \dots,1, \underset{\underset{l'}{\uparrow}}{0}, \dots, 0).
        \]
        As a consequence, since $l<l'$, $a_{k} ( x^{(l)}, \theta )  = a(X,\theta)_{k,l} = 1 \neq 0 = a(X,\theta)_{k,l'} = a_{k} ( x^{(l')}, \theta )$ and therefore $f(l) = a ( x^{(l)}, \theta ) \neq a ( x^{(l')}, \theta )= f(l')$.
    \end{itemize}
In both cases $f(l) \neq f(l')$ and we conclude that $f$ is injective. Therefore, $|\cL''(\alpha)| \leq |\cA(X,\theta)|$. 

To prove that $ |\cA(X,\theta)| - 2 \leq |\cL''(\alpha)|$, we consider
\[\cA'(X,\theta) = \cA(X,\theta) \setminus \Bigl\{ a( x^{(1)}, \theta ), a( x^{(n)}, \theta )\Bigr\}
\]
and prove that all elements of $\cA'(X,\theta)$ are in the range of $f$. We consider $\delta\in \cA'(X,\theta)$. Using the definition of $\cA(X,\theta)$, we know there exists $i\in\lb1,n\rb$ such that $a( x^{(i)} , \theta ) = \delta$. We consider
\[i_{min} = \min \{ i\in\lb1,n\rb \mid \mbox{such that }a( x^{(i)} , \theta ) = \delta\}.
\]
Notice first that by definition of $i_{min}$ and $f$,  if $i_{min}\in\cL''(\alpha)$, we always have $\delta=f(i_{min})$.
To obtain the desired statement, we therefore only need to prove that $i_{min}\in\cL''(\alpha)$.

Because $\delta\in\cA'(X,\theta)$, we know that $i_{min} \in \lb2,n-1\rb$. Therefore, $a( x^{(i_{min})} , \theta )  \neq a( x^{(i_{min}-1)} , \theta ) $ and there exists $k\in\lb1,N_1\rb$ such that $a_k( x^{(i_{min})} , \theta )  \neq a_k( x^{(i_{min}-1)} , \theta ) $.
We distinguish two cases.
\begin{itemize}
    \item If $a_k( x^{(i_{min})} , \theta ) =1$ and $ a_k( x^{(i_{min}-1)} , \theta ) =0$: Then $a(X,\theta)_{k,:} = \One_{i_{min}}$. Therefore, $i_{min} \in \cL'(\alpha)$, and $i_{min} =i_{min}[n]\in\cL''(\alpha)$.
    \item If $a_k( x^{(i_{min})} , \theta ) =0$ and $ a_k( x^{(i_{min}-1)} , \theta ) =1$: Then $a(X,\theta)_{k,:} = \One_{n+i_{min}}$. Therefore, $n+i_{min} \in \cL'(\alpha)$ and $i_{min} = (n+i_{min})[n] \in\cL''(\alpha)$. 
\end{itemize}
In both cases, we conclude that $i_{min}\in\cL''(\alpha)$ and we have $\delta=f(i_{min})$. Therefore, we have $  |\cA'(X,\theta)| \leq |\cL''(\alpha)|$ and, because of the definition of $\cA'(X,\theta)$, we also have $ |\cA(X,\theta)| - 2  \leq |\cA'(X,\theta)|$.

This concludes the proof of Lemma \ref{lem12}.
 
\end{proof}


The following lemma makes the connection between $\ell^0_{neurons}$, and $\ell^0_{linear}$ and $|\cA(X,\theta)|$.
\begin{lem}\label{dernier_lemme}
     Consider any deep fully-connected ReLU network architecture $(E,V, Id)$, with $L=2$ and $N_0=N_2=1$. Consider $n\in\NN^*$, and a sample $X=(x^{(1)},x^{(2)}, \ldots,  x^{(n)})\in\RR^{1\times n}$ satisfying \eqref{x_ordered_main_part}. 

    For any $j\in\lb1,p_X\rb$, there exists $\alpha \in \{1,\dots,2n\}^{N_1}$ such that for all $\theta \in \widetilde\cU_j^X$ and
    all $k\in \lb1,N_1\rb$, $a(X,\theta)_{k,:}= \One_{\alpha_k} $, and the following inequalities hold:
            \begin{equation}\label{ineq4}
        |\cA(X,\theta)| \leq \ell^0_{linear}(f_\theta,X) \leq 2|\cA(X,\theta)|,
    \end{equation}
    and
    \begin{equation}\label{ineq1}
       |\cA(X,\theta)| - 2 \leq \ell^0_{neurons}(\alpha) \leq  4|\cA(X,\theta)|.
    \end{equation}
\end{lem}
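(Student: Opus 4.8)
The plan is to handle the two double inequalities \eqref{ineq4} and \eqref{ineq1} separately: \eqref{ineq4} comes straight from the definitions, while \eqref{ineq1} is obtained by chaining the counting inequalities \eqref{ineq2} and \eqref{ineq3} (the latter being Lemma \ref{lem12}) with one elementary count relating $\ell^0_{neurons}(\alpha) = |\cL(\alpha)|$ to $|\cL'(\alpha)|$. The existence of the announced $\alpha$ with $a(X,\theta)_{k,:} = \One_{\alpha_k}$ is inherited verbatim from Proposition \ref{cor:1d1d2}, so the entire argument is combinatorics on the index sets $\cA(X,\theta)$, $\cL(\alpha)$, $\cL'(\alpha)$ and $\cL''(\alpha)$.

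For \eqref{ineq4} I would start from the definition \eqref{def-l0-linear} of $\ell^0_{linear}(f_\theta, X)$ as a sum over $\delta \in \{0,1\}^{N_1}$ and observe that a term vanishes unless $\delta \in \cA(X,\theta)$, since otherwise $\{i \mid a(x^{(i)},\theta) = \delta\}$ is empty by \eqref{def_A_annexe}. For each $\delta \in \cA(X,\theta)$ this set is nonempty, so the summand $\min(2, |\cdot|)$ is an integer in $\{1, 2\}$. Summing over exactly the $|\cA(X,\theta)|$ contributing patterns yields $|\cA(X,\theta)| \leq \ell^0_{linear}(f_\theta, X) \leq 2|\cA(X,\theta)|$ in one line.

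For \eqref{ineq1} I would write $S = \{\alpha_k \mid k \in \lb 1, N_1\rb\}$, so that $\cL'(\alpha) = S \setminus \{n, n+1\}$ and $\cL(\alpha) = (S \cup (S-1)) \setminus \{n,n+1\}$, where $S - 1 = \{s - 1 \mid s \in S\}$. Since $\alpha_k \geq 1$ one has $0 \notin S$, and a short case check gives $\cL'(\alpha) \subseteq \cL(\alpha) \subseteq \cL'(\alpha) \cup (\cL'(\alpha) - 1) \cup \{n-1\}$, hence $|\cL'(\alpha)| \leq \ell^0_{neurons}(\alpha) \leq 2|\cL'(\alpha)| + 1$, the additive term surviving only through the index $n-1$ that appears when $n \in S$. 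The lower bound of \eqref{ineq1} then follows from $\ell^0_{neurons}(\alpha) \geq |\cL'(\alpha)| \geq |\cL''(\alpha)| \geq |\cA(X,\theta)| - 2$, combining this inclusion with \eqref{ineq2} and \eqref{ineq3}; the upper bound follows from $\ell^0_{neurons}(\alpha) \leq 2|\cL'(\alpha)| + 1 \leq 4|\cL''(\alpha)| + 1 \leq 4|\cA(X,\theta)| + 1$, again via \eqref{ineq2} and \eqref{ineq3}.

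The hard part is shaving the stray $+1$ so as to reach exactly $4|\cA(X,\theta)|$. This forces a careful treatment of the degenerate boundary indices: the excluded pair $\{n, n+1\}$ where $\bfe_n = \bfe_{n+1} = 0$, the identification $\bfe_0 = \bfe_{2n}$, and the endpoints $1, 2n$ where the folding $l \mapsto l[n]$ stops being two-to-one. I expect the resolution to be a compensation argument rather than a blunt estimate: the only way the $+1$ occurs is through a neuron with $\alpha_k = n$, which has $\bfe_{\alpha_k} = 0$ and therefore contributes nothing to $\cL'(\alpha)$, yet such a neuron forces $a(x^{(n)},\theta)$ to differ from $a(x^{(n-1)},\theta)$, i.e. it makes the endpoint pattern $a(x^{(n)},\theta)$ its own---precisely one of the two endpoint patterns already discounted by the \lq\lq$-2$\rq\rq{} in \eqref{ineq3}. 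Tracking this coincidence (and its mirror image at the index $1$ and the pattern $a(x^{(1)},\theta)$) should tighten one of \eqref{ineq2} or \eqref{ineq3} by one unit whenever the $+1$ is active, delivering the clean constant.
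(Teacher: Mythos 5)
Your handling of \eqref{ineq4} and of the lower bound in \eqref{ineq1} is correct and coincides with the paper's argument: the sum in \eqref{def-l0-linear} effectively runs over $\cA(X,\theta)$ with each summand in $\{1,2\}$, and the lower bound is the chain $|\cA(X,\theta)|-2 \le |\cL''(\alpha)| \le |\cL'(\alpha)| \le |\cL(\alpha)| = \ell^0_{neurons}(\alpha)$ obtained from the inclusion $\cL'(\alpha)\subseteq\cL(\alpha)$ together with \eqref{ineq2} and \eqref{ineq3}. For the upper bound you have put your finger on a genuine boundary subtlety that the paper's own proof glosses over: the paper asserts $\ell^0_{neurons}(\alpha)\le 2|\cL'(\alpha)|$ on the grounds that each $l'\in\cL'(\alpha)$ accounts for at most the two elements $l',l'-1$ of $\cL(\alpha)$, but this misses the element $n-1$ of $\cL(\alpha)$ created by a neuron with $\alpha_k=n$, which contributes nothing to $\cL'(\alpha)$; for $N_1=1$ and $\alpha_1=n$ one has $\cL'(\alpha)=\emptyset$ while $\cL(\alpha)=\{n-1\}$, so the paper's intermediate inequality is false as stated. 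Your corrected bound $\ell^0_{neurons}(\alpha)\le 2|\cL'(\alpha)|+1$ is the right starting point.

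The gap in your proposal is that the compensation removing the stray $+1$ is only announced, not proved, and it does require an argument. It can be completed along your lines: the $+1$ is active only when $n\in S:=\{\alpha_1,\dots,\alpha_{N_1}\}$. If in addition $1\in S$, then $1\in\cL''(\alpha)$ and its fiber under $l'\mapsto l'[n]$ is the singleton $\{1\}$ (since $n+1$ is excluded from $\cL'(\alpha)$), so \eqref{ineq2} sharpens to $|\cL'(\alpha)|\le 2|\cL''(\alpha)|-1$ and the chain gives $\ell^0_{neurons}(\alpha)\le 4|\cL''(\alpha)|-1\le 4|\cA(X,\theta)|-1$. If $1\notin S$, one checks instead that $a(x^{(1)},\theta)$ lies outside the range of the injection $f$ used in the proof of \eqref{ineq3} (every $l\in\cL''(\alpha)$ then satisfies $l\ge 2$ and is a breakpoint of the column sequence, which by monotonicity of each row $\One_{\alpha_k}$ forces $a(x^{(l)},\theta)\neq a(x^{(1)},\theta)$), so $|\cL''(\alpha)|\le|\cA(X,\theta)|-1$ and $\ell^0_{neurons}(\alpha)\le 4|\cA(X,\theta)|-3$. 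A cleaner route avoiding the bookkeeping entirely is: $\ell^0_{neurons}(\alpha)=|\cL(\alpha)|\le|S\cup(S-1)|\le 2|S|$, while each row $\One_{\alpha_k}$ changes value at most once from left to right, so the columns of $a(X,\theta)$ form blocks with pairwise distinct values and $|\cA(X,\theta)|=1+|\{i\in\lb 2,n\rb \mid i\in S \mbox{ or } n+i\in S\}|\ge |S|/2$; hence $\ell^0_{neurons}(\alpha)\le 2|S|\le 4|\cA(X,\theta)|$ directly.
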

\begin{proof}
Equation \eqref{ineq4} is a direct consequence of the definitions of $\cA(X,\theta)$, in \eqref{def_A_annexe}, and $\ell^0_{linear}(f_\theta,X)$, in \eqref{def-l0-linear}. We have indeed,
\begin{eqnarray*}
    |\cA(X,\theta)| \leq \ell^0_{linear}(f_{\theta}, X) & = & \sum_{\delta\in\cA(X,\theta) } \min\Bigl(2, \Bigl|\Bigl\{i\in\lb1,n\rb \mid a(x^{(i)}, \theta)=\delta\Bigr\}\Bigr| \Bigr) \\
    & \leq & 2 |\cA(X,\theta)| .
\end{eqnarray*}

To prove \eqref{ineq1}, we use the definition of $\cL(\alpha)$, in \eqref{eriquvt}, the fact that $\ell^0_{neurons}(\alpha) = |\cL(\alpha)|$, and the fact that to every element of $\cL'(\alpha)$ corresponds at least one, and at most two, elements of $\cL(\alpha)$ to obtain
\[        |\cL'(\alpha)| \leq \ell^0_{neurons}(\alpha) \leq 2|\cL'(\alpha)|.
\]
Using \eqref{ineq2}, we obtain
\[|\cL''(\alpha)|\leq \ell^0_{neurons}(\alpha) \leq  4|\cL''(\alpha)|,
\]
and, using \eqref{ineq3}, we get
\[|\cA(X,\theta)| - 2 \leq \ell^0_{neurons}(\alpha) \leq  4|\cA(X,\theta)|.
\]
This concludes the proof of Lemma \ref{dernier_lemme}.
\end{proof}
\begin{proof}{\bf of Theorem \ref{shallow-case-thm}:}
    Theorem \ref{shallow-case-thm} is a direct consequence of Theorem \ref{shallow-case-thm-accurate} and Lemma \ref{dernier_lemme}. We have indeed
    \[
      \frac{1}{2} |\cA(X,\theta)| \leq \Big(1+\frac{1}{2}\ell^0_{neurons}(\alpha) \Big) \leq 
    \RK{ \DT f_{\theta}(X) } 
    \]
    and
    \begin{eqnarray*}
        \RK{ \DT f_{\theta}(X) }  & \leq &   \min\Big(1+\ell^0_{neurons}(\alpha), \ell^0_{linear}(f_{\theta}, X) \Big) \\
& \leq & \min\Big(1+4|\cA(X,\theta)|, 2 |\cA(X,\theta)|\Big)   =  2 |\cA(X,\theta)|.
    \end{eqnarray*}
\end{proof}








\vskip 0.2in
\bibliography{biblio}
 
\end{document}